\begin{document}
 
\def \xx {\mathbf{x}}
\def \I {\mathbb{I}}
\def \R {\mathbb{R}}
\def \ww {\mathbf{w}}
\def \E  {\mathbb{E}}
\def \X {\mathcal{X}}
\def \sR {\mathcal{R}}
\def \phii  {\mbox{\boldmath $\phi$}}
\def \alphaa  {\mbox{\boldmath $\alpha$}}
\def \xii  {\mbox{\boldmath $\xi$}}
\def \K {\mathcal{K}}

 \newcommand{\N}{\mathbb{N}}
\newcommand{\Z}{\mathbb{Z}}
\newcommand{\prpo}{\rho+\mu}
\newcommand{\prps}{\rho+{\mu}^2}
\newcommand{\prms}{\rho-{\mu}^2}
\newcommand{\prmo}{\rho-\mu}
\newcommand{\mrpo}{-\rho+\mu}
\newcommand{\mrps}{-\rho+{\mu}^2}
\newcommand{\mrmo}{-\rho-\mu}
\newcommand{\mrms}{-\rho-{\mu}^2}
\newcommand{\first}{\eta<d}
\newcommand{\second}{d\leq {\eta}\leq 1-d}
\newcommand{\third}{\eta>1-d}
\newcommand{\conoo}{\mu < \frac{d(1 - \eta)}{\eta(1-d)}}
\newcommand{\conos}{\mu \geq \frac{d(1 - \eta)}{\eta(1-d)}}

\newtheorem{theorem}{Theorem}
\newtheorem{proposition}[theorem]{Proposition}
\newtheorem{defn}{Definition}
\newtheorem{remark}{Remark}
\newtheorem{lemma}[theorem]{Lemma}
\newtheorem{corr}{Corollary}
\newtheorem{Remark}{Remark}
\newtheorem{assumption}{Assumption}
\newcounter{MYtempeqncnt} %Please check this line
 
% The file aaai.sty is the style file for AAAI Press 
% proceedings, working notes, and technical reports.
%
\title{Sparse Reject Option Classifier Using Successive Linear Programming}
\author{Kulin Shah, Naresh Manwani\\
kulin.shah@students.iiit.ac.in, naresh.manwani@iiit.ac.in\\
Machine Learning Lab, KCIS\\ IIIT, Hyderabad-500032\\
}
\maketitle
\begin{abstract}
In this paper, we propose an approach for learning sparse reject option classifiers using double ramp loss $L_{dr}$. We use DC programming to find the risk minimizer. The algorithm solves a sequence of linear programs to learn the reject option classifier. We show that the loss $L_{dr}$ is Fisher consistent. We also show that the excess risk of loss $L_d$ is upper bounded by excess risk of $L_{dr}$. We derive the generalization error bounds for the proposed approach. We show the effectiveness of the proposed approach by experimenting it on several real world datasets. The proposed approach not only performs comparable to the state of the art, it also successfully learns sparse classifiers. 
\end{abstract}

\section{Introduction}
\label{Loss}
Standard classification tasks focus on building a classifier which predicts well on future examples. The overall goal is to minimize the number of mis-classifications. However, when the cost of mis-classification is very high, a generic classifier may still suffer from very high risk. In such cases it makes more sense not to classify high risk examples. This choice given to the classifier is called reject option. Hence, the classifiers which can also reject examples are called reject option classifiers. The rejection also has its cost but it is very less compared to the cost of mis-classification. 

For example, making a poor decision based on the diagnostic reports can cost huge amount of money on further treatments or it can be cost of a life \cite{Rocha2011}. If the reports are ambiguous or some rare symptoms are seen which are unexplainable without further investigation, then the physician might choose not to risk misdiagnosing the patient.
In this case, he might instead choose to perform further medical tests, or to refer the case to an appropriate specialist. Reject option classifier may also be found useful in financial services \cite{Rosowsky2013}. Consider a banker looking at a loan application of a customer. He may choose not to decide on the basis of the information available, and ask for a credit bureau score or further recommendations from the stake holders.
%These actions can be viewed as akin to a classifier refusing to return a prediction (in this case, a diagnosis) in order to avoid a potential misclassification. While the follow-up actions might vary (asking for more features to describe the example, or using a different classifier),
%The principal response in these cases is to “reject” the example. The focus of this paper is on learning a classifier with a reject option.
Reject option classifiers have been used in wide range of applications from healthcare \cite{btn349,Rocha2011} to text categorization \cite{1234113} to crowd sourcing \cite{Qunwei2017} etc.

Reject option classifier can be viewed as combination of a classifier and a rejection function. The rejection region impacts the proportion of examples that are likely to be rejected, as well as the proportion of predicted examples that are likely to be correctly classified. An optimal reject option classifier is the one which minimizes the rejection rate as well as the mis-classification rate on the predicted examples.

Let ${\cal X} \subseteq \R^d$ be the feature space and ${\cal Y}$ be the label space. For binary classification, we use ${\cal Y} = \{+1,-1\}$. Examples $(\xx,y)$ are generated from unknown joint distribution ${\cal D}$ on the product space ${\cal X} \times {\cal Y}$. A typical {\em reject option classifier} is defined using a decision surface ($f(\xx)=0$) and bandwidth parameter $\rho$ (determines rejection region) as follows:
\begin{equation}
\label{eq:rej-op-classifier}
h_{\rho}(f(\xx)) = 
1.I_{\{f(\xx) > \rho\}}+
0.I_{\{f(\xx)| \leq \rho\}} -1.I_{\{f(\xx)< -\rho\}}
\end{equation}
A reject option classifier can be viewed as two parallel surfaces and the area between them as rejection region.
The goal is to determine both $f$ and $\rho$ simultaneously. The performance of a reject option classifier is measured using $L_{d}$ loss function defined as:
\begin{equation}
L_{d}(yf(\xx), \rho) = 1.I_{\{yf(\xx) < -\rho\}} + d.I_{\{|f(\xx)| \leq \rho\}}
\end{equation}
where $d$ is the cost of rejection. If $d=0$, then $f(.)$ will always reject. If $d \geq 0.5$, then $f(\xx)$ will never reject, since the cost of random labeling is $0.5$. Thus,
$d$ is chosen in the range $(0,0.5)$.
%Note that $L_{d,\rho}$ is a generalization of $L_{0-1}$ (used in standard classification problem).
$h_\rho(f(\xx))$ (described in equation.~\ref{eq:rej-op-classifier}) has been shown to be infinite sample consistent with respect to the generalized Bayes classifier \cite{Yuan:2010}. A reject option classifier is learnt by minimizing the risk which is the expectation of $L_{d}$ with respect to the joint distribution ${\cal D}$. The risk under $L_{d}$ is minimized by {\em generalized Bayes discriminant} $f_d^*(\xx)$ \cite{Chow:2006}, which is
\begin{equation}\label{gbd}
f_d^*(\xx) = 1.\I_{\{\eta(\xx) > 1-d\}}
+0.\I_{\{d\leq \eta(\xx) \leq 1-d\}}
-1.I_{\{\eta(\xx) < d\}}
\end{equation}
where $\eta(\xx)=P(y=1|\xx)$. However, in general we do not know ${\cal D}$. But, we have the access to a finite set of examples drawn from ${\cal D}$ called training set. We find the reject option classifier by minimizing the empirical risk. Minimizing the empirical risk under $L_{d}$ is computationally hard. To overcome this problem, convex surrogates of $L_{d}$ have been proposed. Generalized hinge based convex loss has been proposed for reject option classifier \cite{Bartlett:2008}. The paper describes an algorithm for minimizing $l_2$ regularized risk under generalized hinge loss. Wegkamp et.al 2011 \cite{wegkamp2011} propose sparse reject option approach by minimizing $l_1$ regularized risk under generalized hinge loss. In both these approaches \cite{Bartlett:2008,wegkamp2011}, first a classifier is learnt based on risk minimization under generalized hinge loss and then a rejection threshold is learnt. Ideally, the classifier and the rejection threshold should be found simultaneously. This approach might not give the optimal parameters. Also, a very limited experimental results are provided to show the effectiveness of the proposed approaches \shortcite{wegkamp2011}. A cost sensitive convex surrogate for $L_{d}$ called double hinge loss has been proposed in \cite{Grandvalet2008}. The double hinge loss remains an upper bound to $L_{d}$ provided $\rho \in \bigg( \frac{1-H(d)}{1-d},\frac{H(d)-d}{d}\bigg)$, which is very strict condition. So far, the approaches proposed learn a threshold for rejection along with the classifier. However, in general, the rejection region may not be symmetrically located near the classification boundary. A generic convex approach has been proposed which simultaneously learns the classifier as well as the rejection function \cite{Cortes2016}. The main challenge with the convex surrogates is that they are not constant even in the reject region in contrast to $L_{d}$ loss. 
%Moreover, convex losses have been shown to perform poor in case of label noise \cite{ManwaniS13,Ghosh:2015}.
Sousa and Cardoso \cite{Sousa:2013} model reject option classification as ordinal regression problem. It is not clear whether treating rejection as a separate class leads to a good approximation simply because training data does not contain rejection as a class label. Moreover, classification consistency of this approach is not known in the reject option context. 
A non-convex formulation for learning reject option classifier using logistic function is proposed in Fumera and Roli \shortcite{Fumera2002}. However, theoretical guarantees for the approach are not known. Also, a very limited set of experiments are provided in support of the approach. A bounded non-convex surrogate called {\em double ramp loss} $L_{dr}$ is proposed in Manwani \textit{et al.} \shortcite{Manwani15}. A regularized risk minimization algorithm was proposed with $l_2$ regularization \cite{Manwani15}. The approach proposed shown to have interesting geometric properties and robustness to the label noise. However, statistical properties of $L_{dr}$ (Fisher consistency, generalization error etc.) are not studied so far. 
Also, $l_2$ regularization based approach does not learn sparse classifiers. 
\subsection{Our Contributions}
In this paper, we propose a sparse reject option classifier learning algorithm using double ramp loss. By sparseness, we mean that the number of support vectors needed to express the classifier are small. Our contributions in this work are as follows.
\begin{itemize}
\item We propose a difference of convex (DC) programming \cite{ThiHoaiAn:1997} based algorithm to learn sparse reject option classifier. The final algorithm turns out to be solving successive linear programs.
\item We also establish statistical properties for double ramp loss function. We show that the double ramp loss function is Fisher consistent. Which means that generalized Bayes classifier minimizes the population risk under $L_{dr}$. We also show that the excess risk under loss $L_{dr}$ upper bounds the excess risk under loss $L_d$.
\item We derive the generalization error bounds for the proposed approach.
\item We also show experimentally that the proposed approach performs comparable to the other state of the art approaches for reject option classifier. Our approach learns sparser classifiers compared to all the other approaches. We also show experimentally that the proposed approach is robust against label noise.
\end{itemize}

The rest of the paper is organized as follows. We discuss the proposed method and algorithm in section~\ref{sec:approach}. In section~\ref{sec:analysis}, we provide the theoretical results for $L_{dr}$. The experiments are given in section~\ref{sec:exp}. We conclude the paper with some remarks in section~\ref{sec:conclusions}.

\section{Proposed Approach}\label{sec:approach}
We propose a new algorithm for learning reject option classifier which minimizes the $l_1$-regularized risk under double ramp loss function $L_{dr}$ \cite{Manwani15}. $L_{dr}$ is a non-convex surrogate of $L_{d}$ as follows. 
\begin{equation}
\begin{aligned}
 L_{dr}&(t,\rho) = \frac{d}{\mu}\Big{[}\big{[}\mu-t
+\rho\big{]}_+ - \big{[}-\mu^2-t+\rho\big{]}_+\Big{]} \\
&+\frac{(1-d)}{\mu}\Big{[}\big{[}\mu
-t-\rho\big{]}_+ - \big{[}-\mu^2-t-\rho\big{]}_+\Big{]}
\end{aligned}
\end{equation}
where $\mu$ is the slope of the loss in linear region, $[a]_+=\max(0,a)$ and $t=yf(\xx)$. Note that $L_{dr}$ depends on specific choice of $\mu$. Also, for a valid reject region, we want $\rho \geq \frac{1}{2}\mu(1+\mu)$. Figure~\ref{DRLoss} shows the plot of $L_{dr}$ for different values of $\mu$.
\begin{figure}
\begin{center}
\includegraphics[scale=0.45]{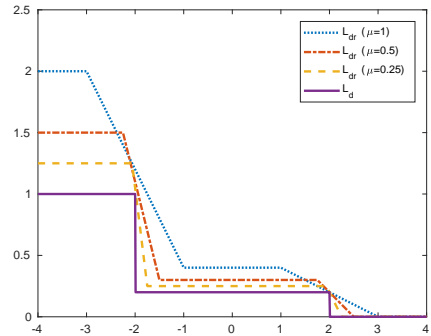}
\caption{$L_{d}$ vs. Double ramp loss $L_{dr}$ ($d$=0.2, $\rho = 2$).}
\label{DRLoss}
\end{center}
\end{figure}

\subsection{Sparse Double Ramp SVM (SDR-SVM)}
Let $S=\{(\xx_1,y_1),\ldots,(\xx_N,y_N)\}$ be the training set where $(\xx_i,y_i) \in \X\times \{+1,-1\},\;i=1\ldots N$. Let the reject option classifier be of the form $f(\xx)=h(\xx) + b$.
%where $\phii:\X\rightarrow {\cal H}$ be the feature map corresponding to suitable choice of 
Let $\K:\X \times \X \rightarrow \R_+$ be a Mercer kernel (continuous, symmetric and positive semi-definite) to produce nonlinear classifiers. Let ${\cal H}_{\K}$ be the reproducing kernel Hilbert space (RKHS) induced by the Mercer kernel $\K$ with the norm $\|.\|_{\K}$ \cite{aronszajn1950theory}. 
To learn sparse reject option classifier, we use $l_1$ regularization term. Thus, we find the classifier as solving following optimization problem.
$$\min_{h\in {\cal H}_{\K}^+,b,\rho}\;\;\; \lambda \|h\|_1 + \sum_{i=1}^N L_{dr}(y_if(\xx_i),\rho)$$
However, the optimal $h$ lies in a finite dimensional subspace ${\cal H}_{\K,S}^+$ of ${\cal H}_{\K}$ \cite{Scholkopf:2001}.
${\cal H}_{\K,S}^+ = \left\{\sum_{i=1}^Ny_i\alpha_i\K(\xx_i,.)\;|\;[\alpha_1,\ldots,\alpha_N] \in \R^N_+\right\}$. Given $h \in {\cal H}_{\K,S}^+$, the $l_1$ regularization is defined as $\Omega(h) = \sum_{i=1}^N \alpha_i$ for $h(\xx) = \sum_{i=1}^N y_i\alpha_i\K(\xx_i,\xx)$ \cite{817991,bradley2000massive,Wu:2005}. Thus, the sparse double ramp SVM can be learnt by minimizing following $l_1$ regularized risk.
\begin{align}
\label{eq-sdr-svm}
J(\Theta)=\lambda \sum_{i=1}^N \alpha_i +  \frac{1}{N}\sum_{i=1}^N L_{dr}(y_i f(\xx_i),\rho)
\end{align}
where $f(\xx_i) = \sum_{j=1}^N y_j\alpha_j\K(\xx_i,\xx_j) + b$. $\Theta=(\alphaa,b,\rho)$. We see that $J$ is a non-convex function. However, $J$ can decomposed as a difference of two convex functions $Q_1$ and $Q_2$ as $J(\Theta)= Q_1(\Theta)-Q_2(\Theta)$,
where
\begin{align*}
Q_1(\Theta) & = \lambda\sum_{i=1}^N \alpha_i + \frac{1}{N\mu}\sum_{i=1}^N \bigg[ d\big[\mu-y_if(\xx_i)+\rho\big]_+ \\ 
& + (1-d)\big[\mu-y_if(\xx_i)-\rho\big]_+ \bigg]\\
Q_2(\Theta) & =  \frac{1}{N\mu}\sum_{i=1}^N \bigg[ d\big[-\mu^2-y_if(\xx_i)+\rho\big]_+ \\ 
& +(1-d)\big[-\mu^2-y_if(\xx_i)-\rho\big]_+\bigg]
\end{align*}
To minimize such a function which can be expressed as difference of two convex functions, we can use difference of convex (DC) programming. In this case, DC programming guarantees to find a local optima of the objective function \cite{ThiHoaiAn:1997}. The simplified DC algorithm uses the convexity property of $Q_2(\Theta)$ and finds an upper bound on $J(\Theta)$ as $J(\Theta) \leq B(\Theta, \Theta^{(l)})$,
where 
$$B(\Theta, \Theta^{(l)}):= Q_1(\Theta) - Q_2(\Theta^{(l)}) - (\Theta - \Theta^{(l)})^T \nabla Q_2(\Theta^{(l)})$$
$\Theta^{(l)}$ is the parameter vector after $(l)^{th}$ iteration, $\nabla Q_2(\Theta^{(l)})$ is a sub-gradient of $Q_2$ at $\Theta^{(l)}$. $\Theta^{(l+1)}$ is found by minimizing $B(\Theta,\Theta^{(l)})$. 
Thus,
$$J(\Theta^{(l+1)})  \leq  B(\Theta^{(l+1)}, \Theta^{(l)})  \leq B(\Theta^{(l)}, \Theta^{(l)})= J(\Theta^{(l)})$$
Thus, the DC program reduces the value of $J(\Theta)$ in every iteration. Now, we will derive a DC algorithm for minimizing $ J(\Theta)$. Given $\Theta^{(l)}$, we find $\Theta^{(l+1)} \in  {\arg\min}_{\Theta} \;B(\Theta, \Theta^{(l)})
= {\arg\min}_{\Theta} \;Q_1(\Theta) - \Theta^T \nabla Q_2(\Theta^{(l)})
$. We use $\nabla Q_2(\Theta^{(l)})$ as:
\begin{align*}
\nabla Q_2(\Theta^{(l)})&=
-\sum_{i=1}^N \begin{pmatrix}
\frac{d\beta_i'^{(l)}+(1-d)\beta_i''^{(l)}}{\mu N}y_1y_i\K(\xx_1,\xx_i)\\
\vdots\\
\frac{d\beta_i'^{(l)}+(1-d)\beta_i''^{(l)}}{\mu N}y_Ny_i\K(\xx_N,\xx_i)\\
\frac{d\beta_i'^{(l)}+(1-d)\beta_i''^{(l)}}{\mu N}y_i\\
-\frac{d\beta_i'^{(l)}-(1-d)\beta_i''^{(l)}}{\mu N}
\end{pmatrix} 
\end{align*}
where 
\begin{align*}
\beta_i'^{(l)}&=\I_{\{y_if^{(l)}(\xx_i)\leq \rho^{(l)} - \mu^2\}};\;i=1\ldots N\\
\beta_i''^{(l)}&=\I_{\{y_if^{(l)}(\xx_i)\leq -\rho^{(l)} - \mu^2\}};\;i=1\ldots N
\end{align*}
Note that $f^{(l)}(\xx) = \sum_{i=1}^N\alpha_i^{(l)}y_i\K(\xx_i,\xx) + b^{(l)}$. The new parameters $\Theta^{(l+1)}$ are found by minimizing $B(\Theta,\Theta^{(l)})$ subject to $\rho \geq \frac{1}{2}\mu(1+\mu)$. Which becomes
\begin{align*}
&\min_{\alphaa,b,\rho,\xii',\xii''}\lambda \sum_{i=1}^N\alpha_i +\frac{1}{N\mu}\sum_{i=1}^N \big( d\xi_i' + (1-d) \xi_i''\big) \\ &\;\;\;\;\;\;\;+\frac{d}{N\mu}\sum_{i=1}^N\beta_i'^{(l)}\big[ y_i\big(\sum_{j=1}^N\alpha_j y_j\K(\xx_j,\xx_i) + b\big)-\rho\big]\\
&\;\;\;\;\;\;\;+\frac{1-d}{N\mu}\sum_{i=1}^N\beta_i''^{(l)}\big[ y_i\big(\sum_{j=1}^N\alpha_j y_j\K(\xx_j,\xx_i) + b\big)+\rho\big]\\
&s.t.\begin{cases}
y_i\big(\sum_{j=1}^N\alpha_j y_j\K(\xx_j,\xx_i) + b\big)\geq \rho+\mu-\xi_i'\; \forall i\\
y_i\big(\sum_{j=1}^N\alpha_j y_j\K(\xx_j,\xx_i) + b\big)\geq -\rho+\mu-\xi_i'' \; \forall i\\
\alpha_i,\xi_i',\xi_i'' \geq 0\;\; \forall  i\;\;\;\;\rho \geq \frac{1}{2}\mu(1+\mu)
\end{cases}
\end{align*}
Thus, $B(\Theta,\Theta^{(l)})$ can be minimized by solving a linear program. Thus, the algorithm solves a sequence of linear programs to learn a sparse reject option classifier. The complete approach is described in Algorithm~\ref{algo1}. Convergence guarantee of this algorithm follows from the convergence of DC algorithm given in \cite{ThiHoaiAn:1997}.
The final learnt classifier is represented as $f(\xx)=h(\xx)+b$ and $\rho$.
\begin{algorithm}[t]
\caption{Sparse Double Ramp SVM (SDR-SVM)}
\label{algo1}
\begin{algorithmic}
\STATE {\bf Input: }$S=\{(\xx_1,y_1),\ldots,(\xx_N,y_N)\},\;\epsilon>0,\;d\in (0,0.5),\;\mu\in (0,1],\;\lambda>0$\;
\STATE {\bf Output: }$\alphaa^*,b^*,\rho^*$\;
\STATE {\bf Initialize: }$l=0$, $\alphaa^{(0)},b^{(0)},\rho^{(0)}$\;
\WHILE{($J(\Theta^{(l)})-J(\Theta^{(l+1)})>\epsilon$)}
\FOR{$i = 1$ to $N$}
\STATE $\beta_i'^{(l)}=\I_{\{y_if^{(l)}(\xx_i)\leq \rho^{(l)} - \mu^2\}}$\;
\STATE $\beta_i''^{(l)}=\I_{\{y_if^{(l)}(\xx_i)\leq -\rho^{(l)} - \mu^2\}}$
\ENDFOR
\STATE $\alphaa^{(l+1)},b^{(l+1)},\rho^{(l+1)} = {\arg\min}_{\Theta}\;B(\Theta,\Theta^{(l)})$
\ENDWHILE
\end{algorithmic}
\end{algorithm}

\section{Analysis}
\label{sec:analysis}
%In this section, we establish certain important properties of $L_{dr,\rho}$.
In this paper, we are proposing an algorithm based on $L_{dr}$. We first need to ensure that minimizer of the population risk under $L_{dr}$ is minimized by the generalized Bayes classifier $f_d^*$ (defined in eq.(\ref{gbd})). This property is called Fisher consistency or classification calibrated-ness.

\begin{theorem}{\bf Fisher Consistency of $L_{dr}$}
The generalized Bayes discriminant function $f_{d}^{*}(\xx)$ (described in eq.~(\ref{gbd})) minimizes the risk
  \begin{equation*}
	\sR_{dr}(f,\rho)=\E\big[ L_{dr}(yf(\xx),\rho)\big]
  \end{equation*}
 over all measurable functions $ f $.
\end{theorem}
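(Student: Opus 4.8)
The plan is to exploit the fact that, once we condition on $\xx$, the risk $\sR_{dr}(f,\rho)=\E[L_{dr}(yf(\xx),\rho)]$ decouples across $\xx$. Writing $\eta=\eta(\xx)=P(y=1\mid\xx)$ and abbreviating $t=f(\xx)$, the conditional integrand is the \emph{conditional risk}
\[ C_\eta(t):=\eta\,L_{dr}(t,\rho)+(1-\eta)\,L_{dr}(-t,\rho). \]
Since $\rho$ is held fixed and $f$ ranges over all measurable functions, minimizing $\sR_{dr}$ reduces to minimizing $C_\eta(t)$ over $t\in\R$ separately for each value of $\eta\in[0,1]$. Thus the whole theorem reduces to locating the minimizer of $C_\eta$ and verifying that the induced reject rule $h_\rho$ applied to it reproduces $f_d^*$ in each of the three regimes $\first$, $\second$, $\third$.

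Next I would use the piecewise structure of $L_{dr}$. Each bracketed difference $[\,\cdot\,]_+-[\,\cdot\,]_+$ is a single decreasing ramp, so $L_{dr}(t,\rho)$ is a nonincreasing piecewise-linear function that sits at its maximum $(1+\mu)$ for $t$ small, falls through two ramps, and vanishes for $t$ large. Hence $C_\eta(t)$ is piecewise linear with breakpoints exactly at $\mrmo,\mrms,\mrps,\mrpo$ (the cluster around $-\rho$, contributed by $L_{dr}(-t,\rho)$) and $\prmo,\prms,\prps,\prpo$ (the cluster around $+\rho$, contributed by $L_{dr}(t,\rho)$). I would then tabulate $C_\eta$ on each linear piece; the three \emph{plateau} values are the key outputs: $C_\eta=\eta(1+\mu)$ for $t\le\mrmo$, $C_\eta=d(1+\mu)$ on the central plateau $[\mrpo,\prmo]$ (strikingly independent of $\eta$ --- the rejection cost showing through), and $C_\eta=(1-\eta)(1+\mu)$ for $t\ge\prpo$. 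The single computation that drives everything is the slope on the two central pieces: on $[\mrms,\mrps]$ it equals $\tfrac{d-\eta}{\mu}$ and on $[\prms,\prps]$ it equals $\tfrac{(1-d)-\eta}{\mu}$.

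With these slopes in hand the case analysis is immediate. If $\third$, both central slopes are negative, so $C_\eta$ keeps descending toward the right plateau; its minimum is $(1-\eta)(1+\mu)$, attained for $t\ge\prpo>\rho$, hence $h_\rho(t^*)=+1=f_d^*$. If $\first$, both central slopes are positive and a symmetric argument puts the minimum at the left plateau $\eta(1+\mu)$, attained for $t\le\mrmo<-\rho$, so $h_\rho(t^*)=-1=f_d^*$. If $\second$, the slope on $[\mrms,\mrps]$ is nonpositive while that on $[\prms,\prps]$ is nonnegative, which funnels the minimum onto the central plateau at value $d(1+\mu)$; comparing this with $\eta(1+\mu)$ and $(1-\eta)(1+\mu)$ (both $\ge d(1+\mu)$ precisely because $d\le\eta\le 1-d$) confirms global optimality, and any minimizer there obeys $|t^*|\le\rho$, so $h_\rho(t^*)=0=f_d^*$.

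The main obstacle is bookkeeping rather than ideas: the relative order of the two breakpoint clusters depends on whether $\rho\ge\mu$, and for the smallest admissible bandwidth $\rho=\tfrac12\mu(1+\mu)$ the clusters can interleave when $\mu<1$, which alters the list of linear pieces and forces a separate (though entirely analogous) slope accounting. A second point to handle carefully is that the minimizer is generically a whole plateau rather than a single point, so the precise statement being established is that \emph{every} minimizer $t^*$ of $C_\eta$ satisfies $h_\rho(t^*)=f_d^*(\xx)$; one then checks that the discrete values $\{-1,0,+1\}$ taken by $f_d^*$ fall in the matching plateaus under the admissibility constraints on $\rho$ and $\mu$, which is what licenses the claim that $f_d^*$ itself attains the minimal risk.
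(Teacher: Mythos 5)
Your proposal is correct and follows essentially the same route as the paper's Appendix~A proof: reduce to the conditional risk $r_{\eta}(z)=\eta L_{dr}(z,\rho)+(1-\eta)L_{dr}(-z,\rho)$, use its piecewise-linear structure (your central slopes $\frac{d-\eta}{\mu}$ and $\frac{(1-d)-\eta}{\mu}$ match the sign table in eq.~(\ref{cond-risk})) to locate all minima on the three plateaus with values $\eta(1+\mu)$, $d(1+\mu)$, $(1-\eta)(1+\mu)$, and compare these in the three regimes $\eta<d$, $d\leq\eta\leq 1-d$, $\eta>1-d$. If anything you are more scrupulous than the paper, which silently assumes the breakpoint ordering $-\rho+\mu\leq\rho-\mu$ (i.e.\ $\rho\geq\mu$) and never verifies that the literal values $\pm 1$ of $f_d^*$ lie in the outer plateaus (this needs $\rho+\mu\leq 1$); your only overstatement is that \emph{every} minimizer of $C_{\eta}$ maps to $f_d^*$, which fails at the tie points $\eta\in\{d,1-d\}$, though the theorem requires only attainment.
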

The proof of this theorem is provided in Appendix~A. To approximate the optimal classifier, Fisher consistency is the minimal requirement for the loss function. 

\subsection{Excess Risk Bound}
We will now derive the bound on the excess risk $(\sR_d(f,\rho) - \sR_d(f_d^*,\rho_d^*))$ in terms of the excess risk under $L_{dr}$ where $\sR_{d}(f, \rho) = \mathbb{E}[L_{d}(yf(\xx), \rho)]$. We know that $L_{d}(f(\xx), \rho) \leq L_{dr}(f(\xx), \rho),\;\forall \xx \in \X,\;\forall f$. This relation remains preserved when we take expectations both side, means $\sR_{d}(f, \rho) \leq \sR_{dr}(f,\rho)$. This relation is also true for excess risk. To show that, We first define the following terms. Let $\eta(\xx) = P(y = 1 | \xx)$ and $z = f(\xx)$. We define following terms.
\begin{equation*}
\begin{aligned}
\xi(\eta) &:=  \eta \I_{\{\first\}}+d\I_{\{\second\}}+(1-\eta)\I_{\{\third\}} \\
H(\eta) &:= \inf_{z, \rho}\;\;{\eta}L_{dr}(z,\rho) + (1-\eta)L_{dr}(-z,\rho)\\
&= \eta( 1+\mu )\I_{\{\first\}} + d( 1+\mu )\I_{\{\second\}}\\ &+ (1-\eta)(1+\mu)\I_{\{\third\}}
\end{aligned}
\end{equation*}
We know that $\sR_{d}^{*}=\E [\xi(\eta)]$ and $\sR_{dr}^{*}=\E[H(\eta)]$. Furthermore, we define
\begin{equation*}
\begin{aligned}
{\xi}_{-1}(\eta) &:= \eta - {\xi}(\eta) \\
{\xi}_{r}(\eta) &:= d - {\xi}(\eta) \\
{\xi}_{1}(\eta) &:= (1-\eta) - {\xi}(\eta) \\
H_{-1}(\eta) &:= \inf_{z<-\rho}\;\;{\eta}L_{dr}(z,\rho) + (1-\eta)L_{dr}(-z,\rho)  \\
H_{r}(\eta) &:= \inf_{|z| \leq \rho}\;\;{\eta}L_{dr}(z,\rho) + (1-\eta)L_{dr}(-z,\rho)  \\
H_{1}(\eta) &:= \inf_{z>\rho}\;\;{\eta}L_{dr}(z,\rho) + (1-\eta)L_{dr}(-z,\rho) 
\end{aligned}
\end{equation*}
We observe the following relationship.
\begin{proposition}
\begin{equation*}
\begin{aligned}
{\xi}_{-1}(\eta) &\leq {H_{-1}}(\eta) - H(\eta)\\
{\xi}_{r}(\eta) &\leq {H_{r}}(\eta) - H(\eta)  \\ 
{\xi}_{1}(\eta) &\leq {H_{1}}(\eta) - H(\eta)  \\
\end{aligned}
\end{equation*}
\end{proposition}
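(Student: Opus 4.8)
The plan is to turn each of the three inequalities into a lower bound on the corresponding regional infimum and to verify it through an explicit, low-dimensional minimization of the conditional $L_{dr}$-risk. First I would expose the ramp structure of the loss. Setting $\psi(s)=\frac{1}{\mu}\big([\mu+s]_+-[s-\mu^2]_+\big)$ --- which is $0$ for $s\le-\mu$, rises linearly from $0$ to $1+\mu$ on $[-\mu,\mu^2]$, and equals $1+\mu$ for $s\ge\mu^2$ --- one checks directly that $L_{dr}(t,\rho)=d\,\psi(\rho-t)+(1-d)\,\psi(-\rho-t)$. Writing $z=f(\xx)$, the conditional risk becomes
\begin{equation*}
\begin{aligned}
C_\eta(z,\rho)&:=\eta L_{dr}(z,\rho)+(1-\eta)L_{dr}(-z,\rho)\\
&=\eta\big[d\,\psi(\rho-z)+(1-d)\,\psi(-\rho-z)\big]\\
&\quad+(1-\eta)\big[d\,\psi(\rho+z)+(1-d)\,\psi(z-\rho)\big],
\end{aligned}
\end{equation*}
so that $H_{-1},H_{r},H_{1}$ are the infima of $C_\eta$ over $z<-\rho$, $|z|\le\rho$, $z>\rho$ respectively (with $\rho\ge\frac{1}{2}\mu(1+\mu)$).

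Since we are given $H(\eta)=(1+\mu)\xi(\eta)$, and $\xi_{-1}=\eta-\xi$, $\xi_{r}=d-\xi$, $\xi_{1}=(1-\eta)-\xi$, the three claims are equivalent to $H_{-1}(\eta)\ge\eta+\mu\xi(\eta)$, $H_{r}(\eta)\ge d+\mu\xi(\eta)$, and $H_{1}(\eta)\ge(1-\eta)+\mu\xi(\eta)$. I would establish these by computing the infima in closed form. The key simplification is that the constraint $\rho\ge\frac{1}{2}\mu(1+\mu)$ (together with $\mu\le1$) pushes several arguments of $\psi$ past their saturation thresholds, collapsing each regional minimization to one or two free variables.

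For the predict-$+1$ region $z>\rho$ I would note $\psi(-\rho-z)=0$ and $\psi(\rho+z)=1+\mu$, so $C_\eta(z,\rho)=(1-\eta)d(1+\mu)+g(u)$ with $u=z-\rho>0$ and $g(u)=\eta d\,\psi(-u)+(1-\eta)(1-d)\,\psi(u)$, a continuous piecewise-linear function of the single variable $u$ whose slope on $(0,\mu^2)$ has the sign of $(1-\eta)(1-d)-\eta d=1-d-\eta$. Its infimum over $u>0$ is therefore the smaller of the two candidate values $\eta d+(1-\eta)(1-d)$ (the limit as $u\to0^+$) and $(1-\eta)(1-d)(1+\mu)$ (attained for $u\ge\mu$), giving
$$H_1(\eta)=\min\big\{(1-\eta)d(1+\mu)+\eta d+(1-\eta)(1-d),\;(1-\eta)(1+\mu)\big\}.$$
Checking $H_1(\eta)\ge(1-\eta)+\mu\xi(\eta)$ then reduces, in each of the subregimes $\eta<d$, $d\le\eta\le1-d$, $\eta>1-d$, to elementary inequalities: the second candidate always satisfies it because $\xi(\eta)\le1-\eta$, and the first reduces to statements such as $(1-\mu)(1-d)\ge0$ and $\eta(1-\mu)\ge0$, which hold for $0<\mu\le1$, $0<d<\frac{1}{2}$. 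For the reject region $|z|\le\rho$ I would substitute $a=\rho-z\ge0$, $b=\rho+z\ge0$ (so $a+b=2\rho\ge\mu(1+\mu)$) and observe that $C_\eta$ separates as $p(a)+q(b)$ with $p(a)=\eta d\,\psi(a)+(1-\eta)(1-d)\,\psi(-a)$ and $q(b)=(1-\eta)d\,\psi(b)+\eta(1-d)\,\psi(-b)$; minimizing each piecewise-linear piece (the slopes flip sign at $\eta=1-d$ and $\eta=d$) and checking that the coupling constraint never binds --- since at least one of $a,b$ can be taken $\ge\mu$ in a flat region of $p$ or $q$ --- yields $H_r(\eta)$ in each regime, and the bound $H_r(\eta)\ge d+\mu\xi(\eta)$ again collapses to inequalities of the form $(1-\mu)(1-d)\ge0$. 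Finally, the symmetry $C_\eta(-w,\rho)=C_{1-\eta}(w,\rho)$ and $\xi(\eta)=\xi(1-\eta)$ give $H_{-1}(\eta)=H_1(1-\eta)$, so the first inequality follows from the bound already shown for $H_1$.

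The main obstacle will be the bookkeeping in these regional infima rather than any single hard estimate: one has to track precisely which $\psi$-terms saturate (this is exactly where $\rho\ge\frac{1}{2}\mu(1+\mu)$ and $\mu\le1$ enter), get the signs of the piecewise-linear slopes right across the thresholds $\eta=d$ and $\eta=1-d$, and confirm joint feasibility of the separate one-dimensional minimizers against the constraint $a+b\ge\mu(1+\mu)$ in the reject case. Once the closed forms for $H_{-1},H_r,H_1$ are in hand the comparisons are routine algebra, and as a sanity check each inequality becomes an equality exactly on the $\eta$-region where the corresponding decision is Bayes-optimal, i.e.\ where $\xi_{-1},\xi_r,\xi_1$ vanish.
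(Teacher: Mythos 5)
Your proof strategy is correct, and I verified its key claims: the factorization $L_{dr}(t,\rho)=d\,\psi(\rho-t)+(1-d)\,\psi(-\rho-t)$ holds, your closed form for $H_1$ matches the one in Appendix~B (your unconditional $\min$ collapses to the paper's case-split form because for $\eta>1-d$ the second candidate is the smaller, as $\eta d\ge\mu(1-\eta)(1-d)$ there), and your separable reject-region computation reproduces the paper's $H_r$ in all three $\eta$-regimes. At its core this is the same method as the paper's --- compute the regional infima in closed form by exploiting piecewise linearity, then verify the inequalities by casework over $\eta<d$, $d\le\eta\le1-d$, $\eta>1-d$ and the same $\mu$-thresholds, each case ending in the same elementary sign checks (e.g.\ $d\eta(1-\mu)\ge0$) --- but your organization departs in three ways that buy real savings. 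First, the single-ramp $\psi$ reduces each regional infimum to a one- or two-variable piecewise-linear minimization, instead of reading corner values off the nine-case table of $r_\eta(z)$ that the paper's Appendix~A/B argument relies on. Second, the substitution $a=\rho-z$, $b=\rho+z$ makes the reject-region problem separable as $p(a)+q(b)$, and your feasibility check of the coupling constraint $a+b\ge\mu(1+\mu)$ (always satisfiable because in every regime at least one minimizer sits in a flat tail $[\mu,\infty)$ and $\mu(1+\mu)\le 2\mu$) makes explicit exactly where $\rho\ge\frac{1}{2}\mu(1+\mu)$ and $\mu\le1$ enter; the paper uses these hypotheses only implicitly, through the ordering of the breakpoints in its case table. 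Third, the symmetry $C_\eta(-w,\rho)=C_{1-\eta}(w,\rho)$ combined with $\xi(\eta)=\xi(1-\eta)$ gives $H_{-1}(\eta)=H_1(1-\eta)$ and eliminates Part~1 entirely, whereas the paper proves that part from scratch with its own two-case $\mu$-analysis. One point to pin down in the write-up: the paper's definitions of $H_{-1},H_r,H_1$ leave ambiguous whether $\rho$ is fixed or also minimized; your reading (infimum over $\rho\ge\frac{1}{2}\mu(1+\mu)$ as well) matches the paper's $\rho$-free closed forms and is the weaker statement that still suffices for Theorem~3, since an infimum over more variables can only decrease. Your sanity check is also right: equality holds precisely on the $\eta$-region where the corresponding decision is generalized-Bayes-optimal, which is where $\xi_{-1},\xi_r,\xi_1$ vanish.
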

The proof of the Proposition~2 is given in Appendix~B. Now we prove that the excess risk of $L_{d}$ loss is bounded by excess risk of $L_{dr}$ using above proposition.

\begin{theorem}
\label{thm-excess-risk}
For any measurable function $f:\X\rightarrow\mathbb{R}$,
\begin{equation*}
\sR_{d}(f, \rho)-\sR_{d}(f_d^*, \rho_d^*) \leq \sR_{dr}(f,\rho)-\sR_{dr}(f_d^*,\rho_d^*)
\end{equation*}
\end{theorem}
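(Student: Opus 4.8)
The plan is to reduce the claim to a pointwise (conditional) inequality and then invoke Proposition~2. First I would note that the two reference terms are the respective minimal risks. Since $f_d^*$ is the generalized Bayes discriminant for $L_d$, we have $\sR_d(f_d^*,\rho_d^*)=\sR_d^*=\E[\xi(\eta)]$; and since $L_{dr}$ is Fisher consistent (Theorem~1), the same $f_d^*$ also minimizes $\sR_{dr}$, so $\sR_{dr}(f_d^*,\rho_d^*)=\sR_{dr}^*=\E[H(\eta)]$. Hence it suffices to prove $\sR_d(f,\rho)-\E[\xi(\eta)]\le \sR_{dr}(f,\rho)-\E[H(\eta)]$. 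Conditioning on $\xx$ and writing $\eta=\eta(\xx)$, $z=f(\xx)$, I would express each side as the expectation of a conditional excess risk built from $C_d(\eta,z,\rho)=\eta L_{d}(z,\rho)+(1-\eta)L_{d}(-z,\rho)$ and $C_{dr}(\eta,z,\rho)=\eta L_{dr}(z,\rho)+(1-\eta)L_{dr}(-z,\rho)$. By monotonicity of expectation it is then enough to establish the pointwise bound
\[
C_d(\eta,z,\rho)-\xi(\eta)\ \le\ C_{dr}(\eta,z,\rho)-H(\eta)
\]
for every $\eta\in[0,1]$, every $z\in\R$, and the fixed $\rho$.

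Next I would split into the three regions determined by where $z$ lies relative to $\rho$, matching the three prediction outcomes. A direct evaluation gives $C_d=1-\eta$ when $z>\rho$ (predict $+1$), $C_d=d$ when $|z|\le\rho$ (reject), and $C_d=\eta$ when $z<-\rho$ (predict $-1$). Subtracting $\xi(\eta)$ identifies the conditional excess $L_d$ risk in each region exactly with the quantities from the excerpt: $C_d-\xi=\xi_{1}(\eta)$, $\xi_{r}(\eta)$, and $\xi_{-1}(\eta)$, respectively.

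For the $L_{dr}$ side I would lower-bound $C_{dr}$ region by region using the infimum definitions of $H_1,H_r,H_{-1}$: for $z>\rho$ we have $C_{dr}(\eta,z,\rho)\ge H_1(\eta)$, for $|z|\le\rho$ we have $C_{dr}\ge H_r(\eta)$, and for $z<-\rho$ we have $C_{dr}\ge H_{-1}(\eta)$, simply because $z$ is a feasible point of the corresponding infimum (with the shared $\rho$). Hence $C_{dr}-H\ge H_{\text{region}}-H$ in each case. Combining this with the three inequalities of Proposition~2, namely $\xi_{1}\le H_1-H$, $\xi_{r}\le H_r-H$, and $\xi_{-1}\le H_{-1}-H$, yields in every region $C_d-\xi=\xi_{\text{region}}\le H_{\text{region}}-H\le C_{dr}-H$, which is precisely the desired pointwise inequality. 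Taking expectations over $\xx$ and using the two risk identities from the first step then completes the argument.

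The substantive content has been front-loaded into Proposition~2, so the remaining work is organizational rather than computational: executing the three-case reduction cleanly and keeping the regions of $z$ (relative to the single global $\rho$ shared by $C_d$, $C_{dr}$, and the $H_{\text{region}}$ infima) consistent. The one step demanding genuine care is the first one, namely justifying $\sR_{dr}(f_d^*,\rho_d^*)=\E[H(\eta)]$; this is exactly where Fisher consistency (Theorem~1) is indispensable, since without it the right-hand reference risk need not coincide with the pointwise infimum $\E[H(\eta)]$ and the telescoping that turns the pointwise bound into the excess-risk bound would break.
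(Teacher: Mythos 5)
Your proposal is correct and takes essentially the same route as the paper's own proof: both decompose the excess $L_d$ risk over the three regions $\{f<-\rho\}$, $\{|f|\leq\rho\}$, $\{f>\rho\}$, identify the conditional excess there with $\xi_{-1}(\eta)$, $\xi_r(\eta)$, $\xi_1(\eta)$, invoke Proposition~2, lower-bound the conditional $L_{dr}$ risk $r_\eta(f)$ by the regional infima $H_{-1}$, $H_r$, $H_1$ via feasibility of $z=f(\xx)$, and finally use $\sR_{dr}(f_d^*,\rho_d^*)=\E[H(\eta)]$ from Fisher consistency. The only cosmetic difference is that you state the bound pointwise in $(\eta,z)$ and then take expectations, whereas the paper carries the indicator functions inside a single expectation throughout.
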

\begin{proof}
We know that 
$$\sR_{d}(f, \rho)=\E[{\eta}{\I_{\{f<-\rho\}}} + {d}{\I_{\{-\rho\leq f\leq \rho\}}} + (1-\eta){\I_{\{f>\rho\}}}]$$
and $\sR_{dr}(f,\rho)=\E[r_{\eta}(f)]$ where $r_{\eta}(f(\xx))= \E_{y|\xx}[L_{dr}(yf(\xx),\rho)] = \eta L_{dr}(f(\xx), \rho) + (1 - \eta)L_{dr}(-f(\xx), \rho)$ . Therefore, 
\begin{equation*}
\begin{aligned}
&\sR_{d}(f, \rho)-\sR_{d}(f_d^*, \rho_d^*)\\
&= \E\big[{\eta}{\I_{\{f<-\rho\}}} + {d}{\I_{\{|f|\leq \rho\}}} + (1-\eta){\I_{\{f>\rho\}}}\big] -\E\big[\xi(\eta)\big] \\
&= \E\big[{\xi_{-1}(\eta)}{\I_{\{f<-\rho\}}} + {\xi_{r}(\eta)}{\I_{\{-\rho\leq f\leq \rho\}}} + {\xi_{1}(\eta)}{\I_{\{f>\rho\}}}\big]
\end{aligned}
\end{equation*}
Using Proposition~1, we will get
\begin{equation*}
\begin{aligned}
&\sR_{d}(f, \rho)-\sR_{d}(f_d^*, \rho_d^*) \;\leq \; \E\big[(H_{-1}(\eta) - H(\eta)){\I_{\{f<-\rho\}}} \\
& + (H_{r}(\eta) - H(\eta)){\I_{\{-\rho\leq f\leq \rho\}}} + 	(H_{1}(\eta) - H(\eta)){\I_{\{f>\rho\}}}\big] \\
& \leq \;\E\big[H_{-1}(\eta){\I_{\{f<-\rho\}}} + H_{r}(\eta){\I_{\{-\rho\leq f\leq \rho\}}}\\
&\;\;\;\;+ H_{1}(\eta){\I_{\{f>\rho\}}} - H(\eta)\big]  \\
& \leq \;\E[r_{\eta}(f) - H(\eta)] \leq \; \sR_{dr}(f,\rho) - \sR_{dr}(f_d^*,\rho_d^*)
\end{aligned}
\end{equation*}
\end{proof}
Hence, excess risk under $L_d$ is upper bounded by excess risk under $L_{dr}$. From Theorem~\ref{thm-excess-risk}, we need to bound $\sR_{dr}(f, \rho) - \sR_{dr}(f_d^*, \rho_d^*)$ in order to bound $\sR_{d}(f, \rho) - \sR_{d}(f_d^*, \rho_d^*)$. We thus need an error decomposition for $\sR_{dr}(f, \rho) - \sR_{dr}(f_d^*, \rho_d^*)$.

\subsection{Error Decomposition of $\sR_{dr}(f, \rho) - \sR_{dr}(f_d^*, \rho_d^*)$}
The decomposition for RKHS based regularization schemes is well established \cite{Cucker:2007:LTA:1214096}. To understand the details, consider the $l_2$ regularized empirical risk minimization with $L_{dr}$. For $S=\{(\xx_1,y_1),\ldots,(\xx_N,y_N)\}$ and $\lambda_2 >0$, let $f_{\lambda_{2}, S}^*=h_{\lambda_{2}, S}^*+ b_{\lambda_{2}, S}^*$ where
\begin{align}
\label{l2-emp-risk-min}
(h_{\lambda_{2}, S}^*, b_{\lambda_{2}, S}^*,\rho_{\lambda_{2}, S}^*) &= \arg\min_{h\in {\cal H}_{\K}, b, \rho} \; \frac{\lambda_{2}}{2} \| h \|^2_{\K} + \hat{\cal R}_{dr}(f,\rho)
\end{align}
Note that $\hat{\sR}_{dr}$ denotes the empirical risk under double ramp loss. In this case, we observe the following decomposition.
\begin{align}
\label{error-decomp-l2}
\nonumber &\sR_{dr}(f_{\lambda_{2}, S}^*,\rho_{\lambda_{2}, S}^*)-\sR_{dr}(f_{d}^*, \rho_d^*)\leq \mathcal{A}(\lambda_2) + \sR_{dr}(f_{\lambda_{2}, S}^*,\rho_{\lambda_{2}, S}^*) \\
&-\hat{\sR}_{dr}(f_{\lambda_{2}, S}^*,\rho_{\lambda_{2}, S}^*)+\hat{\sR}_{dr}(f_{\lambda_{2}}^*,\rho_{\lambda_{2}}^*) -\sR_{dr}(f_{\lambda_{2}}^*,\rho_{\lambda_{2}}^*)
\end{align}
where $\hat{\sR}_{dr}(f,\rho)$ is the empirical risk of $(f,\rho)$ under double ramp loss. $f_{\lambda_2}^*=h_{\lambda_2}^*+b_{\lambda_2}^*$ and $\rho^*_{\lambda_2}$ are defined as follows.
\begin{align}
\label{eq-l2-risk-minimize}
(h_{\lambda_{2}}^*, b_{\lambda_{2}}^*,\rho_{\lambda_{2}}^*)& = \arg\min_{h\in {\cal H}_{\K}, b, \rho} \; \frac{\lambda_{2}}{2} \| h \|^2_{\K} + {\cal R}_{dr}(f,\rho)
\end{align}
$\mathcal{A}(\lambda_2)$ measures the approximation power in RKHS $\K$ and is defined as follow.
\begin{equation}
\label{eq-approx-powr}
\mathcal{A} (\lambda_2) = \inf_{h\in {\cal H}_{\K}, b, \rho} \; \frac{\lambda_2}{2} \| h \|^2_{\K} + \sR_{dr} (h+b, \rho) - \sR_{dr} (f_{d}^*, \rho_d^*)  \;\;\; \forall \lambda_2 > 0
\end{equation}
The error decomposition in eq.(\ref{error-decomp-l2}) is easy to derive once we know that both $h^*_{\lambda_2}$ and $h^*_{\lambda_2,S}$ lie in the same function space. However, this doe not hold true in case of SDR-SVM proposed in this paper. It happens because the error analysis becomes difficult due to the data dependent nature of ${\cal H}_{\K}^+$. We use the techniques discussed in \cite{Wu:2005,JMLR:v15:huang14a}. We establish the error decomposition of SDR-SVM using the error decomposition (\ref{error-decomp-l2}) with the help of $f^*_{\lambda_2,S}$. We first characterize some properties of $f^*_{\lambda_2,S}, \rho^*_{\lambda_2,S}$. 
Note that from here onwards, we assume $\mu=1$
(slope parameter in the loss function $L_{dr}$).

\begin{proposition}
\label{prop-reg-bound}
For any $\lambda_{2} > 0$, ${f}_{ \lambda_{2}, S}^* =  (h_{\lambda_{2}, S}^*, b_{\lambda_{2}, S}^*, \rho^*_{\lambda_2,S})$ is given by eq.(\ref{l2-emp-risk-min}). Then,
\begin{equation*}
\begin{aligned}
\Omega({h}_{\lambda_{2}, S}^*) \leq \frac{1}{\lambda_{2}} \hat{\sR}_{dr}({f}_{ \lambda_{2}, S}^*, \rho^*_{\lambda_2,S}) + \| h_{\lambda_{2}, S}^* \|^2_{\K} 
\end{aligned}
\end{equation*}
\end{proposition}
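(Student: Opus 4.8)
The plan is to read off the first-order optimality conditions of the (non-convex) objective in~(\ref{l2-emp-risk-min}) at its minimizer, which I abbreviate $h^*=h^*_{\lambda_2,S}$, $b^*=b^*_{\lambda_2,S}$, $\rho^*=\rho^*_{\lambda_2,S}$, and write $t_i=y_if^*(\xx_i)$. Since the loss enters the objective only through $h(\xx_i)=\langle h,\K(\xx_i,\cdot)\rangle_\K$, stationarity in $h$ yields subgradients $g_i\in\partial_t L_{dr}(t_i,\rho^*)$ with $\lambda_2 h^*+\frac1N\sum_i g_i y_i\K(\xx_i,\cdot)=0$, i.e. $h^*=-\frac{1}{\lambda_2 N}\sum_i g_i y_i\K(\xx_i,\cdot)$. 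The first key observation is a sign fact: with $\mu=1$, $L_{dr}(\cdot,\rho)$ is, up to the weights $d$ and $1-d$, a sum of the two ramps $R(\rho-t)$ and $R(-\rho-t)$ (for the unit ramp $R$), each non-increasing in $t$, so $g_i\le 0$ for every $i$. Hence $\alpha_i:=-g_i/(\lambda_2 N)\ge 0$, so $h^*$ in fact lies in ${\cal H}^+_{\K,S}$ and $\Omega(h^*)=\sum_i\alpha_i=\frac{1}{\lambda_2 N}\sum_i|g_i|$. This is what reconciles $\Omega$ (defined on the non-negative cone) with an $h^*$ a priori ranging over all of ${\cal H}_\K$.

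Next I would evaluate $\|h^*\|_\K^2$ from the same representation. By the reproducing property, $\lambda_2\|h^*\|_\K^2=-\frac1N\sum_i g_i y_i h^*(\xx_i)$; writing $y_i h^*(\xx_i)=t_i-y_ib^*$ and using stationarity in $b$ (which gives $\sum_i g_i y_i=0$) collapses this to $\lambda_2\|h^*\|_\K^2=-\frac1N\sum_i g_i t_i=\frac1N\sum_i|g_i|t_i$. Substituting the two identities for $\Omega(h^*)$ and $\|h^*\|_\K^2$ into the claimed bound and clearing $\lambda_2$, the entire statement reduces to the single scalar inequality $\frac1N\sum_i|g_i|(1-t_i)\le\hat\sR_{dr}(f^*,\rho^*)=\frac1N\sum_i L_{dr}(t_i,\rho^*)$.

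Proving this reduced inequality is the main obstacle, and the catch is that its per-example version is \emph{false}, so optimality in $\rho$ must be brought in. Writing $\theta_i,\phi_i\in[0,1]$ for the slopes of the two ramps at example $i$, one has $L_{dr}(t_i,\rho^*)=d\,R(\rho^*-t_i)+(1-d)R(-\rho^*-t_i)$, $|g_i|=d\theta_i+(1-d)\phi_i$, and crucially $\partial_\rho L_{dr}(t_i,\rho^*)=d\theta_i-(1-d)\phi_i$ with the \emph{same} $\theta_i,\phi_i$ (the two derivatives are coupled because the ramp edges $t=\pm\rho\pm1$ move with $\rho$). Splitting indices into the region $A$ where the first ramp is active, the region $B$ where the second is active, and the rest, a direct computation gives that each $A$-example contributes a surplus $d\rho^*$ to $L_{dr}(t_i,\rho^*)-|g_i|(1-t_i)$, each $B$-example a deficit $2d-(1-d)\rho^*$, and the rest a non-negative amount. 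The constraint $\rho^*\ge\frac12\mu(1+\mu)=1$ together with KKT-stationarity in $\rho$ forces $\partial_\rho\hat\sR_{dr}\ge 0$, i.e. $d\sum_i\theta_i\ge(1-d)\sum_i\phi_i$ (equivalently $d\,|A|\ge(1-d)\,|B|$); multiplying by $\rho^*\ge 0$ shows the $A$-surplus dominates the $B$-deficit, leaving a total of at least $2d\,|B|\ge 0$. I expect the delicate points to be (i) justifying the stationarity/KKT conditions for the non-convex piecewise-linear objective, which is handled through the Clarke subdifferential whose coupled $(\theta_i,\phi_i)$ description makes the chosen $g_i$ consistent across the $h$-, $b$- and $\rho$-conditions, and (ii) the bookkeeping by which the $\rho$-surplus from region $A$ exactly absorbs the deficit from region $B$.
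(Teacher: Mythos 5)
Your proposal is correct, and while it reproduces the paper's inequality chain, it reaches it through genuinely different machinery. The paper (Appendix~C) never invokes nonsmooth analysis: it rewrites problem~(\ref{l2-emp-risk-min}) as a mixed-integer program~(\ref{eq-original-mi-op-prob}) whose variables $p_{i,1},p_{i,2}\in\{0,2\}$ encode which ramps are saturated, shows the box constraints $e_{i,j}\leq 2$ can be dropped at the optimal integer pattern (the passage from~(\ref{eq-mi-op-prob-without-p}) to~(\ref{eq-mi-op-red-con})), and then reads off classical KKT conditions~(\ref{kkt-conditions}) of the reduced \emph{convex} QP. Those multipliers are exactly your Clarke data under the dictionary $\alpha^*_{i,1}\leftrightarrow d\theta_i/(\lambda_2 N)$, $\alpha^*_{i,2}\leftrightarrow (1-d)\phi_i/(\lambda_2 N)$, $g_i\leftrightarrow -\lambda_2 N\alpha_i^*$: the dual box constraints $0\leq\alpha^*_{i,1}\leq d/(\lambda_2 N)$, $0\leq\alpha^*_{i,2}\leq(1-d)/(\lambda_2 N)$ correspond to $\theta_i,\phi_i\in[0,1]$ (and dual feasibility $\alpha^*\geq 0$ is your sign fact $g_i\leq 0$, which puts $h^*$ in ${\cal H}^+_{\K,S}$ in both proofs); the paper's conditions $\sum_i\alpha^*_{i,1}y_i+\sum_i\alpha^*_{i,2}y_i=0$ and $\sum_i\alpha^*_{i,1}-\sum_i\alpha^*_{i,2}=\gamma\geq 0$ are precisely your $b$- and $\rho$-stationarity; and the dropped term $-\gamma\rho^*_{\lambda_2,S}\leq 0$ is your aggregate $A$-surplus absorbing the $B$-deficit. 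What your route buys is economy: no MIP detour and no auxiliary equivalence argument for removing the $e\leq 2$ constraints, at the price of justifying stationarity for a nonconvex piecewise-linear objective --- which is sound, since the joint Clarke subdifferential of a piecewise-linear function is the convex hull of the active pieces' gradients, and convex combination with shared weights is exactly what yields the consistent coupled $(\theta_i,\phi_i)$ across the $h$-, $b$- and $\rho$-conditions that you flag as delicate point~(i). What the paper's route buys is that only smooth convex duality is needed, with the multiplier bounds delivered automatically. Finally, your observation that the per-example inequality $|g_i|(1-t_i)\leq L_{dr}(t_i,\rho^*)$ is \emph{false} and must be repaired by $\rho$-stationarity is the true crux, and you handle it correctly; notably, the paper's own intermediate displays (\ref{eq-loss-inq1})--(\ref{eq-loss-inq2}) are likewise false pointwise for $d<1$ (on the active set one has $L_{dr}=d(1+\rho^*-y_if^*)$, not $\geq 1+\rho^*-y_if^*$), and only the weighted aggregate after inserting $\alpha^*_{i,1}\leq d/(\lambda_2 N)$, $\alpha^*_{i,2}\leq(1-d)/(\lambda_2 N)$ and $d+(1-d)=1$ is valid --- so your region bookkeeping with total surplus $\rho^*\bigl(d|A|-(1-d)|B|\bigr)+2d|B|\geq 0$ is, if anything, a more careful rendering of that very step.
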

The proof of this proposition is skipped here and is provided in Appendix C.

\subsection{Error Decomposition for SDR-SVM}
We now find the error decomposition for SDR-SVM. We define the sample error as below, 
\begin{align*}
\label{eq-sample-error}
\mathcal{S}(N, \lambda_1, \lambda_2)& = \big( \sR_{dr} (f_{ \lambda_1, S }^*, \rho_{\lambda_1, S}^*) - \hat{\sR}_{dr}(f_{\lambda_1, S}^*, \rho_{\lambda_1, S}^*) \big)\\
&+ (1 + \psi)  \big( \hat{\sR}_{dr}(f_{\lambda_2}^*, \rho_{\lambda_2}^*) - \sR_{dr}( f_{\lambda_2}^*, \rho_{\lambda_2}^* ) \big)
\end{align*}
where $(f_{ \lambda_1, S }^*,\rho_{ \lambda_1, S }^*)$ is a global minimizer of optimization problem in eq.(\ref{eq-sdr-svm}) and $(f^*_{\lambda_2}, \rho^*_{\lambda_2})$ is a global minimizer of problem (\ref{eq-l2-risk-minimize}). Also, $\psi=\frac{\lambda_1}{\lambda_2}$. Following theorem gives the error decomposition for SDR-SVM.
\begin{theorem}
\label{thm:err-decomp-l1}
For $0 < \lambda_1 \leq \lambda_2 \leq 1$, let $\psi = \frac{\lambda_1}{\lambda_2}$. Then,
\begin{align*}
\sR_{dr} (f_{ \lambda_1, S }^*, \rho_{\lambda_1, S}^*) - \sR_{dr} (f_{d}^*, \rho_{d}^*) + \lambda_1 \Omega (h_{\lambda_1, S}^*)\\
\leq \psi \sR_{dr} (f_{d}^*, \rho_d^*)+ \mathcal{S}(N, \lambda_1, \lambda_2) + 2 \mathcal{A}(\lambda_2)
\end{align*}
where $\mathcal{A}(\lambda_2)$ is the approximation error defined by eq.(\ref{eq-approx-powr}). 
\end{theorem}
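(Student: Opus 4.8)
The plan is to reduce the statement to a purely empirical inequality and then to use the $l_2$-regularized empirical minimizer $f_{\lambda_2,S}^*$ of eq.(\ref{l2-emp-risk-min}) as a bridge linking the $l_1$ empirical minimizer $f_{\lambda_1,S}^*$ to the population $l_2$ minimizer $f_{\lambda_2}^*$. Write $\sR,\hat{\sR}$ for $\sR_{dr},\hat{\sR}_{dr}$, set $R^*=\sR(f_d^*,\rho_d^*)$, and let $\Delta_2=\hat{\sR}(f_{\lambda_2}^*,\rho_{\lambda_2}^*)-\sR(f_{\lambda_2}^*,\rho_{\lambda_2}^*)$ be the sampling gap at the comparison function, so that $\mathcal{S}(N,\lambda_1,\lambda_2)=\big(\sR(f_{\lambda_1,S}^*,\rho_{\lambda_1,S}^*)-\hat{\sR}(f_{\lambda_1,S}^*,\rho_{\lambda_1,S}^*)\big)+(1+\psi)\Delta_2$. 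Rearranging and using $\psi R^*+R^*=(1+\psi)R^*$, it suffices to establish the empirical core inequality
\begin{equation*}
\begin{aligned}
&\hat{\sR}(f_{\lambda_1,S}^*,\rho_{\lambda_1,S}^*)+\lambda_1\Omega(h_{\lambda_1,S}^*)\\
&\qquad\leq (1+\psi)\big[\mathcal{A}(\lambda_2)+R^*+\Delta_2\big],
\end{aligned}
\end{equation*}
since $\psi\leq 1$ and $\mathcal{A}(\lambda_2)\geq 0$ (by Fisher consistency of $L_{dr}$, $\sR(h+b,\rho)\geq R^*$) together let me replace $(1+\psi)\mathcal{A}(\lambda_2)$ by $2\mathcal{A}(\lambda_2)$ and recover the claimed bound.

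I would prove the empirical core by chaining three optimality facts. First, by global optimality of $f_{\lambda_1,S}^*$ for problem~(\ref{eq-sdr-svm}), comparing against the feasible point $f_{\lambda_2,S}^*$ gives $\lambda_1\Omega(h_{\lambda_1,S}^*)+\hat{\sR}(f_{\lambda_1,S}^*,\rho_{\lambda_1,S}^*)\leq\lambda_1\Omega(h_{\lambda_2,S}^*)+\hat{\sR}(f_{\lambda_2,S}^*,\rho_{\lambda_2,S}^*)$; substituting the bound on $\Omega(h_{\lambda_2,S}^*)$ from Proposition~\ref{prop-reg-bound} and using $\lambda_1/\lambda_2=\psi$ rewrites the right-hand side as $(1+\psi)\hat{\sR}(f_{\lambda_2,S}^*,\rho_{\lambda_2,S}^*)+\lambda_1\|h_{\lambda_2,S}^*\|_{\K}^2$. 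Second, optimality of $f_{\lambda_2,S}^*$ for problem~(\ref{l2-emp-risk-min}) against $f_{\lambda_2}^*$ yields $\frac{\lambda_2}{2}\|h_{\lambda_2,S}^*\|_{\K}^2+\hat{\sR}(f_{\lambda_2,S}^*,\rho_{\lambda_2,S}^*)\leq A$ where $A=\frac{\lambda_2}{2}\|h_{\lambda_2}^*\|_{\K}^2+\hat{\sR}(f_{\lambda_2}^*,\rho_{\lambda_2}^*)$; since the two nonnegative summands on the left sum to at most $A$ and $\lambda_1\|h_{\lambda_2,S}^*\|_{\K}^2=2\psi\cdot\frac{\lambda_2}{2}\|h_{\lambda_2,S}^*\|_{\K}^2$ with $2\psi\leq 1+\psi$, I obtain $(1+\psi)\hat{\sR}(f_{\lambda_2,S}^*,\rho_{\lambda_2,S}^*)+\lambda_1\|h_{\lambda_2,S}^*\|_{\K}^2\leq (1+\psi)A$.

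Third, I eliminate $f_{\lambda_2,S}^*$ from $A$ via the approximation-error identity. Because $f_{\lambda_2}^*$ attains the infimum defining $\mathcal{A}(\lambda_2)$ in eq.(\ref{eq-approx-powr}), we have $\frac{\lambda_2}{2}\|h_{\lambda_2}^*\|_{\K}^2=\mathcal{A}(\lambda_2)+R^*-\sR(f_{\lambda_2}^*,\rho_{\lambda_2}^*)$, hence $A=\mathcal{A}(\lambda_2)+R^*+\Delta_2$. Stringing the three steps together gives $\lambda_1\Omega(h_{\lambda_1,S}^*)+\hat{\sR}(f_{\lambda_1,S}^*,\rho_{\lambda_1,S}^*)\leq(1+\psi)(\mathcal{A}(\lambda_2)+R^*+\Delta_2)$, which is the empirical core, and the reduction in the first paragraph finishes the proof.

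The step I expect to be the main obstacle is the first comparison, where $f_{\lambda_2,S}^*$ is used as a competitor in the $l_1$ problem~(\ref{eq-sdr-svm}). That problem optimizes over the data-dependent positive cone ${\cal H}_{\K,S}^+$, whereas eq.(\ref{l2-emp-risk-min}) optimizes over all of ${\cal H}_{\K}$, so $h_{\lambda_2,S}^*$ need not have a nonnegative-coefficient representation and $\Omega$ is a priori defined only on ${\cal H}_{\K,S}^+$. This is precisely the mismatch that blocks the direct $l_2$-style decomposition of eq.(\ref{error-decomp-l2}) from carrying over, and it is why Proposition~\ref{prop-reg-bound}, bounding $\Omega(h_{\lambda_2,S}^*)$ in terms of the empirical risk and the RKHS norm, is the crucial enabling device; making this comparison rigorous along the coefficient-based regularization lines of \cite{Wu:2005,JMLR:v15:huang14a} is where the genuine work sits, the remaining manipulations being bookkeeping with the two optimality inequalities and the approximation-error identity.
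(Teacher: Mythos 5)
Your proof is correct and follows essentially the same route as the paper's: comparing $f_{\lambda_1,S}^*$ against $f_{\lambda_2,S}^*$ in the $l_1$ objective, invoking Proposition~\ref{prop-reg-bound} to control $\Omega(h_{\lambda_2,S}^*)$ (whose KKT representation also certifies that $h_{\lambda_2,S}^*\in{\cal H}_{\K,S}^+$, resolving the feasibility worry you flag), then using the $l_2$ empirical optimality against $f_{\lambda_2}^*$ and the identity $\mathcal{A}(\lambda_2)=\sR_{dr}(f_{\lambda_2}^*,\rho_{\lambda_2}^*)-\sR_{dr}(f_d^*,\rho_d^*)+\tfrac{\lambda_2}{2}\|h_{\lambda_2}^*\|_\K^2$ before absorbing $(1+\psi)\mathcal{A}(\lambda_2)\leq 2\mathcal{A}(\lambda_2)$ via $\psi\leq 1$ and $\mathcal{A}(\lambda_2)\geq 0$. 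Your handling of the $\tfrac{\lambda_2}{2}$ factor through $2\psi\leq 1+\psi$ is in fact slightly more careful than the paper, which drops the $\tfrac{1}{2}$ in Appendix~D.
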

Proof of above theorem is provided in the Appendix~D. Using Theorem~\ref{thm:err-decomp-l1}, the generalization error of SDR-SVM is estimated by bounding ${\cal S}(N,\lambda_1,\lambda_2)$ and ${\cal A}(\lambda_2)$. 

\subsection{Generalization Error of SDR-SVM}
%Using the fact that the double ramp loss is Lipschitz continuous, we get
%$$\sR_{dr,\rho}(f)-\sR_{dr,\rho_d^*}(f_d^*) \leq \Vert f-f_d^*\Vert_{L^1_{\mu_{\X}}}$$
%Thus, the approximation of ${\cal A}(\lambda_2)$ is done in a weighted $L^1$ space with the norm $\Vert f \Vert_{L^1_{\mu_{\X}}} = \int_{\X} \vert f\vert \;d\mu_{\X}$. 
We expect that the sample error ${\cal S}(N,\lambda_1,\lambda_2)$ tends to zero with certain rate as $N$ tends to infinity. This can be understood by the convergence of the sample mean to its expected value. Also, we will have following assumption on ${\cal A}(\lambda_2)$.  
\begin{assumption}
\label{assump1}
For any $0 < \beta \leq 1$ and $c_{\beta} > 0$, the approximation error satisfies 
\begin{equation}
\label{eq-assumption}
\mathcal{A} (\lambda_2) \leq c_{\beta} \lambda^{\beta}  \;\;\;\forall \lambda_2 > 0
\end{equation}
\end{assumption}
This is a standard assumption in the literature of learning theory \cite{Cucker:2007:LTA:1214096}.

\begin{theorem}
Suppose that Assumption \ref{assump1} holds
for any $0 < \beta \leq 1$. Take $\lambda_1 = N^{-\frac{\beta}{4 \beta + 2}}$ and $(f_{\lambda_1, S}^*, \rho_{\lambda_1, S}^*)$ is the optimal solution of SDR-SVM. Then for any $0 \leq \delta \leq 1$, there holds
\begin{equation}
\sR_{d}(f_{\lambda_1, S}^*, \rho_{\lambda_1, S}^*) - \sR_{d}(f_{d}^*, \rho_d^*) \leq \tilde{c} \left(  \text{log} \frac{4}{\delta} \right)^{1/2} N^{-\frac{\beta}{4\beta + 2}}
\end{equation}
with probability at least $1 - \delta$. Here $\tilde{c}$ is a constant independent of $\delta$ or $N$.
\end{theorem}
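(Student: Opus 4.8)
The plan is to peel the bound off in three stages and then balance the free regularization parameter $\lambda_2$. First I would invoke Theorem~\ref{thm-excess-risk} to replace the quantity of interest by its $L_{dr}$ counterpart, so that it suffices to control the $L_{dr}$ excess risk $\sR_{dr}(f_{\lambda_1,S}^*,\rho_{\lambda_1,S}^*)-\sR_{dr}(f_d^*,\rho_d^*)$ of the SDR-SVM solution. Feeding this into Theorem~\ref{thm:err-decomp-l1} and discarding the nonnegative term $\lambda_1\Omega(h_{\lambda_1,S}^*)$ on its left-hand side gives
\[
\sR_{dr}(f_{\lambda_1,S}^*,\rho_{\lambda_1,S}^*)-\sR_{dr}(f_d^*,\rho_d^*)\le \psi\,\sR_{dr}(f_d^*,\rho_d^*)+\mathcal{S}(N,\lambda_1,\lambda_2)+2\mathcal{A}(\lambda_2),
\]
with $\psi=\lambda_1/\lambda_2$. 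The remaining task is to bound the three summands and then choose $\lambda_2$ relative to the prescribed $\lambda_1=N^{-\beta/(4\beta+2)}$.

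For the two deterministic pieces I would use Assumption~\ref{assump1} to write $2\mathcal{A}(\lambda_2)\le 2c_\beta\lambda_2^\beta$, while the first term is just a constant multiple $\psi\,\sR_{dr}^{*}=(\lambda_1/\lambda_2)\,\sR_{dr}^{*}$ of $\lambda_1/\lambda_2$. These pull $\lambda_2$ in opposite directions, so I would set $\lambda_2$ to an appropriate power of $\lambda_1$ so as to equate the orders of $\lambda_1/\lambda_2$ and $\lambda_2^\beta$. Verifying that, with the given $\lambda_1$, this common order is exactly $N^{-\beta/(4\beta+2)}$ is the bookkeeping that forces the denominator $4\beta+2$, and it is where the exponents in $\lambda_1,\lambda_2$ must be tracked carefully.

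For the sample error I would split $\mathcal{S}$ as written into a data-independent piece $(1+\psi)\big(\hat{\sR}_{dr}(f_{\lambda_2}^*,\rho_{\lambda_2}^*)-\sR_{dr}(f_{\lambda_2}^*,\rho_{\lambda_2}^*)\big)$ and a data-dependent piece $\sR_{dr}(f_{\lambda_1,S}^*,\rho_{\lambda_1,S}^*)-\hat{\sR}_{dr}(f_{\lambda_1,S}^*,\rho_{\lambda_1,S}^*)$. Since $(f_{\lambda_2}^*,\rho_{\lambda_2}^*)$ is fixed and $L_{dr}$ is bounded, the first piece is a single sample-mean deviation and Hoeffding's inequality bounds it by order $\big(\log(4/\delta)/N\big)^{1/2}$, using $1+\psi\le 2$. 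The second piece is the delicate one: I would first obtain an a priori radius by comparing the SDR-SVM objective at its optimum with its value at $h=0$, and invoking Proposition~\ref{prop-reg-bound}, to get $\Omega(h_{\lambda_1,S}^*)=O(1/\lambda_1)$; then I would bound $\sup_{\Omega(h)\le O(1/\lambda_1)}(\sR_{dr}-\hat{\sR}_{dr})$ by a covering-number / uniform-deviation estimate, using that $L_{dr}$ is Lipschitz so that the capacity of the loss class is controlled by that of the $\Omega$-ball. This contributes a term of order $\lambda_1^{-1}\big(\log(4/\delta)/N\big)^{1/2}$, which with the prescribed $\lambda_1$ decays faster than $N^{-\beta/(4\beta+2)}$ and is therefore absorbed.

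Finally I would assemble the pieces: the Hoeffding bound and the uniform deviation each hold with probability at least $1-\delta/2$, so a union bound yields confidence $1-\delta$ and collects the common $(\log(4/\delta))^{1/2}$ factor; substituting $\lambda_1=N^{-\beta/(4\beta+2)}$ and the balanced $\lambda_2$ gives the stated rate with a $\delta$- and $N$-independent constant $\tilde{c}$. The main obstacle is the data-dependent piece of $\mathcal{S}$: because the dictionary $\{\K(\xx_i,\cdot)\}$ is random, no fixed-class concentration bound applies, and the capacity of the random $l_1$-ball $\{h:\Omega(h)\le O(1/\lambda_1)\}$ must be estimated and combined with the a priori norm bound. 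This is precisely the difficulty flagged before Proposition~\ref{prop-reg-bound}, and it is handled via the covering-number techniques of \cite{Wu:2005,JMLR:v15:huang14a}. A secondary subtlety is selecting the exponents of $\lambda_1$ and $\lambda_2$ so that the approximation order $\lambda_2^\beta$, the order $\lambda_1/\lambda_2$, and the sample-error order align to give exactly $N^{-\beta/(4\beta+2)}$.
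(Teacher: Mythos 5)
Your overall architecture is the same as the paper's: pass from $L_d$ to $L_{dr}$ excess risk via Theorem~\ref{thm-excess-risk}, decompose via Theorem~\ref{thm:err-decomp-l1}, bound the fixed-function piece of $\mathcal{S}$ by Hoeffding with $1+\psi\leq 2$, obtain the a priori radius $\Omega(h^*_{\lambda_1,S})\leq 2d/\lambda_1$ by comparing the SDR-SVM objective with its value at $h=0$, and control the data-dependent deviation uniformly over the resulting ball. Two local differences: the paper handles that uniform deviation via Rademacher complexity and the Lipschitz contraction of $L_{dr}$ (Bartlett--Mendelson) rather than your covering-number route, which is a legitimate substitute; but the function class also ranges over the unbounded offset $b$ and threshold $\rho$, and the paper needs Lemma~\ref{lemma:abc} to show one may take $|b^*_{\lambda_1,S}|-\rho^*_{\lambda_1,S}\leq 1+2d\tau^2/\lambda_1$ --- your sketch over the $\Omega$-ball alone omits this, so your capacity estimate is incomplete as stated. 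Also, Proposition~\ref{prop-reg-bound} bounds $\Omega(h^*_{\lambda_2,S})$ for the $l_2$-regularized problem and enters through Theorem~\ref{thm:err-decomp-l1}; the $O(1/\lambda_1)$ radius for the $l_1$ minimizer follows from the $h=0$ comparison alone (Lemma~\ref{lemma-norm-bound}), so invoking Proposition~\ref{prop-reg-bound} there is a misattribution.

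The genuine gap is in precisely the bookkeeping you flagged as delicate. With $\lambda_1=N^{-\beta/(4\beta+2)}$, equating the orders of $\lambda_1/\lambda_2$ and $\lambda_2^{\beta}$ forces $\lambda_2=\lambda_1^{1/(\beta+1)}$ and yields the common order $\lambda_1^{\beta/(\beta+1)}=N^{-\beta^2/((\beta+1)(4\beta+2))}$, which is strictly slower than $N^{-\beta/(4\beta+2)}$ for every $\beta\in(0,1]$; your claimed verification cannot go through. To make all three terms $O(N^{-\beta/(4\beta+2)})$ one needs $\lambda_2\leq N^{-1/(4\beta+2)}$ (approximation term) and $\lambda_1\leq \lambda_2 N^{-\beta/(4\beta+2)}$ ($\psi$-term), hence $\lambda_1\leq N^{-(\beta+1)/(4\beta+2)}$, while the uniform-deviation term of order $1/(\lambda_1\sqrt{N})$ forces $\lambda_1\geq N^{-(\beta+1)/(4\beta+2)}$. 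The choice is therefore pinned to $\lambda_1=N^{-(\beta+1)/(4\beta+2)}$ and $\lambda_2=N^{-1/(4\beta+2)}$, which is exactly what the paper's proof uses (the value $\lambda_1=N^{-\beta/(4\beta+2)}$ in the theorem statement is an internal inconsistency of the paper, and with it the stated rate is unattainable by this decomposition). A corollary of this correction: your claim that the data-dependent term ``decays faster and is therefore absorbed'' fails at the correct $\lambda_1$, where $1/(\lambda_1\sqrt{N})=N^{-\beta/(4\beta+2)}$ is of exactly the target order and is one of the dominant contributions, alongside $\psi\,\sR_{dr}(f_d^*,\rho_d^*)$ and $\mathcal{A}(\lambda_2)$.
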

Proof of this theorem is provided in Appendix~E. It uses the concentration bounds results discussed in \cite{Bartlett:2003:RGC:944919.944944}.

\section{Experiments}
\label{sec:exp}
In this section, we show the effectiveness of approach on several datasets. We report experimental results on five datasets (``Ionosphere", ``Parkinsons", ``Heart", ``ILPD" and ``Pima Indian Diabetes") available on UCI machine learning repository \cite{Lichman:2013}.

\begin{figure*}[t!]
\begin{center}
\begin{tabular}{ccc}
\includegraphics[scale=0.3]{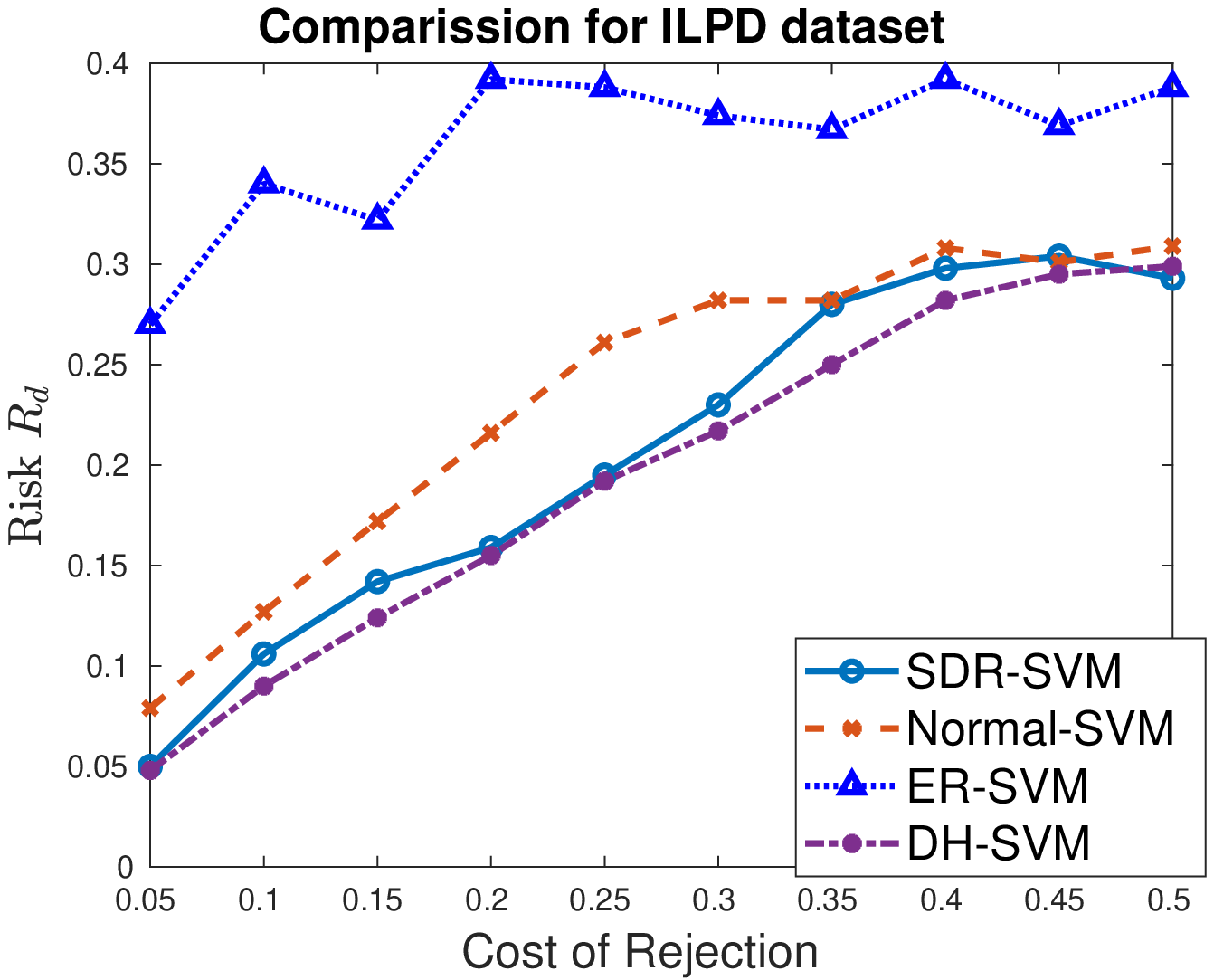}&
\includegraphics[scale=0.3]{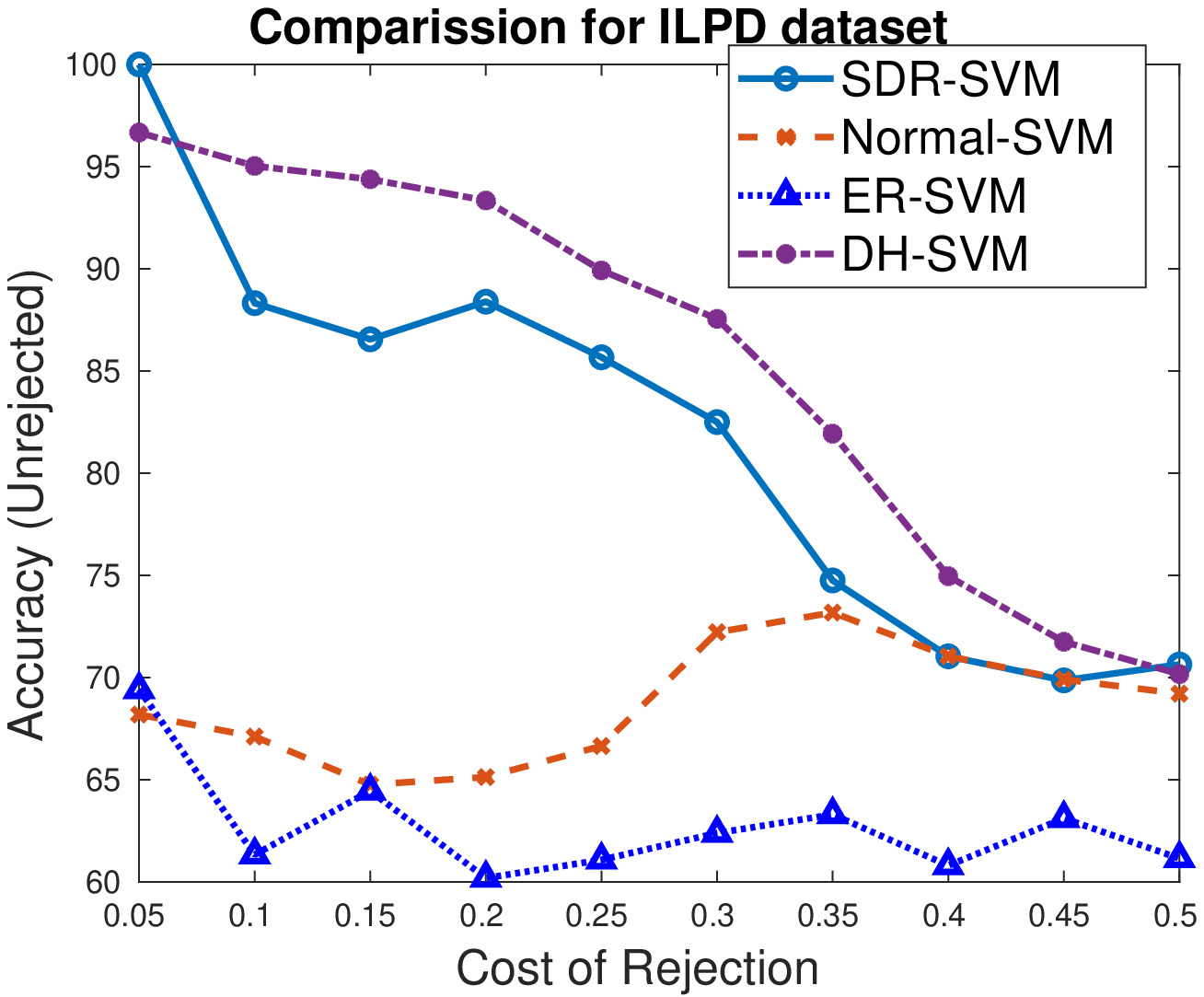} & \includegraphics[scale=0.3]{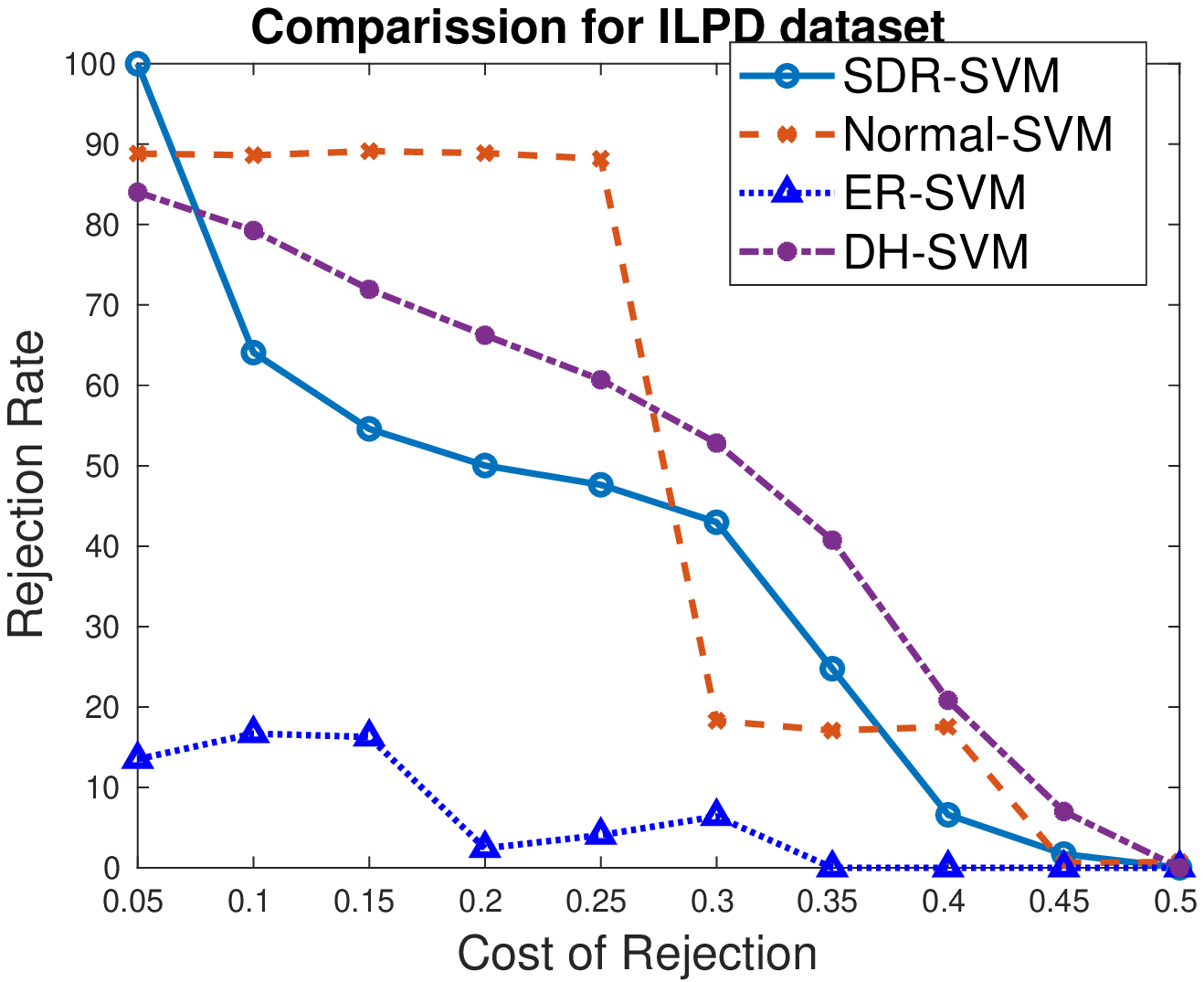} \\
\includegraphics[scale=0.3]{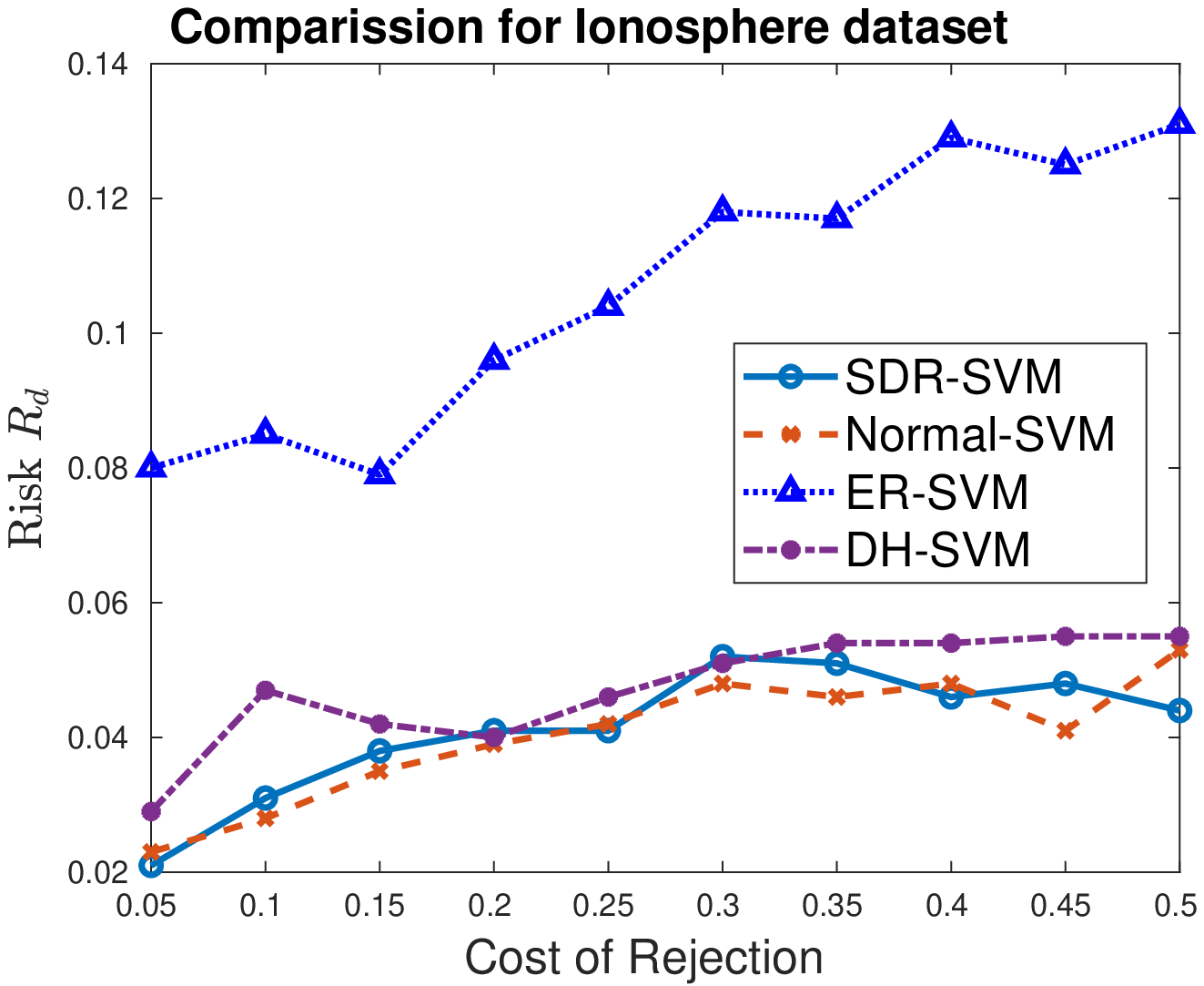}&
\includegraphics[scale=0.3]{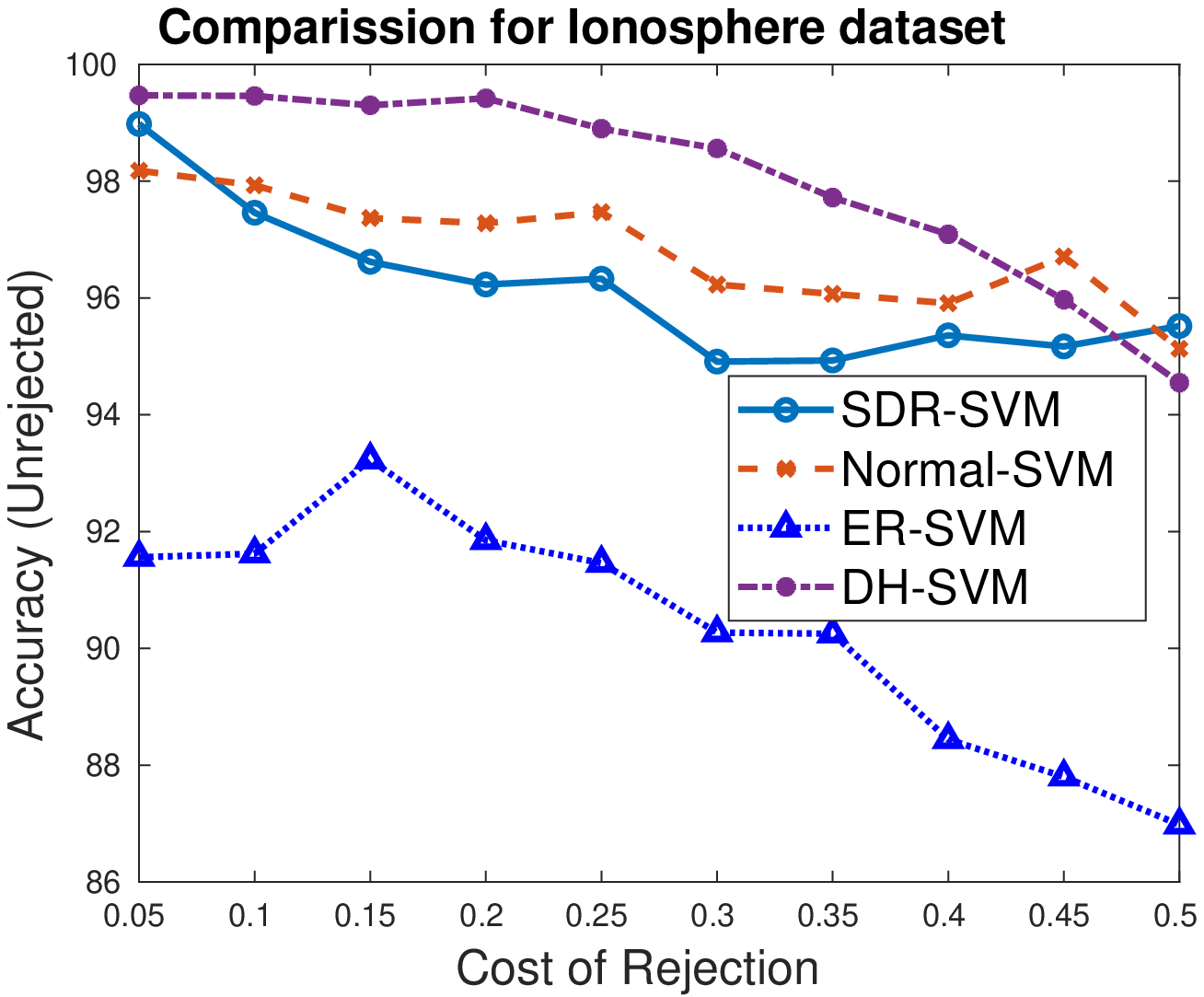} & \includegraphics[scale=0.3]{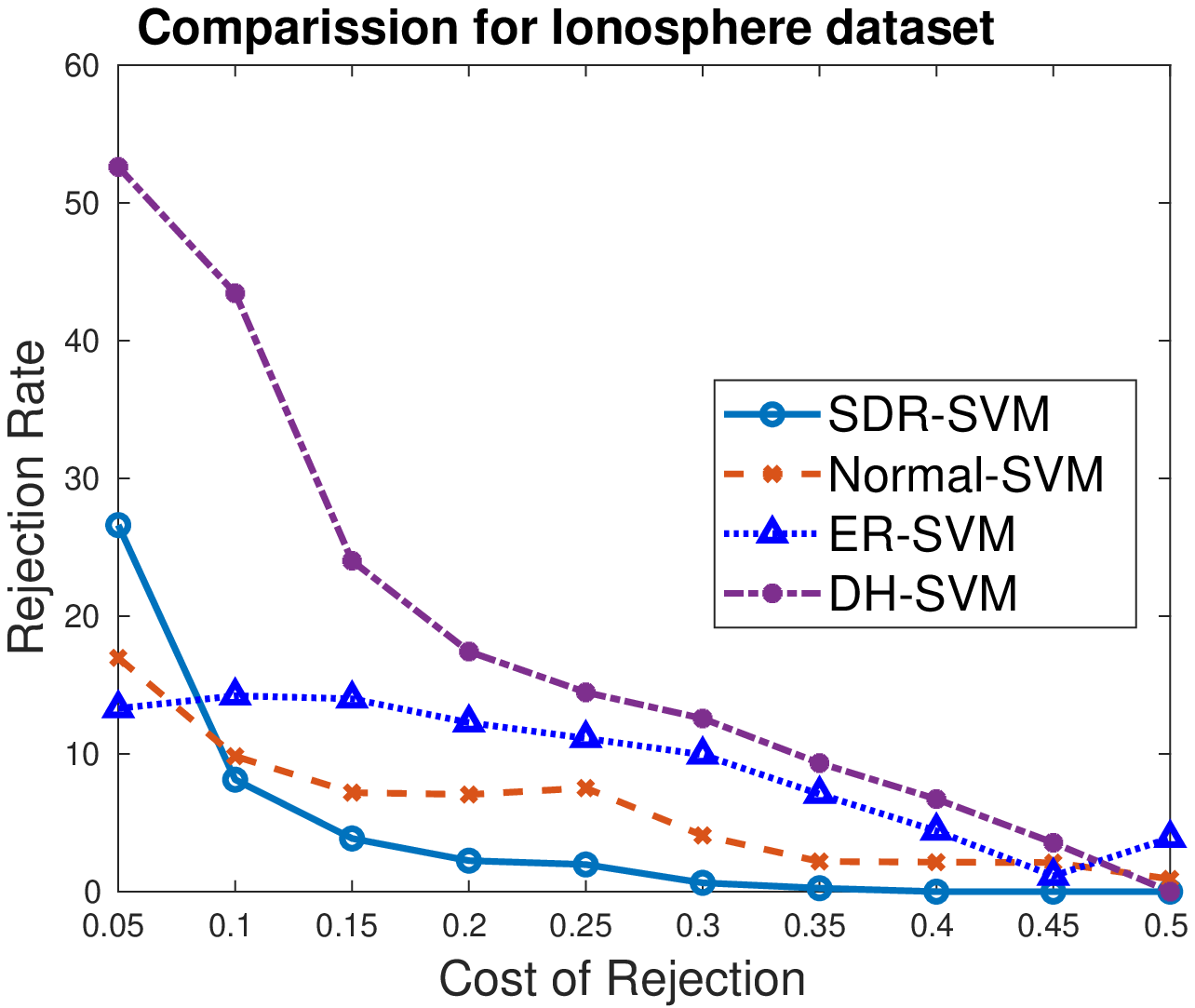}\\
\includegraphics[scale=0.3]{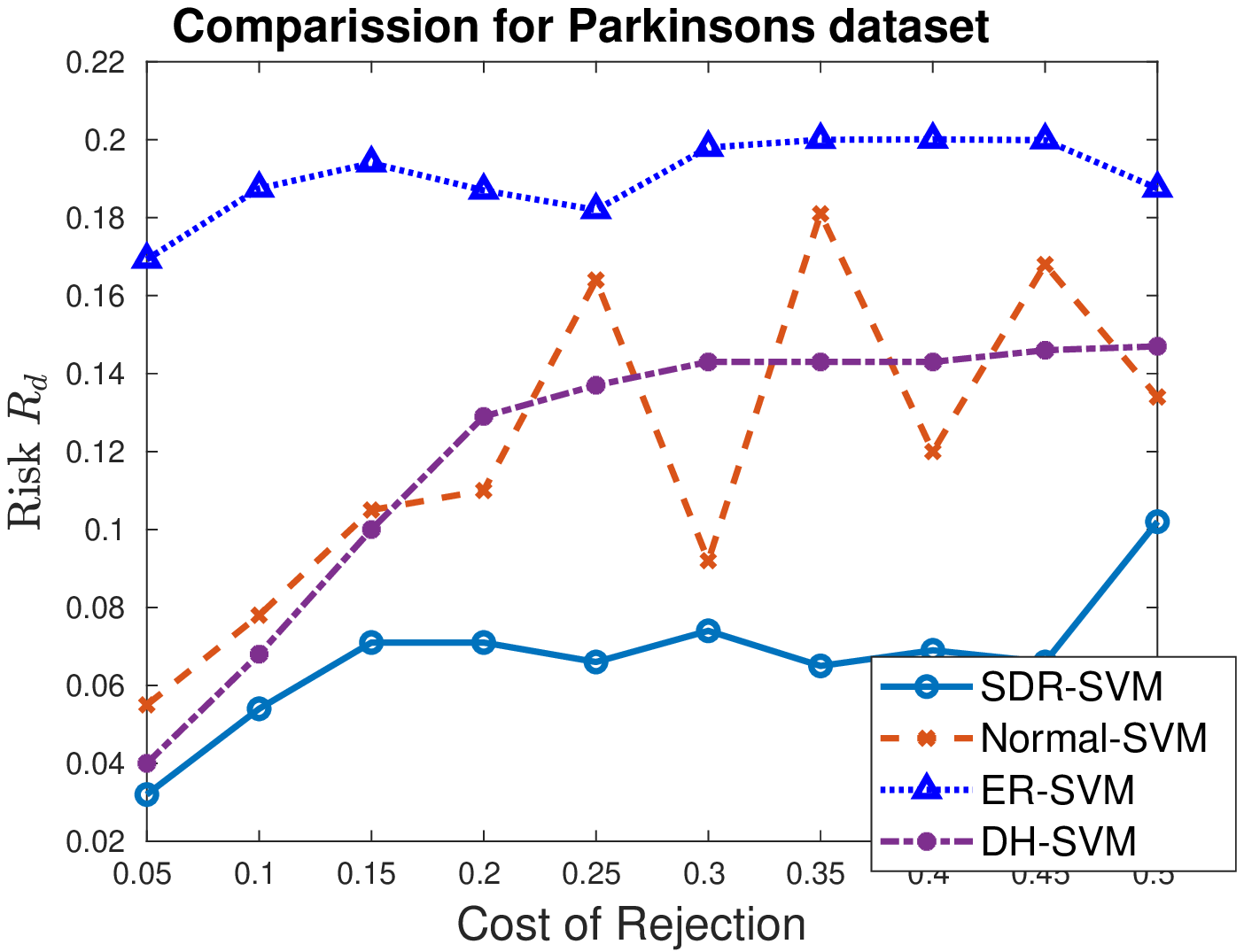}&
\includegraphics[scale=0.3]{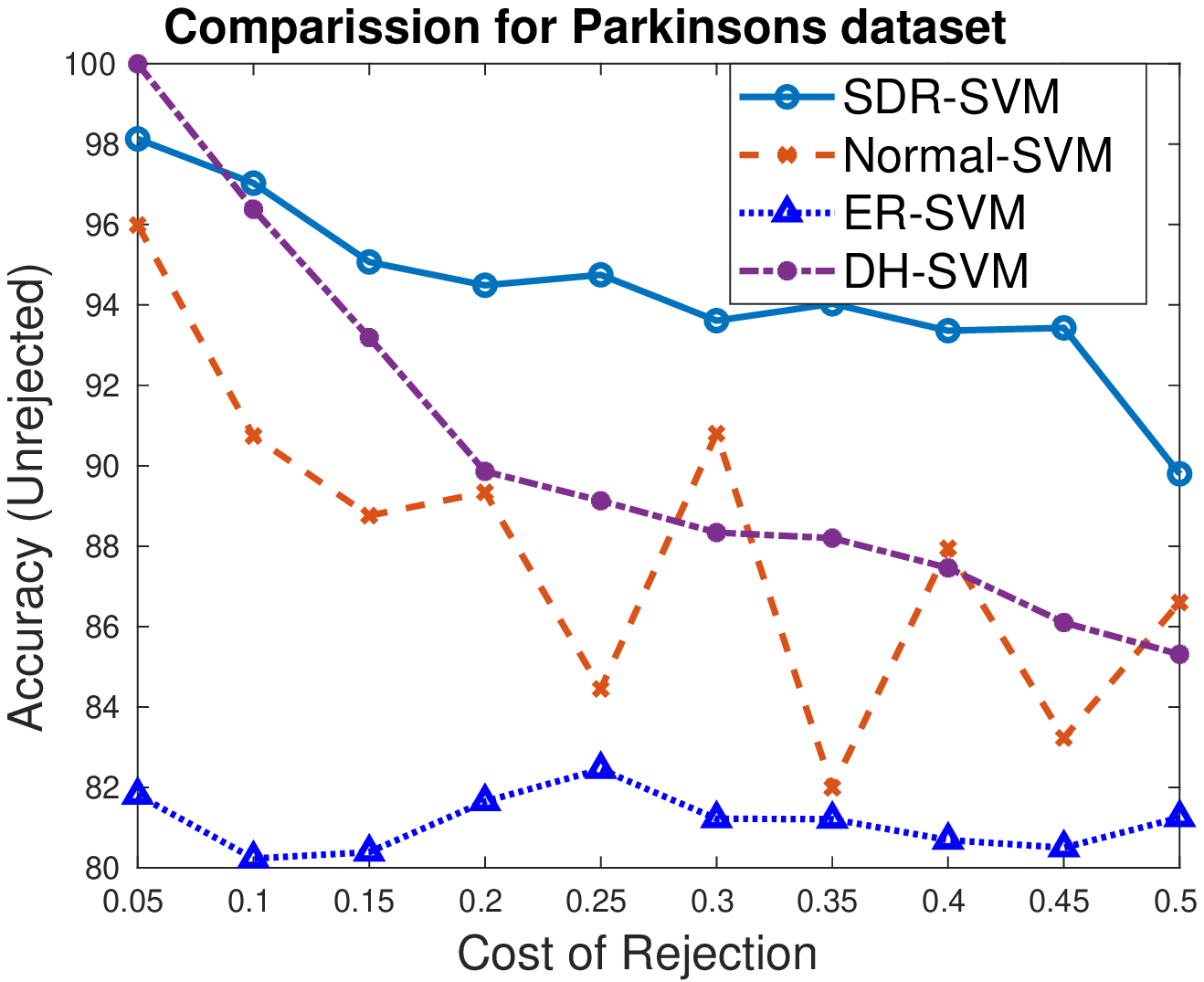} & \includegraphics[scale=0.3]{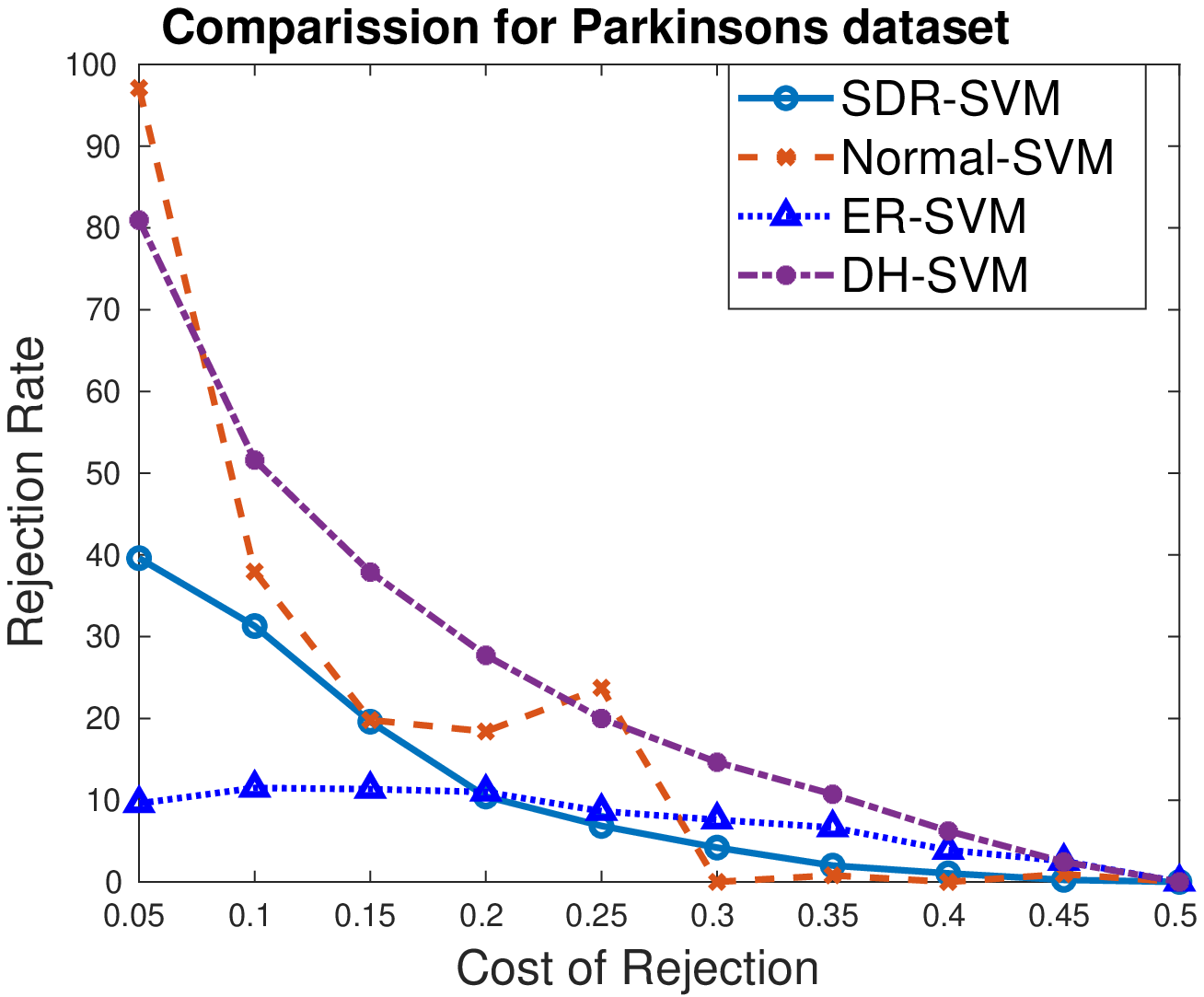} \\
\includegraphics[scale=0.3]{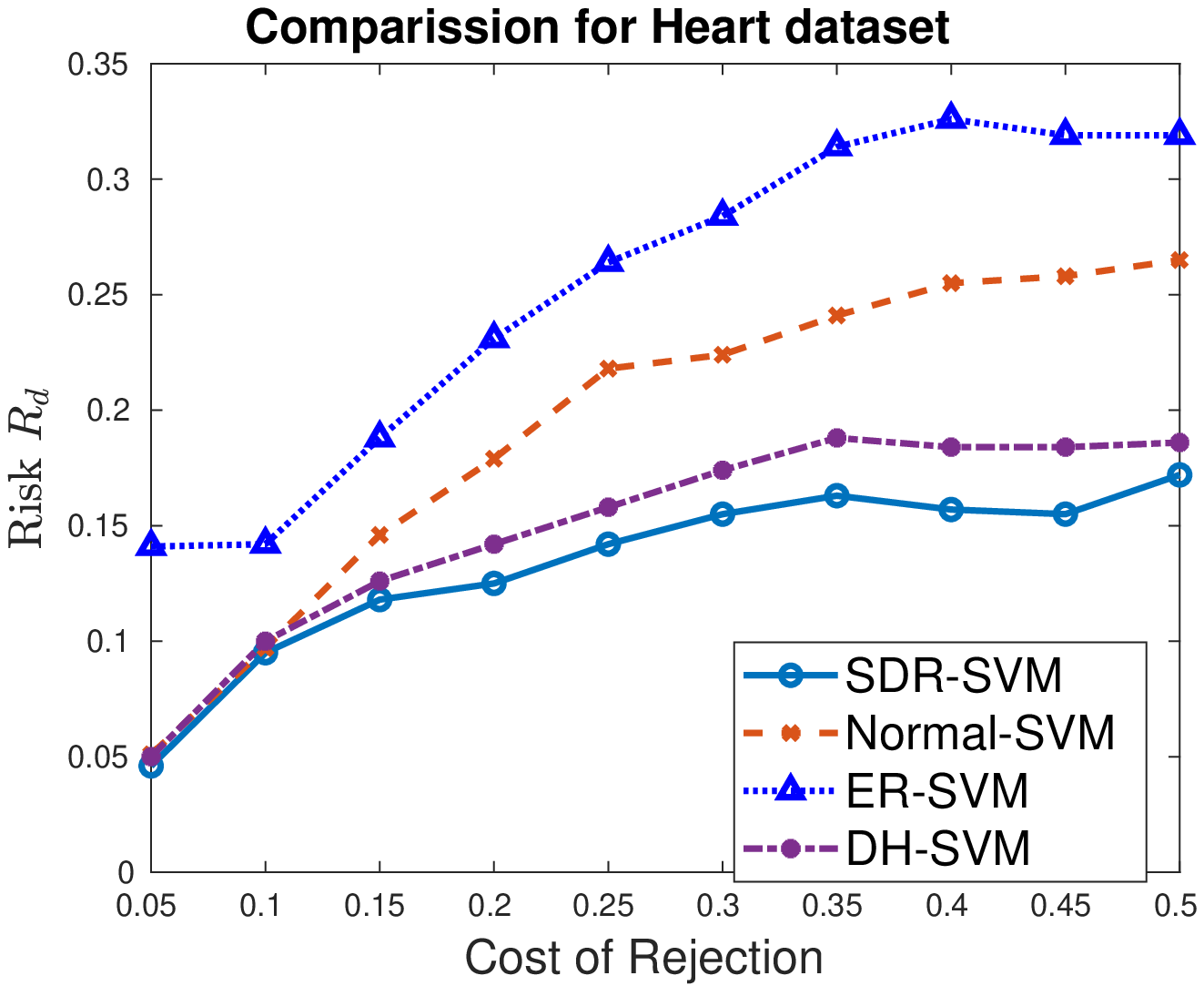}&
\includegraphics[scale=0.3]{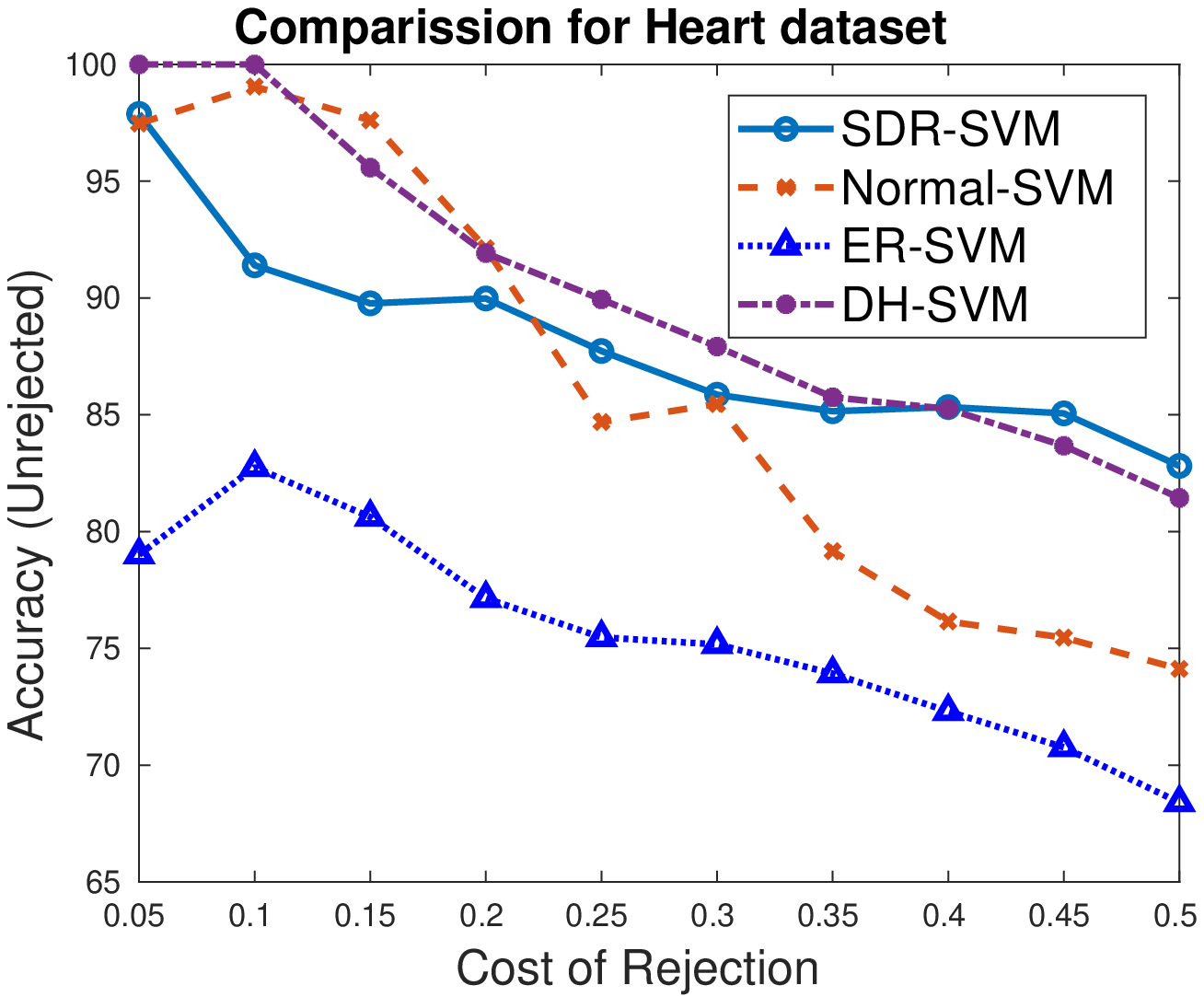} & \includegraphics[scale=0.3]{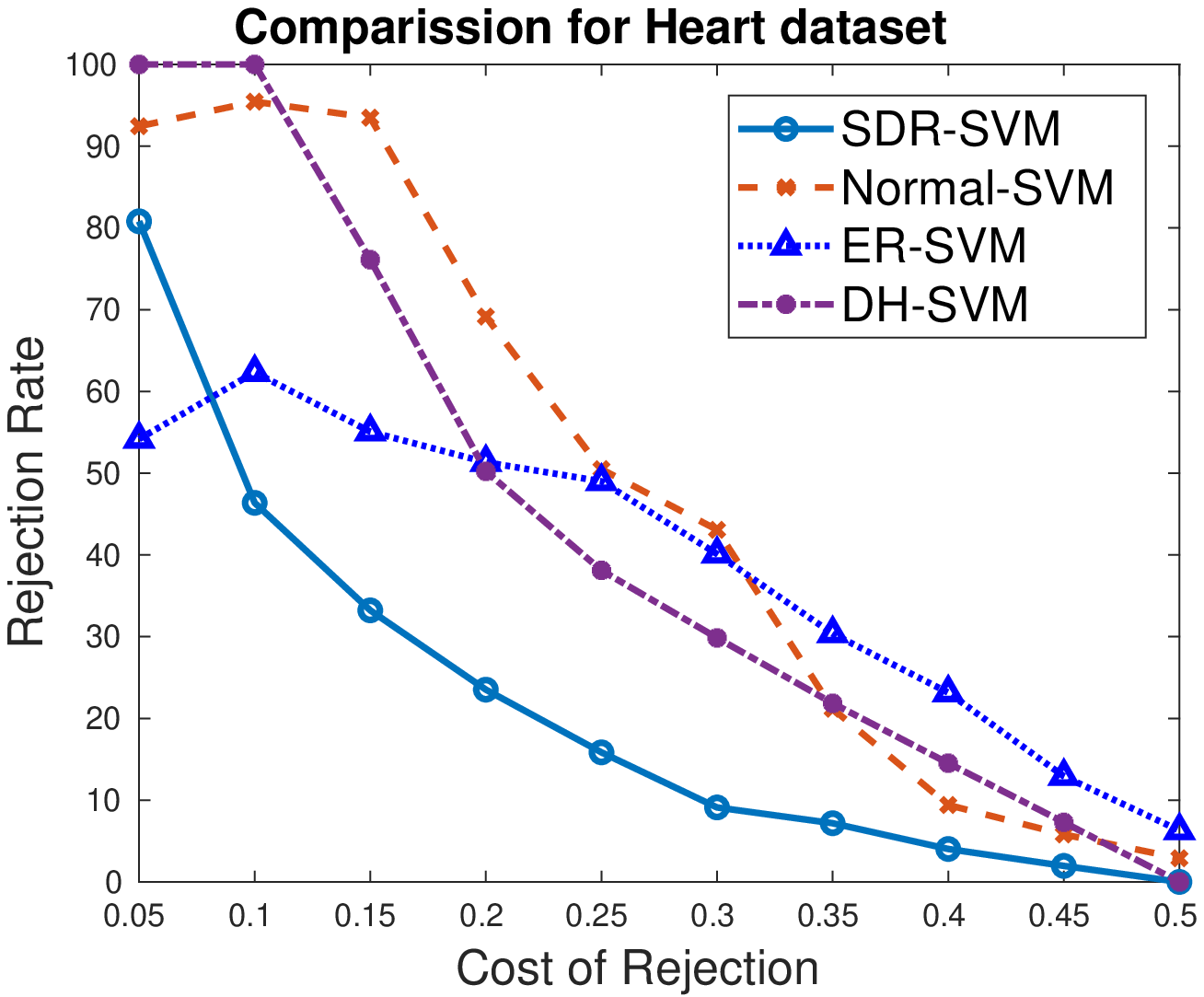} \\
\includegraphics[scale=0.3]{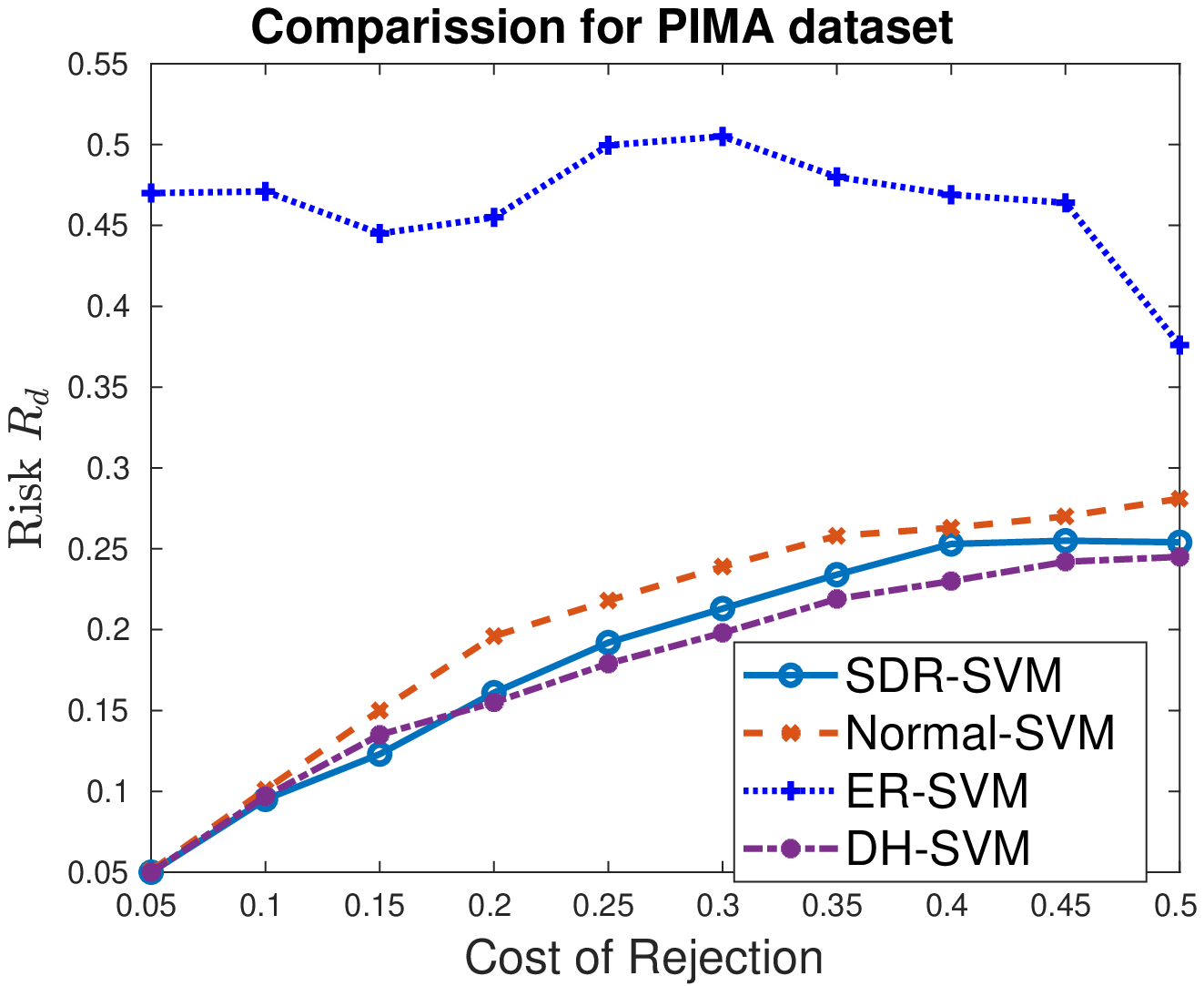}&
\includegraphics[scale=0.3]{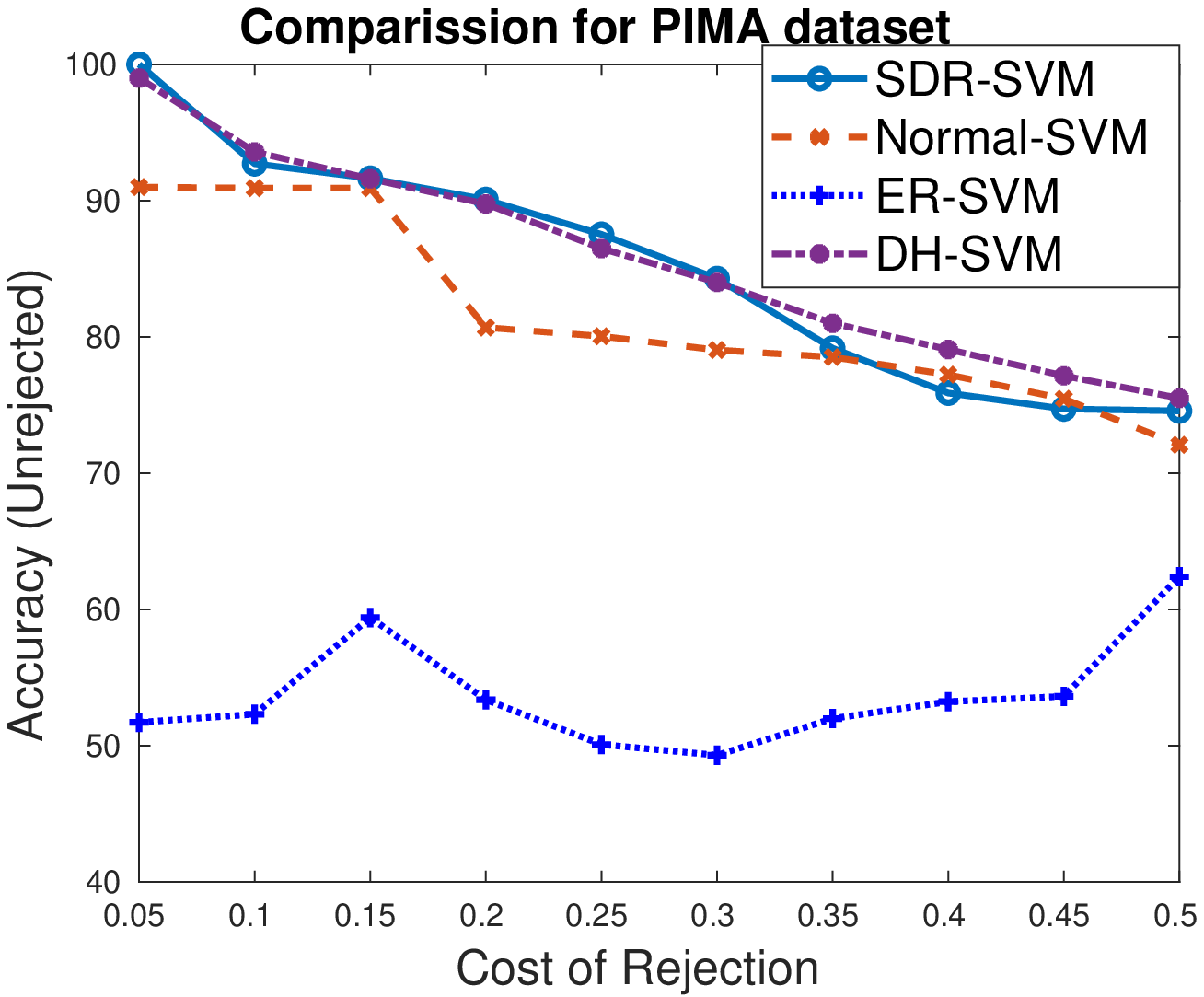} & \includegraphics[scale=0.3]{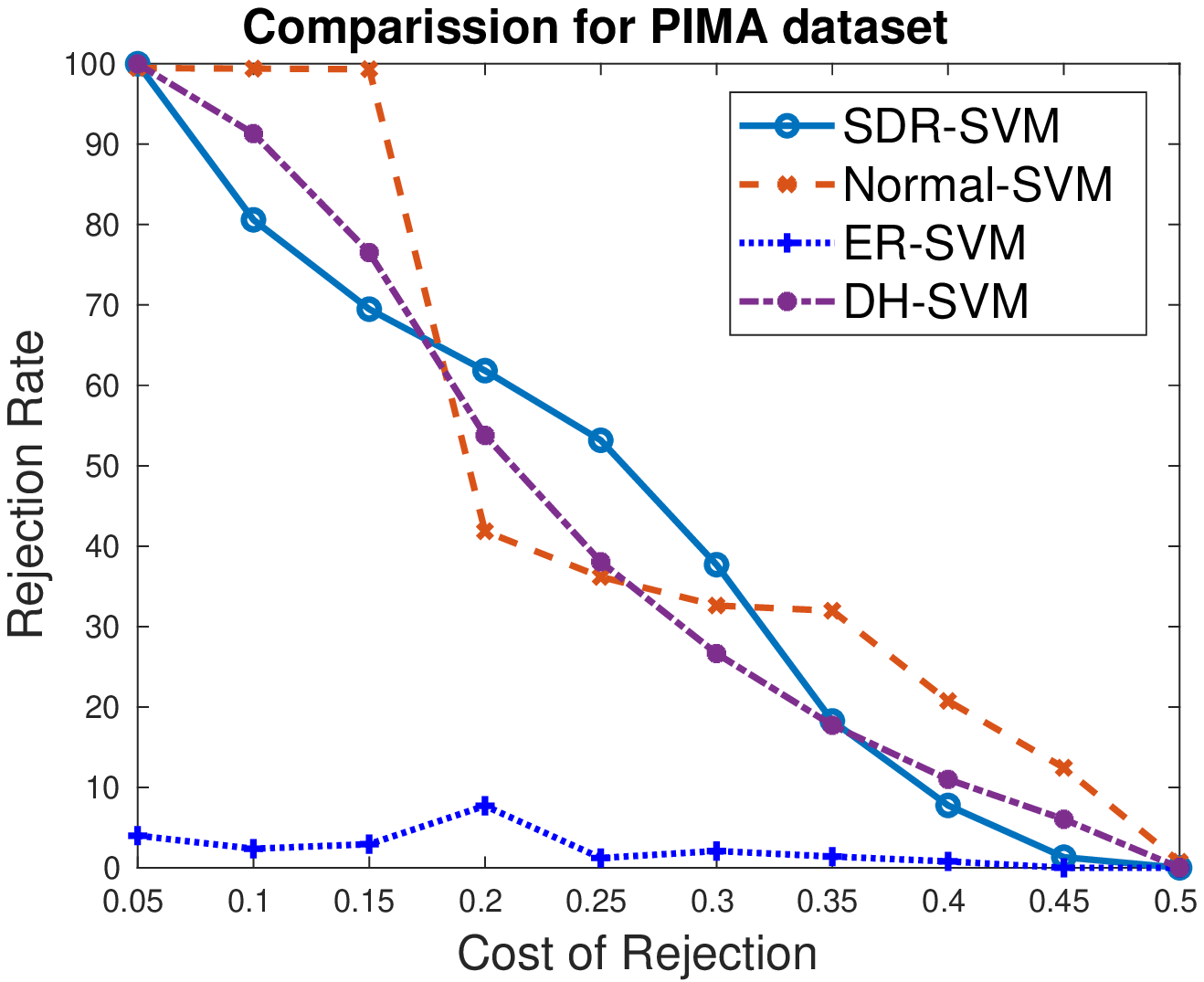} 
\end{tabular}
\caption{Comparison Plots for Different Datasets. Column 1 shows the risk $R_d$, column 2 shows accuracy on un-rejected examples, column 3 shows the rejection rate.}
\label{Fig:Comparisons}
\end{center}
\end{figure*}

\subsection{Experimental Setup}
In the proposed approach, to solve linear programming problem in each iteration, we have used CVXOPT package in python language \cite{cvxopt}. In our experiments, we apply a Gaussian kernel $\K(\xx_{i}, \xx_{j}) = \exp( -\gamma {\| \xx_{i} - \xx_{j} \|}^2)$ for nonlinear problems. In all the experiments, we set $\mu$ = 1. Regularization parameter $\lambda$ and kernel parameter $\gamma$ are chosen using 10-fold cross validation. 

We compare the performance of the proposed approach (SDR-SVM) with 3 other approaches as follows. The first approach is standard SVM based reject option classifier. In that approach, we first learn a learning decision boundary using SVM and then set the width of rejection region by cross-validation such that empirical risk under $L_{d,\rho}$ is minimized. We use this approach as a proxy for the approach proposed in Bartlett and Wegkamp \shortcite{Bartlett:2008}. Again, parameters of SVM ($C$ and $\gamma$) are learnt using 10-fold cross-validation. The second approach is the SVM with embedded reject option (ER-SVM) \cite{Fumera2002}. We used the code for this approach available online \cite{Fumera-Code}. We also compare our approach with Double hinge SVM (DH-SVM) based reject option classifier \cite{Grandvalet2008}. 

\subsection{Simulation Results}
We report the experimental results for different values of $d\in [0.05, 0.5]$ with the step size of 0.05. For every value of $d$, we find the cross-validation risk (under $L_{d,\rho}$), \% rejection rate (RR), \% accuracy on the un-rejected examples (Acc). We also report the average number of support vectors (the corresponding ${\alpha}_{i}\geq 10^{-6}$). The results provided here are based on 10 repetitions of 10-fold cross-validation (CV). 

Now we discuss the experimental results. Figure~\ref{Fig:Comparisons} shows the comparison plots for different datasets. We observe the following.

\begin{enumerate} 
\item {\bf Average Cross Validation Risk $\sR_d$: }We see that SDR-SVM performs better than ER-SVM with huge gaps in terms of the average cross validation risk ($\hat{\sR}_d$) for all datasets and for all values of $d$. For Parkinsons and Heart datasets, SDR-SVM has smaller $\hat{\sR}_d$ risk (for all values of $d$) compared to DH-SVM. For ILPD, Ionosphere and PIMA datasets, $\hat{\sR}_d$ risk of SDR-SVM is comparable to DH-SVM. SDR-SVM performs better than Normal-SVM based approach on Parkinsons, Heart, ILPD and PIMA datasets. For Ionosphere dataset, SDR-SVM performs comparable to Normal-SVM based approach. 
\item {\bf Rejection Rate: } We observe that for Inosphere, Heart and Parkinsons datasets, rejection rate of SDR-SVM is much smaller compared to other approaches except for smaller values of $d$ (0.05 and 0.1). For PIMA and ILPD datasets, the rejection rates of SDR-SVM are comparable to DH-SVM. The rejection rates for these two datasets are comparatively higher for all values of $d$. Possible reason for that could be high overlap between the two class regions. 
\item {\bf Performance on Unrejected Examples: }We see that SDR-SVM also gives good classification accuracy on unrejected examples. It always gives better accuracy compared ER-SVM. As compared to normal SVM based approach, SDR-SVM does always better on ILDP and Parkinsons datasets. For rest of the datasets, SDR-SVM gives comparable accuracy to normal SVM based method on unrejected examples. Compared to double hinge SVM, SDR-SVM does comparable to DH-SVM.
\end{enumerate}
Thus, overall SDR-SVM learns reject option classifiers which attain smaller $\hat{\sR}_{d}$ risk. It achieves this goal by simultaneously minimizing the rejection rate and mis-classification rate on unrejected examples.
\subsection{Sparseness Results}
We now show that SDR-SVM learns sparse reject option classifiers. As discussed, by sparseness we mean that the resulting classifier can be represented as a linear combination of a very small fraction of training points. 
Sparseness results for SDR-SVM are shown in Figure~\ref{fig:sparseness}.

\begin{figure}[t]
\centering
\begin{tabular}{cc}
 \includegraphics[scale=0.28]{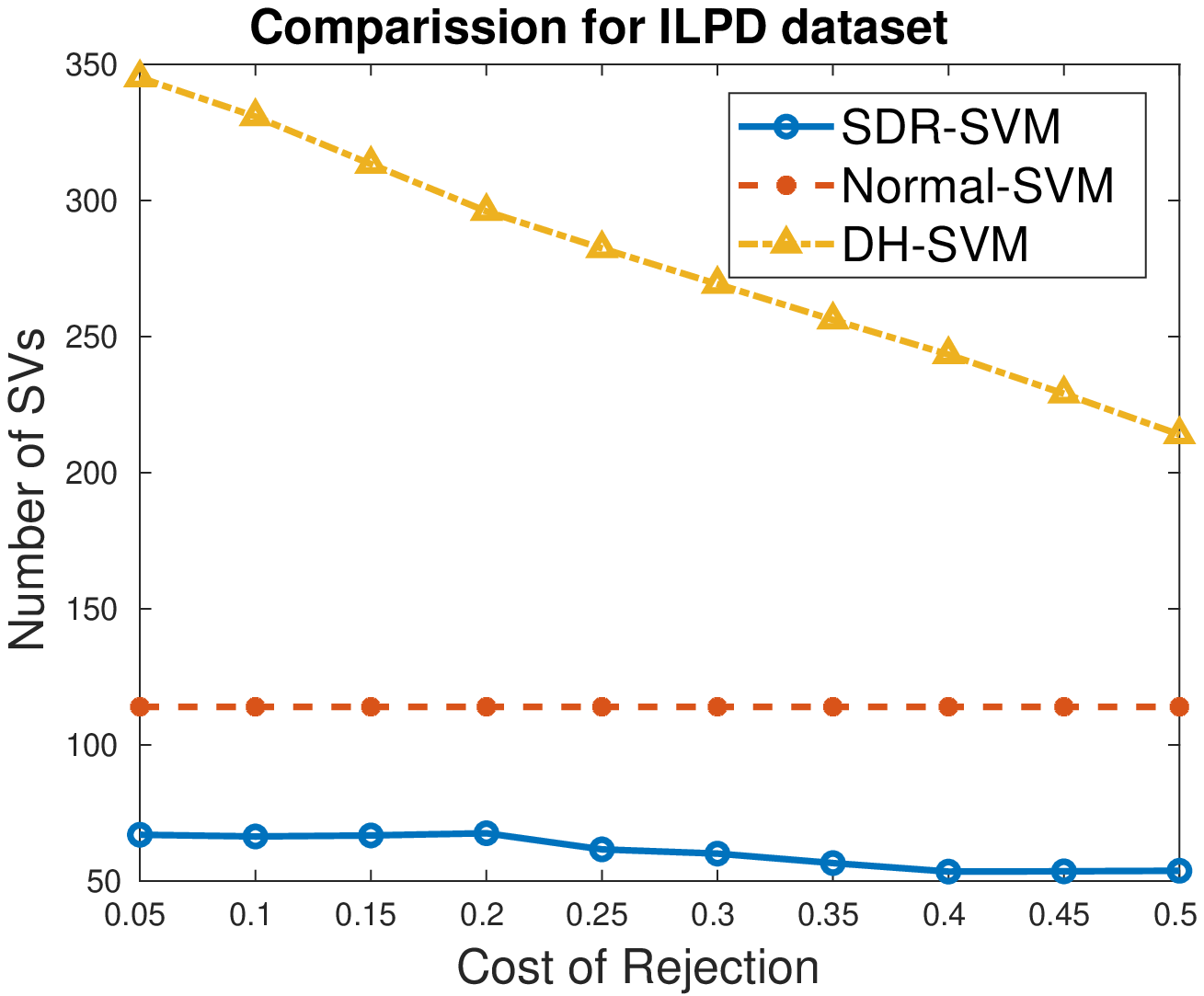} &
 \includegraphics[scale=0.28]{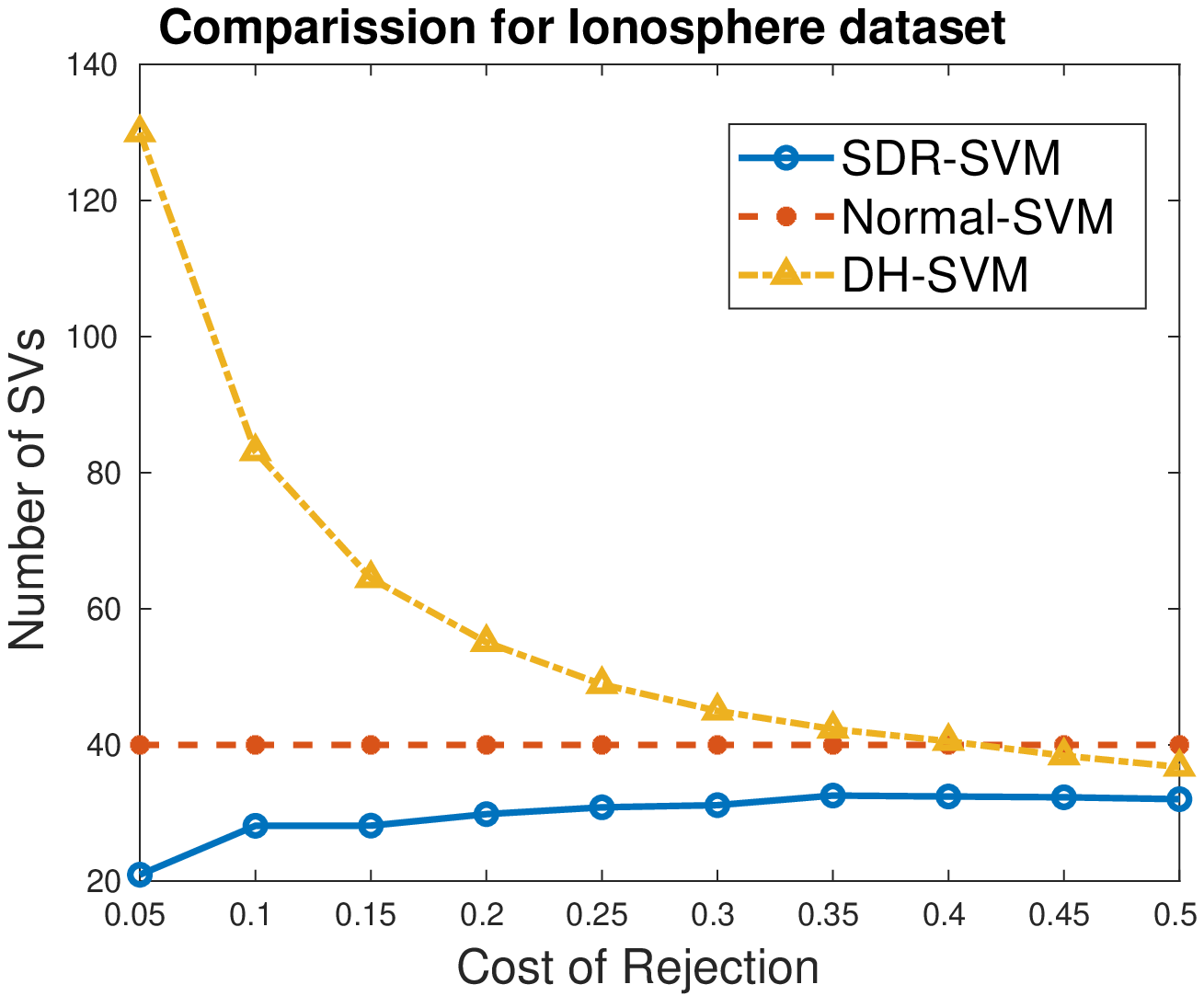} \\
 \includegraphics[scale=0.28]{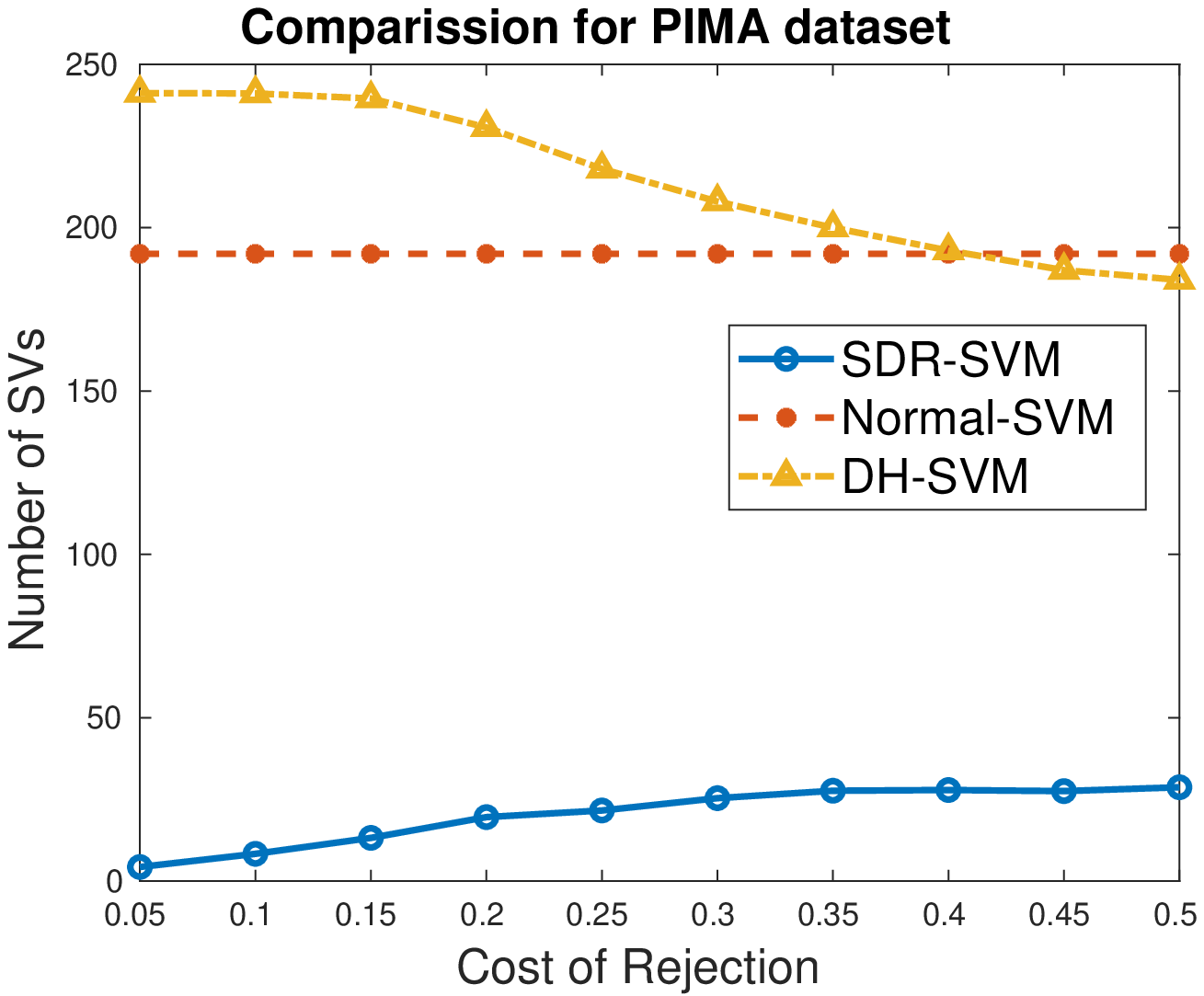}&
\end{tabular}
\caption{Sparseness Comparison of SDR-SVM with DH-SVM and Normal-SVM}
\label{fig:sparseness}
\end{figure}

We see that for ILPD, Ionosphere and PIMA datasets, SDR-SVM outputs classifiers which are much sparser compared to DH-SVM and Normal-SVM based approaches. ER-SVM does not have obvious representation for the classifier as a linear combination of training examples. 

\subsection{Experiments with Noisy Data}
$L_{dr,\rho}$ is generalization of ramp loss function for the reject option classification. For normal binary classification problem, ramp loss function is shown robust against label noise \cite{Ghosh:2015}. Motivated by the above fact, we did experiments to test the robustness of $L_{dr,\rho}$ against uniform label noise (with noise rates of $10\%, 20\%, 30\%$). Figure~\ref{fig:noise}.
\begin{figure}[t]
\begin{tabular}{cc}
\includegraphics[scale=0.28]{iono_risk.eps}&
\includegraphics[scale=0.28]{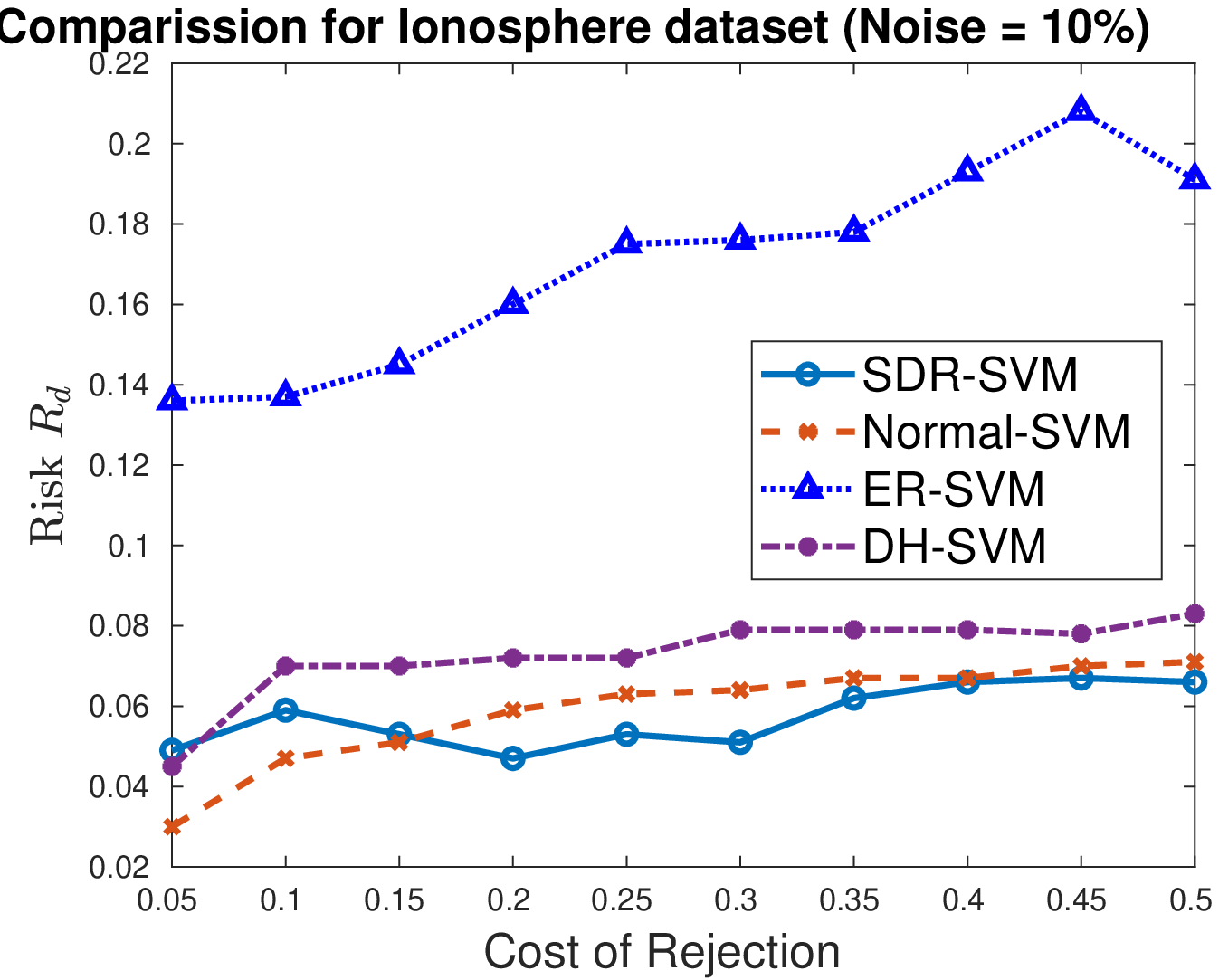}\\
\includegraphics[scale=0.27]{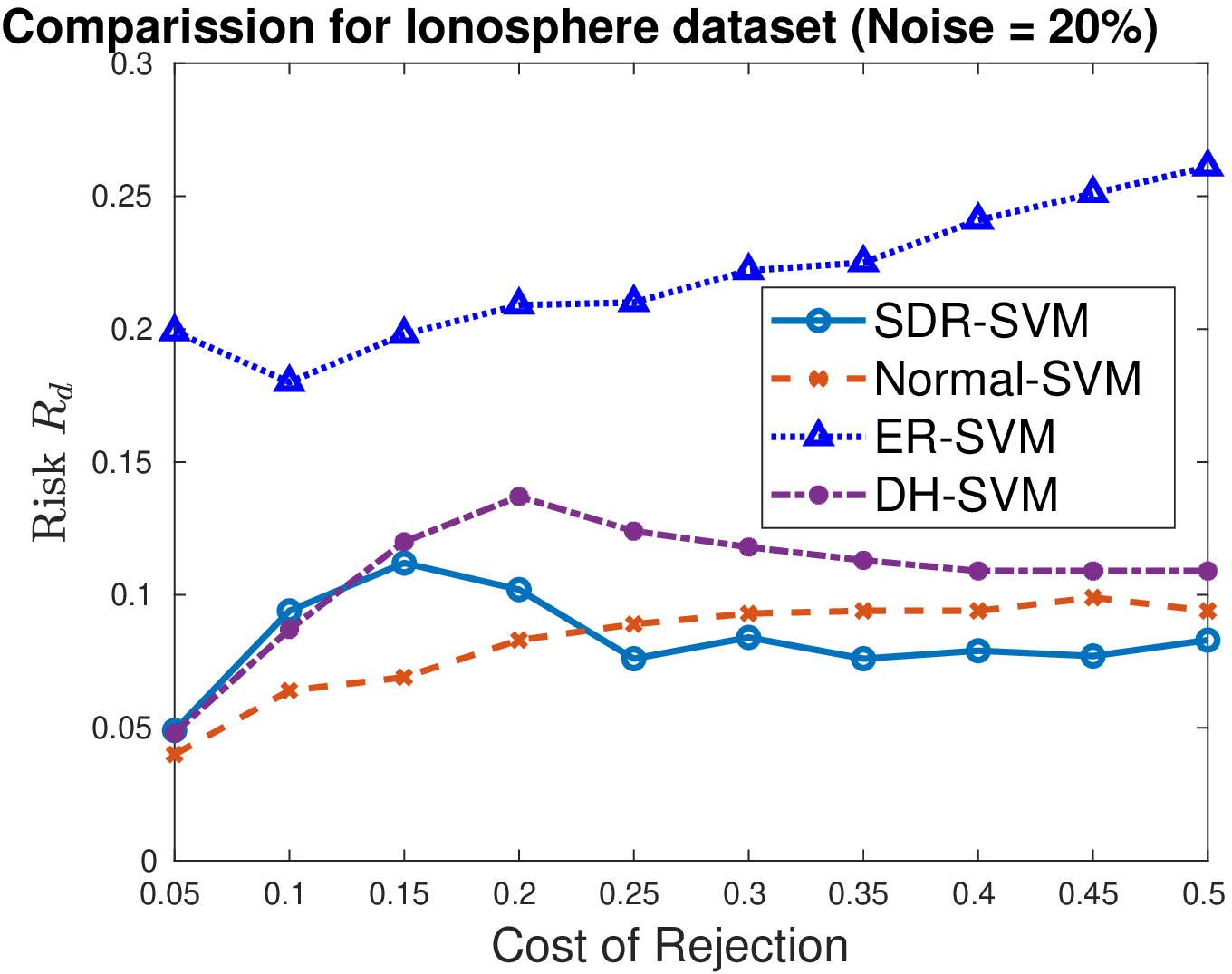}&
\includegraphics[scale=0.27]{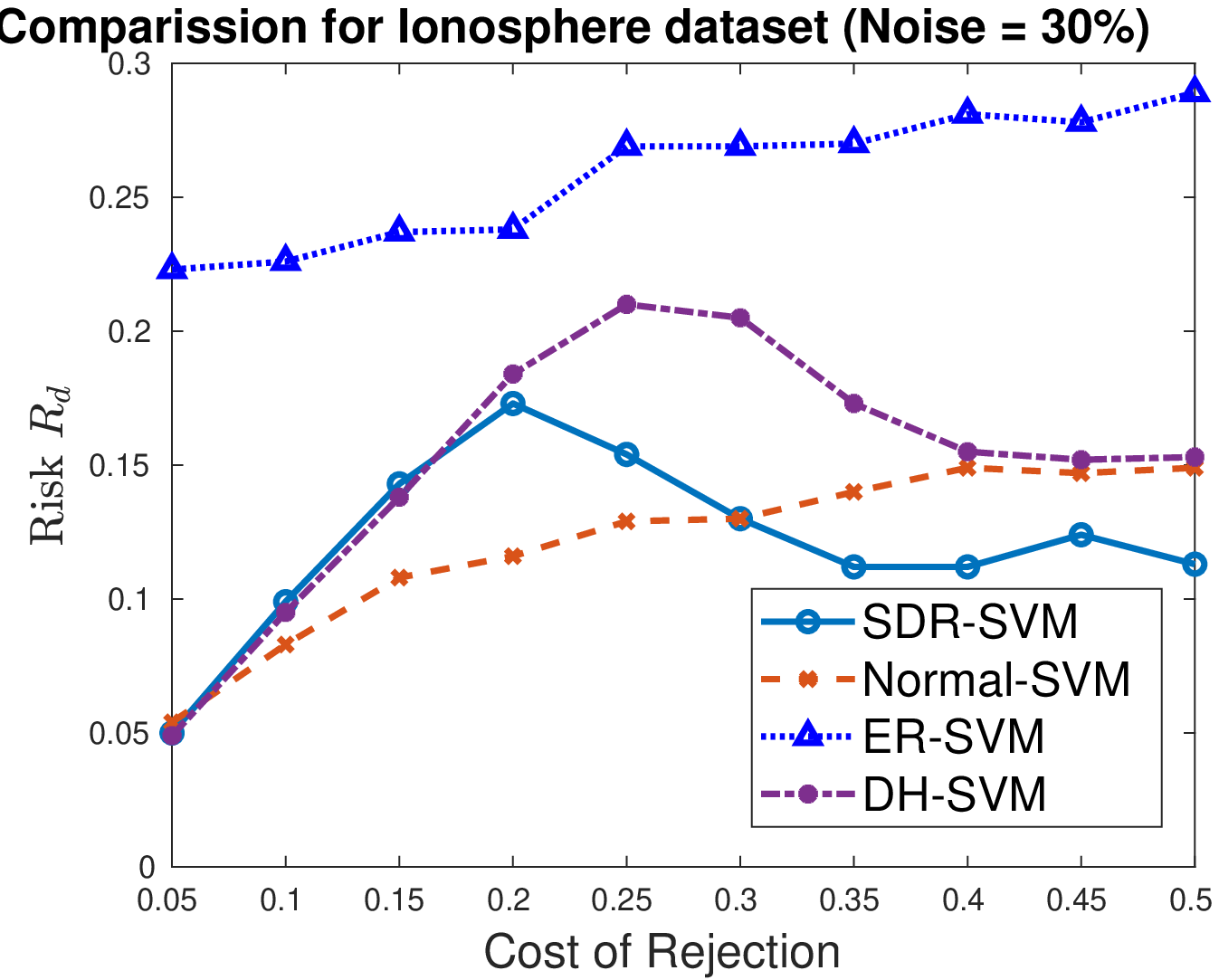}
\end{tabular}
\caption{Comparison Results in presence of uniform Label Noise}
\label{fig:noise}
\end{figure}
We observe the following.
\begin{enumerate}
\item We observe that with 10\% noise rate, increment in the risk for SDR-SVM is not significant. As we increase the noise rate, model in reject option classification confuses more for classifying the examples, therefore model tries to put more examples in rejection region for smaller values of $d$. Which leads to increase in width of rejection region. Thus, for smaller values of $d$, risk is dominated by rejection cost for proposed approach. But as we increase $d$, cost of rejection also increases and model in label noise will force examples to classify to one of the label.
With increasing noise rate, SDR-SVM remains robust for higher values of $d$.
\item Compared to ER-SVM, SDR-SVM does significantly better for all values of $d$ and for all noise rates.
\item For large values of $d$, SDR-SVM performs better than DH-SVM and noraml SVM in presence of label noise. 
\end{enumerate}

\section{Conclusions}\label{sec:conclusions}
In this paper, we proposed sparse approach for learning reject option classifier using double ramp loss. We propose a DC programming based approach for minimizing the regularized risk. The approach solves successive linear programs to learn the classifier. Our approach also learns nonlinear classifier by using appropriate kernel function. Further, we have shown the Fisher consistency of double ramp loss $L_{dr,\rho}$. We upper bound the excess risk of $L_d$ in terms of excess risk of $L_{dr}$. We then derive generalization bounds for SDR-SVM. We showed experimentally that the proposed approach does better compared to the other approaches for reject option classification and learns sparse classifiers. We also experimental evidences to show robustness of SDR-SVM against the label noise.

\bibliography{ijcai18}
\bibliographystyle{aaai}

\appendix
\section{Proof of Theorem~1} 
Generalized Bayes classifier in the context of reject option classifier is defined as follows.
\begin{equation}\label{gbd}
f_d^*(\xx) = 1.\I_{\{\eta(\xx) > 1-d\}}
+0.\I_{\{d\leq \eta(\xx) \leq 1-d\}}
-1.I_{\{\eta(\xx) < d\}}
\end{equation}

$\sR_{dr}(f,\rho)=\E_{\xx} \big[ \E_{y|\xx}[L_{dr}(yf(\xx),\rho)] \big]$. Let $r_{\eta}(f(\xx))= \E_{y|\xx}[L_{dr}(yf(\xx),\rho)]$ and $z=f(\xx)$. Thus,
$r_{\eta}(z)={\eta}L_{dr}(z,\rho) + (1-\eta) L_{dr}(-z,\rho)$. The function $ r_{\eta}(z) $ can take different values in different cases as in eq.~(\ref{cond-risk}).
\begin{table*}[t]
\begin{equation}
\label{cond-risk}
r_{\eta}(z)=
\begin{cases}
{\eta}(1+\mu)               & \text{if}\enspace z\leq  \mrmo \\
      {\eta}(1+\mu)+(1-\eta)(\mu+z+\rho){\frac{d}{\mu}}        					& \text{if}\enspace \mrmo\leq z\leq \mrms \\
      {\eta}d(1+\mu)+{\eta}(\mu-z-\rho){\frac{(1-d)}{\mu}}+(1-\eta)(\mu+z+\rho){\frac{d}{\mu}}& \text{if}\enspace \mrms \leq z\leq \mrps \\
      {\eta}d(1+\mu)+{\eta}(\mu-z-\rho){\frac{(1-d)}{\mu}}+(1-\eta)(1+\mu)d     & \text{if}\enspace \mrps\leq z\leq \mrpo \\
      {\eta}d(1+\mu)+(1-\eta)(1+\mu)d = d(1+\mu)    									& \text{if}\enspace \mrpo\leq z\leq \prmo \\
      {\eta}d(1+\mu)+(1-\eta)(1+\mu)d+(1-\eta)(z\mrpo)\frac{(1-d)}{\mu}       & \text{if}\enspace \prmo\leq z\leq \prms \\
	  {\eta}(\prpo-z)\frac{d}{\mu}+(1-\eta)(1+\mu)d+(1-\eta)(z\mrpo)\frac{(1-d)}{\mu}& \text{if}\enspace \prms\leq z\leq\prps \\ 
      {\eta}(\prpo-z)\frac{d}{\mu}+(1-\eta)(1+\mu)					& \text{if}\enspace \prps\leq z\leq \prpo \\
      (1-\eta)(1+\mu)					& \text{if}\enspace z\geq \prpo 
\end{cases} 
\end{equation}
\end{table*}
From eq~(\ref{cond-risk}), we can say that 
\begin{equation*}
\frac{\partial r_{\eta}(z)}{\partial z} = 
\begin{cases}
	  =0                 & \text{if}\enspace z\leq  \mrmo \\
      > 0         					& \text{if}\enspace \mrmo\leq z\leq \mrms \\
      >0\text{ or }<0 & \text{if}\enspace \mrms \leq z\leq \mrps \\
      <0     & \text{if}\enspace \mrps\leq z\leq \mrpo \\
      =0							& \text{if}\enspace \mrpo\leq z\leq \prmo \\
      >0       & \text{if}\enspace \prmo\leq z\leq \prms \\
	  >0\text{ or }<0				& \text{if}\enspace \prms\leq z\leq\prps \\ 
      <0		& \text{if}\enspace \prps\leq z\leq \prpo \\
      =0					& \text{if}\enspace z\geq \prpo \\
\end{cases}
\end{equation*}
Minimum of any graph will occur when the slope of the graph is equal to zero and goes from negative to positive  or it can happen at any boundary point. Thus, minimum of $r_{\eta}(z)$ can occur in only following 3 intervals: $(z\leq  \mrmo)$, $(\mrpo\leq z\leq \prmo),\enspace (z\geq \prpo)$. Thus,   
\begin{equation*}
\begin{aligned}
r^{*}_{\eta}(z)
=&\min \big(r_{\eta}(z)\I_{\{ z\leq  \mrmo \}},r_{\eta}(z)\I_{\{ |z|\leq \prmo \}},r_{\eta}(z)\I_{\{ z\geq \prpo \}} \big) \\
=& \min\big( {\eta}(1+\mu),\enspace d(1+\mu),\enspace (1-\eta)(1+\mu)\big)
\end{aligned}
\end{equation*}
Now, if $\eta < d$, then $\eta(1+\mu) < d(1+\mu)$.We know that $d \leq 0.5$ therefore if $\eta < d$ then $\eta < 0.5$. If $\eta < 0.5$ then $\eta < 1 - \eta$ which implies that $\eta(1 + \mu) < (1 - \eta)( 1 + \mu ) $ therefore for $\eta < d$, $r^{*}_{\eta}(z) = \eta (1+\mu)$. If $\second$ then $d \leq \eta$ which implies that $d(1 + \mu) \leq \eta( 1 + \mu )$. If $\eta \leq (1-d)$ then $d \leq (1 - \eta)$ which implies $d(1 + \mu) \leq (1-\eta)(1+\mu)$ therefore for $\second$, $r^{*}_{\eta}(z) = d (1+\mu)$. If $\third$ then $d > (1 - \eta)$ which implies that $d(1+\mu) > (1-\eta)(1+\mu)$. We know that $d \leq 0.5$ therefore $\eta > 0.5$ which implies that $\eta > (1 - \eta)$ and $\eta(1 + \mu) > (1-\eta)(1+\mu)$ therefore for $\third$, $r^{*}_{\eta}(z) = (1 - \eta) (1+\mu)$. Combining above statements, 
\begin{equation}
r^{*}_{\eta}(f)=
\begin{cases}
	{\eta}(1+\mu)       	& \text{if}\enspace \first \\
    d(1+\mu)				& \text{if}\enspace \second \\
    (1-\eta)(1+\mu)			& \text{if}\enspace \third \\
\end{cases}
\end{equation}
and Bayes discriminant function for double ramp loss will be
\begin{equation}
f^{*}_{dr}(\xx)=
\begin{cases}
	-1  		& \text{if}\enspace \first \\
    0   		& \text{if}\enspace \second \\
    1			& \text{if}\enspace \third \\
\end{cases}
\end{equation}
which is same as $f^{*}_d(\xx)$. Therefore, $f_{d}^{*}(\xx)$  minimizes the risk $\sR_{dr}$.

\section{Proof of Proposition~2}
{\bf Part 1: } ${\xi}_{-1}(\eta) \leq {H_{-1}}(\eta) - H(\eta)$
\begin{equation*}
\begin{aligned}
H_{-1}(\eta) &= \inf_{z<-\rho}\;\;r_{\eta}(z)\\
&= \inf_{z<-\rho}\;\;{\eta}L_{dr}(z,\rho) + (1-\eta)L_{dr}(-z,\rho)
\end{aligned}
\end{equation*}
It can easily be seen that the graph of $r_{\eta}(z)$ is piece-wise linear. Therefore, infimum of $r_{\eta}(z)$ will occur only at the corners of graph. Thus, comparing the slopes of different linear functions, we get 
\begin{equation*}
\begin{aligned}
&H_{-1}(\eta) = f(-\rho - \mu){\I_{\{ \first \}}} +\min\big( f(-\rho), f(-\rho - \mu)\big) {\I_{\{ \eta \geq d \}}} \\
&= \eta (1 + \mu)\I_{\{ \first \}} + \min \big( \eta(1+\mu), {\eta}d(\mu - 1) + \eta + d \big)\I_{\{ \eta \geq d \}}
\end{aligned}
\end{equation*}
Also,
\begin{align*}
& \min \big( \eta(1+\mu), {\eta}d(\mu - 1) + \eta + d \big)\\
&= 
\begin{cases}
	\eta(1 + \mu)     & \text{if} \enspace \conoo	\\
    {\eta}d(\mu - 1) + \eta + d  & \text{if} \enspace \conos	
\end{cases}
\end{align*}

We now analyze different cases as follows.
\begin{enumerate}
\item $\first$:  ${\xi}_{-1}(\eta) - ({H_{-1}}(\eta) - H(\eta)) = 0$
\item $ \eta \geq d $ and $\mu \in (0,\frac{d(1 - \eta)}{\eta(1-d)})$: 
\begin{align*}
H_{-1}(\eta) - H(\eta)& = (\eta - d)(1 + \mu)\I_{\{ \second  \}} \\
&+ (2{\eta} - 1)(1 + \mu)\I_{\{ \third \}}
\end{align*}
We know that 
\begin{align*}
{\xi}_{-1}(\eta) &= \eta - {\xi}(\eta)\\
&= (\eta-d){\I_{\{ \second \}}}+ (2\eta-1){\I_{\{ \third \}}}
\end{align*}

Now we can easily see that 
\begin{equation*}
{\xi}_{-1}(\eta) - ({H_{-1}}(\eta) - H(\eta)) \leq 0
\end{equation*}
\item $ \eta \geq d $ and $\mu \in [\frac{d(1 - \eta)}{\eta(1-d)}, 1]$:
\begin{align*}
&H_{-1}(\eta) - H(\eta) = ({\eta}d(\mu - 1) + \eta -d{\mu}){\I_{\{ \second \}}} \\
&+ ({\eta}d(\mu - 1) + \eta + d - (1-\eta)(1+\mu)){\I_{\{ \third \}}}
\end{align*}
Now, 
\begin{align*}
&{\xi}_{-1}(\eta) - ({H_{-1}}(\eta) - H(\eta)) 
=d(1-\mu)(\eta-1)\I_{\{ \second \}} \\
&\;\;\;\;\;\;\;+ \big( (\eta-1)d + \mu(1-\eta-{\eta}d) \big) \I_{\{ \third \}} \\
&\leq  d(1-\mu)(\eta-1)\I_{\{ \second \}} + \big( (\eta-1)d \\
&\;\;\;\;\;\;\;+ \mu(d-{\eta}d)\big) \I_{\{ \third \}} \\
&\leq d(1-\mu)(\eta-1)\I_{\{ \second \}} +  d(1-\mu)(\eta-1)  \I_{\{ \third \}}\\
&\leq 0
\end{align*}
\end{enumerate}
Thus, $\forall \mu \in (0,1] \enspace \text{and } \forall \eta \in [0,1] $, we get,
\begin{equation*}
{\xi}_{-1}(\eta) \leq {H_{-1}}(\eta) - H(\eta)
\end{equation*}
{\bf Part 2: } ${\xi}_{r}(\eta) \leq {H_{r}}(\eta) - H(\eta)$ 
\begin{equation*}
\begin{aligned}
H_{r}(\eta) = \inf_{|z|\leq\rho}\;\;{\eta}L_{dr}(z,\rho) + (1-\eta)L_{dr}(-z,\rho)  
\end{aligned}
\end{equation*}
We use the piece-wise linear property of $r_{\eta}(z)$. $H_{r}(\eta)$ can be written as
\begin{equation*}
\begin{aligned}
H_{r}(\eta) &= \min \left\{ \eta d(\mu-1)+\eta+d, d(1+\mu)\right\} \I_{\{ \first \}}\\
&\;\;\;\;+ d(1 + \mu)\I_{\{ \second \}} \\
&\;\;\;\; + \min\left\lbrace d(1+\mu), (d{\mu}+1)(1-\eta) + {\eta}d\right\rbrace \I_{\{ \third \}}
\end{aligned}
\end{equation*}
For further analysis, we can divide $H_{r}(\eta)$ in two parts with respect to values of $\mu$ where minimum function changes value.
\begin{enumerate}
\item $\first$ and $\mu \in (0, \frac{\eta(1-d)}{d(1 - \eta)})$:
\begin{equation*}
\begin{aligned}
\xi_{r}(\eta) - (H_{r}(\eta) - H(\eta)) &= d - \eta -  ({\eta}d(\mu-1)+\eta+d)\\
&\;\;\;\;+ \eta(1+\mu)  \\
&= (1-d)\eta(\mu-1) \leq 0
\end{aligned}
\end{equation*}
\item $\first$ and $\mu \in [\frac{\eta(1-d)}{d(1 - \eta)},1]$:
\begin{equation*}
\begin{aligned}
\xi_{r}(\eta) - (H_{r}(\eta) - H(\eta))& = (d - \eta) - \big( d(1+\mu) - \eta(1+\mu)\big) \\
&= -\mu(d - \eta) \leq 0
\end{aligned}
\end{equation*}
\item $\second$:
\begin{equation*}
\begin{aligned}
\xi_{r}(\eta) - (H_{r}(\eta) - H(\eta)) &= 0 - (d(+\mu) - d(1+\mu))\leq 0
\end{aligned}
\end{equation*}
\item $\third$ and $\mu \in (0, \frac{(1-\eta)(1-d)}{{\eta}\mu})$:
\begin{equation*}
\begin{aligned}
\xi_{r}(\eta) - (H_{r}(\eta) - H(\eta)) &= (\eta + d -1) -  d(1+\mu) \\
&\;\;\;+ (1-\eta)(1+\mu) \\
&= -\mu(\eta + d -1)\\
&\leq 0
\end{aligned}
\end{equation*}
\item $\third$ and $\mu \in [\frac{(1-\eta)(1-d)}{{\eta}\mu}, 1]$:
\begin{equation*}
\begin{aligned}
\xi_{r}(\eta) - (H_{r}(\eta) - H(\eta)) &= (\eta + d - 1) - (d\mu+1)(1-\eta)\\
&\;\;\;- d\eta + (1-\eta)(1+\mu) \\
&= (1-d)(1-\mu)(\eta - 1) \leq 0
\end{aligned}
\end{equation*}
\end{enumerate}
Thus, $\forall \mu \in (0,1]$ and $\forall \eta \in [0,1]$, we get
\begin{equation*}
{\xi}_{r}(\eta) \leq {H_{r}}(\eta) - H(\eta)
\end{equation*}
{\bf Part 3: } ${\xi}_{1}(\eta) \leq {H_{1}}(\eta) - H(\eta)$ 
$H_{1}(\eta)$ is expressed as
\begin{equation*}
\begin{aligned}
H_{1}(\eta) &= \inf_{z>\rho}\;\;{\eta}L_{dr}(z,\rho) + (1-\eta)L_{dr}(-z,\rho)
\end{aligned}
\end{equation*}
Using logic of piece-wise linearity, 
\begin{equation*}
\begin{aligned}
 H_{1}(\eta)&= \min \big[ d{\eta} + (1-\eta)d(1+\mu) + (1-\eta)(1-d),\\
 &\;\;\;(1-\eta)(1+\mu)\big] \I_{\{ \eta \leq 1-d \}} + (1-\eta)(1+\mu)\I_{\{ \third \}}
\end{aligned}
\end{equation*}
Also, 
\begin{equation*}
\begin{aligned}
{\xi}_{1}(\eta) &= (1-\eta)-{\xi}(\eta) = (1 - 2{\eta}){\I_{\{ \first \}}} \\
&+ (1-\eta-d){\I_{\{ \second \}}}
\end{aligned}
\end{equation*}
Now, we can divide $H_{1}(\eta)$ function when minimum function changes its value. 
\begin{enumerate}
\item  $\first$ and $\mu \in (0, \frac{d\eta}{(1-\eta)(1-d)})$:
\begin{equation*}
\begin{aligned}
{\xi}_{1}(\eta) - {H_{1}}(\eta) + H(\eta)& = (1-2\eta) -  d\eta \\
&\;\;\;- (1-\eta)(1+d\mu)+ \eta(1+\mu) \\
&= \mu(\eta - d) + d\eta(\mu-1) \leq 0
\end{aligned}
\end{equation*}
\item $\first$ and $\mu \in [\frac{d\eta}{(1-\eta)(1-d)}, 1]$:
\begin{equation*}
\begin{aligned}
{\xi}_{1}(\eta) - {H_{1}}(\eta) + H(\eta) &= (1-2\eta) - (1-\eta)(1+\mu) \\
&\;\;\;+ \eta(1+\mu) \\
&= -\mu(1-2\eta ) \leq 0
\end{aligned}
\end{equation*}
\item $\second$ and $\mu \in (0,\frac{d\eta}{(1-\eta)(1-d)})$:
\begin{equation*}
\begin{aligned}
{\xi}_{1}(\eta) - {H_{1}}(\eta) + H(\eta) &= (1 - \eta - d)- d\eta\\
&\;\;\;- (1-\eta)(1+d\mu) + d(1+\mu) \\
&= d\eta(\mu-1) \leq 0
\end{aligned}
\end{equation*}
\item  $\second$ and $\mu \in [\frac{d\eta}{(1-\eta)(1-d)}, 1]$:
\begin{equation*}
\begin{aligned}
{\xi}_{1}(\eta) - {H_{1}}(\eta) + H(\eta) &= (1-\eta-d) - (1-\eta)(1+\mu)\\
&\;\;\;\;+ d(1+\mu) \\
&= \mu(\eta+d-1) \leq 0
\end{aligned}
\end{equation*}
\item $\third$:
\begin{equation*}
\begin{aligned}
{\xi}_{1}(\eta) - {H_{1}}(\eta) + H(\eta)& =0-(1-\eta)(1+\mu)\\
&\;\;\;+ (1-\eta)(1+\mu)\\
&= 0
\end{aligned}
\end{equation*}
\end{enumerate}
This implies that $\forall \mu \in (0,1],\;\forall \eta \in [0,1]$
\begin{equation*}
{\xi}_{1}(\eta) \leq ({H_{1}}(\eta) - H(\eta))
\end{equation*}

\section{Proof of Proposition~4}
For $S = \{ \xx_{i}, y_{i} \}_{i=1}^{N}$ and $\lambda_{2}$, let $f_{ \lambda_{2}, S}^* = (h_{\lambda_{2}, S}^*, b_{\lambda_{2}, S}^*)$, where

\begin{equation}
\label{eq-l2-risk-minimize}
\begin{aligned}
(h_{\lambda_{2}, S}^*, b_{\lambda_{2}, S}^*, \rho_{\lambda_{2}, S}^*) = \arg\min_{h, b, \rho}\; \frac{\lambda_{2}}{2} \| h \|^2_{\K} + \hat{ \sR}_{dr}(f, \rho) 
\end{aligned}
\end{equation}

Beside the continuous optimization problem in (\ref{eq-l2-risk-minimize}), double ramp loss based optimization problem can be formed as a mixed integer optimization problem as below,
\begin{equation}
\label{eq-original-mi-op-prob}
\begin{aligned}
\min \; & \frac{\lambda_{2}}{2}\|h \|^2_{\K} + \frac{1}{N} \sum_{i=1}^N\big{(} d(e_{i, 1} + p_{i, 1}) + (1 - d) (e_{i, 2} + p_{i, 2})\big{)} \\
s.t. \;\;\; & \;\;\; p_{i, 1}, p_{i, 2} \in \{0 ,2\};\;\;i=1\ldots N\\
& \;\;\; 0 \leq e_{i, 1}, e_{i, 2} \leq 2 ;\;\;i=1\ldots N\\
& \;\;\; y_{i}(f(\xx_{i})) \geq \rho + 1 - e_{i, 1} \;\;\;\;\; \text{if } p_{i, 1} = 0;\;\;i=1\ldots N\\
& \;\;\; y_{i}(f(\xx_{i})) \geq -\rho + 1 - e_{i, 2} \;\;\; \text{if } p_{i, 2} = 0;\;\;i=1\ldots N\\
& \;\;\; \rho \geq 1 \\
\end{aligned}
\end{equation}
where $h\in {\cal H}_{\K}, b, \rho$ and $\{p_{i, 1}, p_{i, 2} , e_{i, 1}, e_{i, 2}\}_{i=1}^N$ are the optimization variables. The optimization problem (\ref{eq-original-mi-op-prob})  should be solved over all possible values of ${\bf p} = \big{(} p_{1, 1}, p_{1, 2}, p_{2, 1}, ..., p_{N, 1}, p_{N, 2} \big{)} \in \{ 0, 2\}^{2N}$. 
%For a given ${\bf p}$, optimization problem (\ref{eq-original-mi-op-prob}) can be solved using quadratic programming.
When the optimal vector ${\bf p}^{*}$ is given, optimization problem (\ref{eq-original-mi-op-prob}) is reduced to the following quadratic optimization problem. 

\begin{align}
\label{eq-mi-op-prob-without-p}
\nonumber &\min_{h\in {\cal H}_{\K}, b, \rho , e_{i, 1}, e_{i, 2}} \;\;\;  \frac{\lambda_{2}}{2}\| h \|^2_{\K} + \frac{1}{N} \sum_{i=1}^N\big{(} de_{i, 1} + (1 - d)e_{i, 2} \big{)} \\[5pt]
\nonumber  
& s.t.\;\; 0 \leq e_{i, 1}, e_{i, 2} \leq 2 ;\;\;i=1\ldots N\\
\nonumber & \;\;\; y_{i}(f(\xx_{i})) \geq \rho + 1 - e_{i, 1} \;\;\;\;\; \text{if } p_{i, 1}^* = 0;\;\;i=1\ldots N\\
 \nonumber& \;\;\; y_{i}(f(\xx_{i})) \geq -\rho + 1 - e_{i, 2} \;\;\; \text{if } p_{i, 2}^* = 0;\;\;i=1\ldots N\\
& \;\;\; \rho \geq 1 
\end{align}

We first show that given vector ${\bf p}^{*}$, the optimal solution of problem (\ref{eq-mi-op-prob-without-p}) is same as the optimal solution of problem (\ref{eq-mi-op-red-con}) described as follows. 
\begin{equation}
\label{eq-mi-op-red-con}
\begin{aligned}
&\min_{h\in {\cal}_{\K}, b, \rho , e_{i, 1}, e_{i, 2}} \;  \frac{\lambda_{2}}{2}\| h \|^2_{\K} + \frac{1}{N} \sum_{i=1}^N\big{(} de_{i, 1} + (1 - d)e_{i, 2} \big{)} \\[5pt] 
& s.t.\;\;  e_{i, 1}, e_{i, 2} \geq 0 ;\;\;i=1\ldots N\\
& \;\;\; y_{i}(f(\xx_{i})) \geq \rho + 1 - e_{i, 1} \;\;\;\;\; \text{if } p_{i, 1}^* = 0;\;\;i=1\ldots N\\
& \;\;\; y_{i}(f(\xx_{i})) \geq -\rho + 1 - e_{i, 2} \;\;\; \text{if } p_{i, 2}^* = 0;\;\;i=1\ldots N\\
& \;\;\; \rho \geq 1
\end{aligned}
\end{equation}
Let $(h^{*'}, b^{*'}, \rho^{*'}, e_{i, 1}^{*'}, e_{i, 2}^{*'})$ and $(h^*, b^*, \rho^*, e_{i, 1}^*, e_{i, 2}^*)$ be the optimal solutions of problems (\ref{eq-mi-op-prob-without-p}) and (\ref{eq-mi-op-red-con}) respectively. Then, it can be immediately seen that
\begin{align*}
&\frac{\lambda_{2}}{2}\|h^{*} \|^2_{\K} + \frac{1}{N} \sum_{i=1}^N\big{(} de_{i, 1}^{*} + (1 - d)e_{i, 2}^{*} \big{)} \\
&\leq \frac{\lambda_{2}}{2}\| h^{*'} \|^2_{\K} + \frac{1}{N} \sum_{i=1}^N\big{(} de_{i, 1}^{*'} + (1 - d)e_{i, 2}^{*'} \big{)}
\end{align*}
To prove our claim, we just need to verify that $0 \leq e_{i,1}^{*}, e_{i,2}^{*} \leq 2$. If $p_{i,1}^* = 2$ then $e_{i,1}^* = 0$. Similarly, if $p_{i,2}^* = 2$ then $e_{i,2}^* = 0$. Now, we prove for the case $p_{i,1}^* = 0$ and in the same manner, we can prove for the case $p_{i,2}^* = 0$. Let $I = \{i \in \{1,2,...,N\}: p_{i,1}^* = 0\text{ and } e_{i,1}^* > 2\}$. If $I$ is not empty, define a new vector ${\bf p}'$ as follows. $p'_{i,1} = 2,\;\forall i \in I$ and $p'_{i,1} = p_{i,1}^*,\;\forall i \notin I$. As $p^*_{i, 1} = 2$ implies that $e^*_{i, 1} = 0$, we define $e'_{i,1}$ as follows. $e'_{i,1} = 0$ if $p'_{i,1} = 2$ and $e'_{i,1} = e_{i,1}^*$ otherwise. Now, 
\begin{equation*}
\begin{aligned}
&\frac{\lambda_{2}}{2}\| h^{*'} \|^2_{\K} + \frac{1}{N} \sum_{i=1}^N\big{(} d(e_{i, 1}^{*'} + p_{i, 1}^{*}) + (1 - d) (e_{i, 2}^{*'} + p_{i, 2}^{*})\big{)} \\ 
& \geq 
 \frac{\lambda_{2}}{2}\|h^* \|^2_{\K}  
+ \frac{1}{N} \sum_{i=1}^N\big{(} d(e_{i, 1}^* + p_{i, 1}^*) + (1 - d) (e_{i, 2}^* + p_{i, 2}^*)\big{)} \\
& > 
\frac{\lambda_{2}}{2}\| h^* \|^2_{\K}  
+ \frac{1}{N} \sum_{i=1}^N\big{(} d(e'_{i, 1} + p'_{i, 1}) + (1 - d) (e'_{i, 2} + p'_{i, 2})\big{)} \\
\end{aligned}
\end{equation*}
Which contradicts the assumption that $(h^{*'}, b^{*'}, \rho^{*'}, e_{i, 1}^{*'}, e_{i, 2}^{*'})$ is the optimal solution of problem (\ref{eq-mi-op-prob-without-p}). Hence optimal solution of (\ref{eq-mi-op-prob-without-p}) and (\ref{eq-mi-op-red-con}) are same.

Let $I_{1, 0} = \{i:p_{i, 1}^* = 0\}$, $I_{1, 2} = \{i:p_{i, 1}^* = 2\}$, $I_{2, 0} = \{i:p_{i, 2}^* = 0\}$ and $I_{2,2} = \{i:p_{i, 2}^* = 2\}$. As $(h_{\lambda_{2}, S}^*,\; b_{\lambda_{2}, S}^*, \; \rho_{\lambda_{2}, S}^{*}, \; e_{i, 1}^{*}, \; e_{i, 2}^{*})$ is the optimal solution of problem (\ref{eq-mi-op-red-con}), it satisfies the KKT conditions as follows. 
\begin{equation}
\label{kkt-conditions}
\begin{aligned}
& h_{\lambda_{2}, S}^{*}(\xx) = \sum_{i \in I_{1, 0}} \alpha_{i, 1}^* y_{i} \K(\xx_{i}, \xx) +  \sum_{i \in I_{2, 0}}  \alpha_{i, 2}^* y_{i} \K(\xx_{i}, \xx) \\
& 0 \leq \alpha_{i,1}^* \leq \frac{d}{\lambda_{2}m};\;i\in I_{\{1,0\}} \\
& 0 \leq \alpha_{i,2}^* \leq \frac{(1-d)}{\lambda_{2}m} ;\;i\in I_{\{2,0\}}\\[6pt]
& \sum_{i \in I_{1, 0}} \alpha_{i, 1}^* y_{i} + \sum_{i \in I_{2, 0}} \alpha_{i, 2}^* y_{i} = 0 \\[6pt]
& \sum_{i \in I_{1, 0}} \alpha_{i, 1}^* - \sum_{i \in I_{2, 0}} \alpha_{i, 2}^* - \gamma = 0 \;\;\; \text{ with } \gamma \geq 0 \\
& 1 + \rho_{\lambda_2, S}^* - y_{i}f_{\lambda_2, S}^*(\xx_{i}) \leq 0 ;\; \text{if } i \in I_{1, 0}\;\&\; \alpha_{i, 1} = 0 \\
& 0 \leq e_{i,1}^* = 1 + \rho_{\lambda_2, S}^* - y_{i}f_{\lambda_2, S}^*(\xx_{i}) \leq 2;\; \text{if } i \in I_{1, 0}\;\&\; \alpha_{i, 1} \neq 0 \\
& 1 - \rho_{\lambda_2, S}^* - y_{i}f_{\lambda_2, S}^*(\xx_{i}) \leq 0;\; \text{if } i \in I_{2, 0} \;\&\; \alpha_{i, 2} = 0 \\
& 0 \leq e_{i,2}^* = 1 - \rho_{\lambda_2, S}^* - y_{i}f_{\lambda_2, S}^*(\xx_{i}) \leq 2; \;\text{if } i \in I_{2, 0} \;\&\; \alpha_{i, 2} \neq 0 \\
\end{aligned}
\end{equation}
where $\{ \alpha_{i,1}^* \}_{i \in I_{1, 0}}$, $\{ \alpha_{i,2}^* \}_{i \in I_{2, 0}}$ and $\gamma$ are the Lagrange multipliers. From (\ref{kkt-conditions}), we see that when $i \in I_{1, 0}$, \;$y_{i}f_{\lambda_2, S}^*(\xx_{i}) \geq \rho_{\lambda_2, S}^* - 1$. Thus, 
\begin{equation}
\label{eq-loss-inq1}
L_{dr}(y_{i}f_{\lambda_2, S}^*(\xx_{i}),  \rho_{\lambda_{2},S}^*) \geq 1 - y_{i}f_{\lambda_{2},S}^*(\xx_{i}) + \rho_{\lambda_{2},S}^*
\end{equation}
Similarly, we see that when $i \in I_{2, 0}$, \;$y_{i}f_{\lambda_2, S}^*(\xx_{i}) \geq -\rho_{\lambda_2, S}^*(\xx_{i}) - 1$. Thus, 
\begin{equation}
\label{eq-loss-inq2}
L_{dr}(y_{i}f_{\lambda_2, S}^*(\xx_{i}), \rho_{\lambda_2, S}^*) \geq 1 - y_{i}f_{\lambda_{2},S}^*(\xx_{i}) - \rho_{\lambda_{2},S}^*
\end{equation}
%Let $\tilde{\alpha}_{i,1}^* = \alpha_{i,1}^*,\;\forall i \in I_{1, 0}$ and $0$ otherwise. Similarly, let $\tilde{\alpha}_{i,2}^* = \alpha_{i,2}^*,\;\forall i \in I_{2, 0}$ and $0$ otherwise. 
From KKT conditions, we know that $h_{\lambda_{2}, S}^{*}(\xx) = \sum_{i \in I_{1, 0}} \alpha_{i, 1}^* y_{i} \K(\xx_{i}, \xx) +  \sum_{i \in I_{2, 0}}  \alpha_{i, 2}^* y_{i} \K(\xx_{i}, \xx)$. Thus, $h_{\lambda_{2}, S}^{*} \in {\cal H}_{\K}$.
Thus, by using the definition of $\Vert . \Vert_{\K}$ norm,
$\sum_{i \in I_{1, 0}} \alpha_{i, 1}^* y_{i} h_{\lambda_{2}, S}^* (\xx_{i}) + \sum_{i \in I_{2, 0}} \alpha_{i, 2}^* y_{i} h_{\lambda_{2}, S}^* (\xx_{i})=\| h_{\lambda_2, S}^* \|^2_{\K}$. Let $\alpha_{i}^* = \alpha_{i,1}^* + {\alpha}_{i,2}^*$. Thus,
\begin{equation*}
\begin{aligned}
&\Omega(h_{\lambda_{2}, S}^* ) = \sum_{i=1}^N \alpha_{i}^*  %\sum_{i=1}^N \left(\tilde{\alpha}_{i,1}^* + \tilde{\alpha}_{i, 2}^*\right) \\
= \sum_{i \in I_{1,0}} \alpha_{i, 1}^* + \sum_{i \in I_{2, 0}} \alpha_{i, 2}^* \\
& = \sum_{i \in I_{1, 0}} \alpha_{i, 1}^* (1 - y_{i}\left(h_{\lambda_{2},S}^*(\xx_{i})+ b_{\lambda_{2},S}^*\right)+\rho_{\lambda_{2},S}^*)\\
&+ \sum_{i \in I_{2, 0}} \alpha_{i, 2}^* (1 - y_{i}\left(h_{\lambda_{2},S}^*(\xx_{i})+b_{\lambda_{2},S}^*\right) - \rho_{\lambda_{2},S}^*)\\
&+ \| h_{\lambda_2, S}^* \|^2_{\K} - \gamma \rho_{\lambda_{2}, S}^*
\end{aligned}
\end{equation*}

%\begin{align*}
%\sum_{i \in I_{1, 0}} \alpha_{i, 1}^* y_{i} f_{\lambda_{2}, S}^* (\xx_{i}) + \sum_{i \in I_{2, 0}} \alpha_{i, 1}^* y_{i} f_{\lambda_{2}, S}^* (\xx_{i})
%&= \sum_{i, j \in I_{1, 0}} \alpha_{i, 1}^* y_{i} \alpha_{i, 2}^* y_{j} \K(\xx_{i}, \xx_{j}) + \sum_{i, j \in I_{2, 0}} \alpha_{i, 2}^* y_{i} \alpha_{i, 2}^* y_{j} \K(\xx_{i}, \xx_{j})
%= \| \ww_{\lambda_2, S}^* \|^2
%\end{align*}
%Using the above facts, 
%\begin{equation*}
%\begin{aligned}
%&\Omega( \ww_{\lambda_{2}, S}^* ) =   \sum_{i \in I_{1, 0}} \alpha_{i, 1}^* (1 - y_{i}f_{\lambda_{2},S}^*(\xx_{i}) 
%+ \rho_{\lambda_{2},S}^*) \\
%&+ \sum_{i \in I_{2, 0}} %\alpha_{i, 1}^* (1 - y_{i}f_{\lambda_{2},S}^*(\xx_{i}) - \rho_{\lambda_{2},S}^*) + \| \ww_{\lambda_2, S}^* \|^2 - \gamma \rho_{\lambda_{2}, S}^*\\
%\end{aligned}
%\end{equation*}
As $\gamma \geq 0$ and $\rho_{\lambda_{2}, S}^* \geq 0$, $\gamma \rho_{\lambda_{2}, S}^* \geq 0$. Using eq.(\ref{eq-loss-inq1}) and (\ref{eq-loss-inq2}), we get 
\begin{align*}
&\Omega( h_{\lambda_{2}, S}^* ) \leq \sum_{i \in I_{1, 0}} \alpha_{i, 1}^* L_{dr} (y_{i} f_{\lambda_2, S}^*(\xx_{i}), \rho_{\lambda_2, S}^*) \\
&+ \sum_{i \in I_{2, 0}} \alpha_{i, 2}^* L_{dr} (y_{i} f_{\lambda_2, S}^*	(\xx_{i}), \rho_{\lambda_2, S}^*) + \| h_{\lambda_2, S}^* \|^2_{\K} \\
&\leq \sum_{i \in I_{1, 0}}  \frac{d}{\lambda_2 m} L_{dr} (y_{i} f_{\lambda_2, S}^*(\xx_{i}), \rho_{\lambda_2, S}^*)\\
&+ \sum_{i \in I_{2, 0}}\frac{(1- d)}{\lambda_2 m} L_{dr} (y_{i} f_{\lambda_2, S}^*(\xx_{i}), \rho_{\lambda_2, S}^*) + \| h_{\lambda_2, S}^* \|^2_{\K} \\
& \leq \frac{1}{\lambda_2} \hat{\sR}_{dr}(f_{ \lambda_{2}, S}^*, \rho_{\lambda_2, S}^*) + \| h_{\lambda_2, S}^* \|^2_{\K} \\
& = \frac{1}{\lambda_2} \hat{\sR}_{dr}(h_{ \lambda_{2}, S}^*+b_{ \lambda_{2}, S}^*, \rho_{\lambda_2, S}^*) + \| h_{\lambda_2, S}^* \|^2_{\K}
\end{align*}
Hence, the bound for $\Omega(h_{\lambda_{2}, S}^* )$ follows.

\section{Proof of Theorem~5}
We know that 
$$(h_{\lambda_1, S}^*,b_{\lambda_1, S}^*,\rho_{\lambda_1, S}^*)=\arg\min_{h,b,\rho}\;\;\hat{\sR}_{dr} (h+b, \rho) + \lambda_1 \Omega (h)$$
Thus
\begin{align*}
\hat{\sR}_{dr} (f_{\lambda_1, S}^*, \rho_{\lambda_1, S}^*) + \lambda_1 \Omega (h_{\lambda_1, S}^*)& \leq \hat{\sR}_{dr} (f_{\lambda_2, S}^*, \rho_{\lambda_2, S}^*) \\
&+ \lambda_1 \Omega (h_{\lambda_2, S}^*)
\end{align*}
Using Proposition~4, we get
\begin{align*}
\nonumber &\hat{\sR}_{dr} (f_{\lambda_1, S}^*, \rho_{\lambda_1, S}^*) + \lambda_1 \Omega (h_{\lambda_1, S}^*) \leq \hat{\sR}_{dr} (f_{\lambda_2, S}^*, \rho_{\lambda_2, S}^*)\\
\nonumber &+ \lambda_1 \Big(\frac{1}{\lambda_2} \hat{\sR}_{dr}({f}_{ \lambda_{2}, S}^*, \rho_{\lambda_2, S}^*) + \|h_{\lambda_2, S}^* \|^2_{\K} \Big) \\
\nonumber &= \Big( 1 + \frac{\lambda_1}{\lambda_2} \Big) \hat{\sR}_{dr}({f}_{ \lambda_{2}, S}^*, \rho_{\lambda_2, S}^*) + \lambda_1 \| h_{\lambda_2, S}^* \|^2_{\K}\\
\end{align*}
Adding $\sR_{dr} (f_{\lambda_1, S}^*, \rho_{\lambda_1, S}^*) - \hat{\sR}_{dr} (f_{\lambda_1, S}^*, \rho_{\lambda_1, S}^*)$ both side, we get,
\begin{align}
\label{eq-thm4-left-bound}
\nonumber &\sR_{dr} (f_{\lambda_1, S}^*, \rho_{\lambda_1, S}^*) + \lambda_1 \Omega (h_{\lambda_1, S}^*) \leq \sR_{dr} (f_{\lambda_1, S}^*, \rho_{\lambda_1, S}^*) \\
\nonumber &- \hat{\sR}_{dr} (f_{\lambda_1, S}^*, \rho_{\lambda_1, S}^*) + \Big( 1 + \frac{\lambda_1}{\lambda_2} \Big) \hat{\sR}_{dr}({f}_{ \lambda_{2}, S}^*, \rho_{\lambda_2, S}^*) \\
&+ \lambda_1 \| h_{\lambda_2, S}^* \|^2_{\K}
\end{align}
Now, we will bound last two terms of the right side of the above equation using definition of ${f}_{ \lambda_{2}, S}^*$ and ${f}_{ \lambda_{2}}^*$.
\begin{equation*}
\begin{aligned}
&\Big( 1 + \frac{\lambda_1}{\lambda_2} \Big) \hat{\sR}_{dr}({f}_{ \lambda_{2}, S}^*, \rho_{\lambda_2, S}^*) + \lambda_1 \| h_{\lambda_2, S}^* \|^2_{\K} \\
&\leq \Big( 1 + \frac{\lambda_1}{\lambda_2} \Big) (\hat{\sR}_{dr}({f}_{ \lambda_{2}, S}^*, \rho_{\lambda_2, S}^*) + \lambda_2 \| h_{\lambda_2, S}^* \|^2) \\
&\leq \Big( 1 + \frac{\lambda_1}{\lambda_2} \Big) ( \hat{\sR}_{dr}({f}_{ \lambda_{2}}^*, \rho_{\lambda_2}^*) + \lambda_2 \| h_{\lambda_2}^* \|^2_{\K}) \\
&= \Big( 1 + \frac{\lambda_1}{\lambda_2} \Big) \Big{(} \hat{\sR}_{dr}({f}_{ \lambda_{2}}^*, \rho_{\lambda_2}^*) - \sR_{dr}({f}_{ \lambda_{2}}^*, \rho_{\lambda_2}^*) \\
&\;\;\; + \sR_{dr}({f}_{ \lambda_{2}}^*, \rho_{\lambda_2}^*)  + \lambda_2 \|h_{\lambda_2}^* \|^2_{\K}\Big{)}
\end{aligned}
\end{equation*}
Using this bound in eq.(\ref{eq-thm4-left-bound}), we get
\begin{equation*}
\begin{aligned}
&\sR_{dr} (f_{\lambda_1, S}^*, \rho_{\lambda_1, S}^*) + \lambda_1 \Omega (h_{\lambda_1, S}^*) \leq \sR_{dr} (f_{\lambda_1, S}^*, \rho_{\lambda_1, S}^*)\\
&- \hat{\sR}_{dr} (f_{\lambda_1, S}^*, \rho_{\lambda_1, S}^*)+ \Big( 1 + \frac{\lambda_1}{\lambda_2} \Big) \Big{(} \hat{\sR}_{dr}({f}_{ \lambda_{2}}^*, \rho_{\lambda_2}^*)\\
&- \sR_{dr}({f}_{ \lambda_{2}}^*, \rho_{\lambda_2}^*) + \sR_{dr}({f}_{ \lambda_{2}}^*, \rho_{\lambda_2}^*) + \lambda_2 \| h_{\lambda_2}^* \|^2_{\K} \Big{)} 
\end{aligned}
\end{equation*}
Bounding $\sR_{dr} (f_{\lambda_1, S}^*, \rho_{\lambda_1, S}^*) - \sR_{dr} (f_{d}^*, \rho_{d}^*) + \lambda_1 \Omega (h_{\lambda_1, S}^*)$,
\begin{equation*}
\begin{aligned}
&\sR_{dr} (f_{\lambda_1, S}^* , \rho_{\lambda_1, S}^*)  -  \sR_{dr}  (f_{d}^*, \rho_{d}^*)  + \lambda_1 \Omega (h_{\lambda_1, S}^*) \\
&\leq (\sR_{dr} (f_{\lambda_1, S}^*, \rho_{\lambda_1, S}^*) - \hat{\sR}_{dr} (f_{\lambda_1, S}^*, \rho_{\lambda_1, S}^*)) + \frac{\lambda_1}{\lambda_2} \sR_{dr} (f_{d}^*, \rho_{d}^*)\\
&+ \Big( 1+ \frac{\lambda_1}{\lambda_2}\Big) ( \hat{\sR}_{dr}({f}_{ \lambda_{2}}^*, \rho_{\lambda_2}^*) -  \sR_{dr}({f}_{ \lambda_{2}}^*, \rho_{\lambda_2}^*) ) \\
 &+ \Big( 1 + \frac{\lambda_1}{\lambda_2} \Big) (\sR_{dr}({f}_{ \lambda_{2}}^*, \rho_{\lambda_2}^*) - \sR_{dr} (f_{d}^*, \rho_{d}^*) + \lambda_2 \| h_{\lambda_2}^* \|^2_{\K} ) 
\end{aligned}
\end{equation*}
But, $\mathcal{A}(\lambda_2)=\sR_{dr}(f_{\lambda_2}^*, \rho_{\lambda_2}^*) - \sR_{dr} (f_{d}^*, \rho_{d}^*) + \lambda_2 \| h_{\lambda_2}^* \|^2_{\K}$
%Using expression of $\mathcal{A}(\lambda_2)$, $\delta$ and $\mathcal{S}(N, \lambda_1, \lambda_2)$ in above equation, we will get required expression.
Thus,
\begin{align*}
&\sR_{dr} (f_{\lambda_1, S}^*, \rho_{\lambda_1, S}^*) - \sR_{dr} (f_{d}^*, \rho_{d}^*) + \lambda_1 \Omega (h_{\lambda_1, S}^*) \\
&\leq \psi \sR_{dr} (f_{d}^*, \rho_{d}^*) + \mathcal{S}(N, \lambda_1, \lambda_2) + 2 \mathcal{A}(\lambda_2)
\end{align*}

\section{Proof of Theorem~6}
The proof of Theorem~6 requires 2 more results described 
in Lemma~6a and Lemma~6b. We first discuss these two Lemmas and then discuss the proof of Theorem~6.

\begin{lemma} 
\label{lemma:abc}Consider the optimization problem associated with SDR-SVM as follows.
\begin{align}
\label{eq-sdr-svm}
\min_{h\in {\cal H}_{\K}^+,b,\rho}\;\;\hat{\sR}_{dr} (h+b, \rho) + \lambda_1 \Omega (h)
\end{align}
There exists a solution $(h_{\lambda_1, S}^*, b_{\lambda_1, S}^*, \rho_{\lambda_1, S}^*)$ for above
which satisfies $\min_{1 \leq i \leq N} |f_{\lambda_1, S}^*| \leq \rho_{\lambda_1, S}^* + 1$ and hence, 
$$| b_{\lambda_1, S}^* | \leq 1 + \| h_{\lambda_1, S}^* \|_{\K} + \rho_{\lambda_1, S}^*$$  
\end{lemma}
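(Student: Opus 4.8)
The plan is to first reduce the norm bound on $b^*_{\lambda_1,S}$ to the margin statement $\min_i |f^*_{\lambda_1,S}(\xx_i)| \le \rho^*_{\lambda_1,S}+1$, and then to produce an optimal solution realizing that margin statement by translating the bias $b$. For the reduction, let $i_0$ attain $\min_i|f^*_{\lambda_1,S}(\xx_i)| \le \rho^*_{\lambda_1,S}+1$. Writing $f=h+b$ and using the reproducing property $h(\xx_{i_0})=\langle h,\K(\xx_{i_0},\cdot)\rangle_\K$ together with $\|\K(\xx_{i_0},\cdot)\|_\K=\sqrt{\K(\xx_{i_0},\xx_{i_0})}\le 1$ (the standard normalization $\K(\xx,\xx)\le 1$, which the Gaussian kernel used in the experiments satisfies with equality), one gets $|h^*_{\lambda_1,S}(\xx_{i_0})|\le \|h^*_{\lambda_1,S}\|_\K$, so by the triangle inequality
$$|b^*_{\lambda_1,S}| \le |f^*_{\lambda_1,S}(\xx_{i_0})| + |h^*_{\lambda_1,S}(\xx_{i_0})| \le \rho^*_{\lambda_1,S}+1+\|h^*_{\lambda_1,S}\|_\K,$$
which is exactly the claimed inequality. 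Thus everything rests on the margin statement.

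The key structural observation (recall $\mu=1$, so the constraint reads $\rho\ge 1$) is that $L_{dr}(t,\rho)$ is \emph{flat}: it equals $0$ for all $t\ge \rho+1$ and equals $2$ for all $t\le-\rho-1$. Since $t_i=y_if(\xx_i)$ satisfies $|t_i|=|f(\xx_i)|$, the condition $|f(\xx_i)|>\rho+1$ forces $t_i$ into one of these two flat regions, where the per-example loss is locally insensitive to $t_i$. This is the lever: the regularizer $\Omega(h)$ is independent of $b$, so translating $b$ changes the objective only through $\hat{\sR}_{dr}$, and as long as every $t_i$ stays in a flat region the objective does not change at all.

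I would then argue as follows. Take any global minimizer $(h^*,b^*,\rho^*)$ of problem~(\ref{eq-sdr-svm}). If already $\min_i|f^*(\xx_i)|\le\rho^*+1$, there is nothing to prove. Otherwise every $|f^*(\xx_i)|>\rho^*+1$. Pick $i_0\in\arg\min_i|f^*(\xx_i)|$ and translate $b$ toward the band $[-(\rho^*+1),\rho^*+1]$: decrease $b$ if $f^*(\xx_{i_0})>\rho^*+1$, otherwise increase it, by the amount $s=|f^*(\xx_{i_0})|-(\rho^*+1)>0$. Because $i_0$ has the smallest $|f^*(\xx_i)|$, it reaches its breakpoint $|f(\xx_{i_0})|=\rho^*+1$ before any other example leaves its flat region, so throughout the translation all $t_i$ remain in flat regions and the empirical loss, hence the whole objective, is unchanged. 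The translated triple is therefore still a global minimizer and satisfies $\min_i|f(\xx_i)|=\rho^*+1$, proving the margin statement and, with the first paragraph, the lemma.

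The hard part is the verification underlying the third paragraph: one must check, across the four combinations of $y_i=\pm1$ and $f^*(\xx_i)$ lying to the right or left of the band, that translating $b$ in the chosen direction keeps each $t_i$ inside a flat region until $i_0$ hits its breakpoint (so the loss is genuinely constant, not merely piecewise linear), and that no interior breakpoint at $\pm(\rho^*-1)$ is crossed first. This is a short but careful case analysis using the explicit flat regions of $L_{dr}$; the remaining steps are routine.
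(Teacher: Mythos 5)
Your proposal is correct and follows essentially the same route as the paper: the paper likewise defines $g(\xx)=f_{\lambda_1,S}^*(\xx)-(r-\rho_{\lambda_1,S}^*-1)\,\mathrm{sgn}(f_{\lambda_1,S}^*(\xx_{i_0}))$ --- your bias translation by $s$ in one shot --- and uses the flatness of $L_{dr}$ outside $[-\rho-1,\rho+1]$ to conclude the empirical risk, and hence the objective, is unchanged, after which the bias bound follows by the same triangle inequality (the paper passes through $\|h\|_\infty$ where you use the reproducing property with $\K(\xx,\xx)\le 1$, a negligible difference that actually matches the $\|\cdot\|_\K$ in the statement more directly). The case verification you defer as ``routine'' is exactly the two-line check the paper performs, so nothing essential is missing.
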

\begin{proof}
Let $(h_{\lambda_1, S}^*, b_{\lambda_1, S}^*, \rho_{\lambda_1, S}^*)$ be such that
\begin{equation*}
r := \min_{1 \leq i \leq N} | f_{\lambda_1, S}^* (\xx_{i}) | = | f_{\lambda_1, S}^* (\xx_{i_{0}}) | > \rho_{\lambda_1, S}^* + 1
\end{equation*}
Then for each $i$, $y_{i} f_{\lambda_1, S}^*(\xx_{i}) \geq r > \rho_{\lambda_1, S}^* + 1$ or $y_{i} f_{\lambda_1, S}^*(\xx_{i}) \leq -r < - \rho_{\lambda_1, S}^* - 1$. Now, define a new function $g(x) = f_{\lambda_1, S}^*(\xx_{i}) - (r - \rho_{\lambda_1, S}^* - 1)\text{sgn}(f_{\lambda_1, S}^*(\xx_{i_0}))$. When $y_{i}f_{\lambda_1, S}^*(\xx_{i}) > \rho_{\lambda_1, S}^* + 1$, we can check that $y_{i}g(\xx_{i}) \geq \rho_{\lambda_1, S}^* + 1$. Similarly, when $y_{i}f_{\lambda_1, S}^*(\xx_{i}) <  -\rho_{\lambda_1, S}^* - 1$, we will have $y_{i}g(\xx_{i}) \leq -\rho_{\lambda_1, S}^* - 1$. Because of above two facts, we can say that $\hat{\sR}_{dr}(f_{\lambda_1, S}^*, \rho_{\lambda_1, S}^*) = \hat{\sR}_{dr}(g, \rho_{\lambda_1, S}^*)$. Here, $ | g(\xx_{i_{0}}) | = \rho_{\lambda_1, S}^* + 1 $ therefore it satisfies our condition. Therefore, $g$ is also a solution of problem (\ref{eq-sdr-svm}) and satisfies required condition. 

Now, if $(h_{\lambda_1, S}^*, b_{\lambda_1, S}^*, \rho_{\lambda_1, S}^*)$ satisfies the condition, we get,
\begin{equation*}
\min_{1 \leq i \leq N} | f_{\lambda_1, S}^* (\xx_{i}) | = | f_{\lambda_1, S}^* (\xx_{i_0}) |  \leq \rho_{\lambda_1, S}^* + 1
\end{equation*}
then we have
\begin{equation*}
\begin{aligned}
| b_{\lambda_1, S}^* | - & | h_{\lambda_1, S}(\xx_{i_0}) | \leq \rho_{\lambda_1, S}^* + 1 \\
| b_{\lambda_1, S}^* | - \rho_{\lambda_1, S}^* \leq & | h_{\lambda_1, S}(\xx_{i_0}) | +  1 \leq 1 + \| h_{\lambda_1, S}^* \|_{\infty}
\end{aligned}
\end{equation*}
In this way, we can get required bound on $| b_{\lambda_1, S}^* |$.
\end{proof}

\begin{lemma}
\label{lemma-norm-bound}
For every $\lambda_1 > 0$, we have
\begin{equation*}
\| h_{\lambda_1, S}^* \|_{\K} \leq \tau \Omega (h_{\lambda_1, S}^*) \leq \frac{ 2d \tau}{\lambda_1}
\end{equation*}
where $\tau = \sup_{\xx, {\bf y} \in \X} \sqrt{| \K (\xx, {\bf y}) |}$.
\end{lemma}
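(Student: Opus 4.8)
The plan is to prove the two inequalities separately, since they rest on independent facts: the first is purely a statement about the RKHS norm of a nonnegative combination of kernel sections, while the second follows from optimality of the SDR-SVM solution.

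For the first inequality $\| h_{\lambda_1, S}^* \|_{\K} \leq \tau\, \Omega(h_{\lambda_1, S}^*)$, I would start from the representation $h_{\lambda_1, S}^*(\cdot) = \sum_{i=1}^N y_i \alpha_i^* \K(\xx_i, \cdot)$ with all $\alpha_i^* \geq 0$, which holds because $h_{\lambda_1,S}^* \in {\cal H}_{\K}^+$ and $\Omega(h_{\lambda_1,S}^*) = \sum_i \alpha_i^*$. Using the reproducing property, $\| h_{\lambda_1,S}^* \|_{\K}^2 = \sum_{i,j} y_i y_j \alpha_i^* \alpha_j^* \K(\xx_i, \xx_j)$. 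Since $|y_i y_j \K(\xx_i,\xx_j)| = |\K(\xx_i,\xx_j)| \leq \tau^2$ by definition of $\tau$, and since $\alpha_i^* \alpha_j^* \geq 0$, the double sum is bounded by $\tau^2 \big( \sum_i \alpha_i^* \big)^2 = \tau^2\, \Omega(h_{\lambda_1,S}^*)^2$; taking square roots gives the claim.

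For the second inequality it suffices to establish $\Omega(h_{\lambda_1,S}^*) \leq 2d/\lambda_1$. Since $(h_{\lambda_1,S}^*, b_{\lambda_1,S}^*, \rho_{\lambda_1,S}^*)$ minimizes $\hat{\sR}_{dr}(h+b,\rho) + \lambda_1 \Omega(h)$ over the feasible set, I would compare its objective value against the trivial point $h = 0$, $b = 0$, $\rho = 1$. This point is feasible because with $\mu = 1$ the constraint $\rho \geq \tfrac{1}{2}\mu(1+\mu)$ reads $\rho \geq 1$, and $0 \in {\cal H}_{\K}^+$ (take all $\alpha_i = 0$), so $\Omega(0) = 0$. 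A direct evaluation of the double ramp loss at $t = 0$, $\rho = 1$, $\mu = 1$ gives $L_{dr}(0,1) = 2d$, hence $\hat{\sR}_{dr}(0,1) = 2d$. Optimality then yields $\hat{\sR}_{dr}(f_{\lambda_1,S}^*, \rho_{\lambda_1,S}^*) + \lambda_1 \Omega(h_{\lambda_1,S}^*) \leq 2d$, and since $L_{dr} \geq 0$ (each bracketed difference is nonnegative because $[\cdot]_+$ is monotone and $\mu \geq -\mu^2$ for $\mu \in (0,1]$), discarding the nonnegative empirical risk term gives $\lambda_1 \Omega(h_{\lambda_1,S}^*) \leq 2d$. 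Multiplying through by $\tau$ and chaining with the first inequality completes the proof.

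The argument is essentially routine. The only points requiring care are confirming that the comparison point $(0,0,1)$ is feasible under the $\rho$-constraint once $\mu = 1$, correctly evaluating $L_{dr}(0,1) = 2d$, and invoking nonnegativity of $L_{dr}$ to drop the empirical risk term; I expect no genuine obstacle beyond bookkeeping.
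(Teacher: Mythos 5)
Your proof is correct and takes essentially the same route as the paper's: the first inequality via the kernel expansion $\| h_{\lambda_1,S}^* \|_{\K}^2 = \sum_{i,j}\alpha_i^*\alpha_j^* y_i y_j \K(\xx_i,\xx_j) \leq \tau^2 \big(\sum_i \alpha_i^*\big)^2$ with nonnegative $\alpha_i^*$, and the second by comparing the regularized objective at the optimum against the zero function and discarding the nonnegative empirical risk term. The only (immaterial) difference is your comparison point $(0,0,1)$ versus the paper's $(0,\rho_{\lambda_1,S}^*)$, both of which give $\hat{\sR}_{dr}(0,\rho) = 2d$ since $L_{dr}(0,\rho)=2d$ for every feasible $\rho \geq 1$ when $\mu=1$.
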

\begin{proof}
Using representer theorem, $h_{\lambda_1, S}(\xx) = \sum_{i=1}^N \alpha_{i}^* y_{i} \K(x_{i}, x) $. Therefore, 
\begin{equation*}
\| h_{\lambda_1, S}^* \|_{\K} = \Big( \sum_{i, j = 1}^N \alpha_{i}^* \alpha_{j}^* y_iy_j\K(\xx_{i}, \xx_{j}) \Big)^{1/2} 
\end{equation*}
Using definition of $\tau$, 
\begin{equation}
\label{eq-normw-omega-relation}
\| h_{\lambda_1, S}^* \|_{\K}	\leq \tau  \Big( \sum_{i, j = 1}^N \alpha_{i}^* \alpha_{j}^* \Big)^{1/2} = \tau \Omega (h_{\lambda_1, S}^*)
\end{equation}
Using definition of $h_{\lambda_1, S}^*$, we have
\begin{equation*}
\hat{\sR}_{dr} (f_{\lambda_1, S}^*, \rho_{\lambda_1, S}^*) + \lambda_1 \Omega (h_{\lambda_1, S}^*) \leq \hat{\sR}_{dr} (0, \rho_{\lambda_1, S}^*) + \lambda_1 \Omega (0) \leq 2d
\end{equation*}
This gives $\Omega (h_{\lambda_1, S}^*) \leq \frac{2d}{\lambda_1}$. Now, using the eq.(\ref{eq-normw-omega-relation}),
\begin{equation}
\label{eq-w-bound}
\| h_{\lambda_1, S}^* \|_{\K} \leq \frac{2d \tau }{\lambda_1} 
\end{equation}
\end{proof}

We use Lemma~\ref{lemma:abc}, Lemma~\ref{lemma-norm-bound} and the fact that $\|  h_{\lambda_1, S}^* \|_{\infty} \leq \tau \|  h_{\lambda_1, S}^* \|_{\K},\;\forall h\in {\cal H}_{\K}$.
We can say that 
\begin{equation}
\label{eq-b-bound}
| b_{\lambda_1, S}^* | - \rho_{\lambda_1, S}^* \leq 1 + \frac{2d \tau^2}{\lambda_1}
\end{equation}

Now we discuss the proof of Theorem~6.

\subsection{\bf Proof of Theorem~6}
\begin{proof}
Define a random variable $\zeta_{i} = L_{dr} (y_{i}f_{\lambda_2}^*(\xx_{i}), \rho_{\lambda_2}^*)$. Now, we will estimate $\hat{\sR}_{dr}(f_{\lambda_2}^*, \rho_{\lambda_2}^*) - \sR_{dr}(f_{\lambda_2}^*,, \rho_{\lambda_2}^*)$ using random variable $\zeta_{i}$. Here, note that $\zeta_{i} = L_{dr} (y_{i}f_{\lambda_2}^*(\xx_{i}), \rho_{\lambda_2}^*) \in [0, 2]$ therefore using Hoeffding's inequality, with probability $1 - {\delta} / {2}$,
\begin{equation}
\label{eq-sample-bound1}
\hat{\sR}_{dr}(f_{\lambda_2}^*, \rho_{\lambda_2}^*) - \sR_{dr}(f_{\lambda_2}^*, \rho_{\lambda_2}^*) \leq \sqrt{ \frac{ 2 \text{log} \frac{2}{\delta}}{N} }
\end{equation}
As $f_{\lambda_1, S}^*$ varies with samples, the term $\sR_{dr} (f_{\lambda_1, S}^*, \rho_{\lambda_1, S}^*) - \hat{\sR}_{dr} (f_{\lambda_1, S}^*, \rho_{\lambda_1, S}^*)$ can not be bound in the same manner. In order to bound the above term, we shall use the concentration inequalities to the function space. We can directly use results of \cite{Bartlett:2003:RGC:944919.944944} to deal with this term therefore the following inequality holds with probability at least 1 - $\delta / 2$, 
\begin{equation}
\label{eq-sample-bound2}
\begin{aligned}
\sR_{dr} & (f_{\lambda_1, S}^*, \rho_{\lambda_1, S}^*) - \hat{\sR}_{dr} (f_{\lambda_1, S}^*, \rho_{\lambda_1, S}^*) \\ 
& \leq \mathbb{E}_{S} \mathbb{E}_{\sigma} \Big[ \sup_{g_{\rho} \in \mathcal{F}_{l}  } \Big| \frac{2}{N} \sum_{i=1}^{N} \sigma_{i} g_{\rho}(y_i,\xx_{i}) \Big|  \Big] + \sqrt{ \frac{8 \; \text{log} \frac{4}{\delta}}{N} }
\end{aligned}
\end{equation}
where $\mathcal{F}_{l} := \{ (\xx, y) \rightarrow L_{dr}(yf(\xx), \rho) - L_{dr}(0, \rho) : f \in \mathcal{F} \}$ and $\sigma_{1}, \sigma_{2}, ... ,\sigma_{N}$ are Rademacher random variables. As the Double Ramp Loss is Lipschitz with constant 1, we further bound the first term in the right-hand side by the result of \cite{Bartlett:2003:RGC:944919.944944}, 
\begin{equation*}
\begin{aligned}
&\mathbb{E}_{S}  \mathbb{E}_{\sigma} \Big[ \sup_{g_{\rho} \in \mathcal{F}_{l}  }  \Big| \frac{2}{N}  \sum_{i=1}^{N} \sigma_{i} g_{\rho}(y_i,\xx_{i}) \Big|  \Big]  \\
& \leq 2 \mathbb{E}_{S} \mathbb{E}_{\sigma} \left[ \sup_{f_{\lambda_1, S}^*\in \mathcal{F}, \rho_{\lambda_1, S}^*   } \left| \frac{2}{N} \sum_{i=1}^{N} \sigma_{i} \big( | f_{\lambda_1, S}^*(\xx_{i}) | - \rho_{\lambda_1, S}^* \big) \right|  \right] \\
& \leq 2 \mathbb{E}_{S} \mathbb{E}_{\sigma} \left[ \sup_{f_{\lambda_1, S}^*, \rho_{\lambda_1, S}^*   } \left| \frac{2}{N} \sum_{i=1}^{N} \sigma_{i} \big( | h^*_{\lambda_1, S}(\xx_{i}) | + | b_{\lambda_1, S}^* | - \rho_{\lambda_1, S}^* \big) \right|  \right] \\
& \leq 2 \mathbb{E}_{S} \mathbb{E}_{\sigma} \left[ \sup_{f_{\lambda_1, S}^*, \rho_{\lambda_1, S}^*   } \left| \frac{2}{N} \sum_{i=1}^{N} \sigma_{i}  | h^*_{\lambda_1, S}(\xx_{i}) |  \right|  \right] \\
& \;\;\;\;\; + 2 \mathbb{E}_{S} \mathbb{E}_{\sigma} \left[ \sup_{f_{\lambda_1, S}^*, \rho_{\lambda_1, S}^*   } \left| \frac{2}{N} \sum_{i=1}^{N}  \sigma_{i} \big( | b_{\lambda_1, S}^* |  - \rho_{\lambda_1, S}^* \big) \right|  \right] \\
& \leq 2 \mathbb{E}_{S} \mathbb{E}_{\sigma} \left[ \sup_{f_{\lambda_1, S}^*, \rho_{\lambda_1, S}^*   } \left| \frac{2}{N} \sum_{i=1}^{N} \sigma_{i}  \| h_{\lambda_1, S}^* \|_{\infty}  \right|  \right] \\ 
& \;\;\;\;\; + 2 \mathbb{E}_{S} \mathbb{E}_{\sigma} \left[ \sup_{f_{\lambda_1, S}^*, \rho_{\lambda_1, S}^*   } \left| \frac{2}{N} \sum_{i=1}^{N}  \sigma_{i} \big( | b_{\lambda_1, S}^* |  - \rho_{\lambda_1, S}^* \big) \right|  \right] \\
& \leq  \frac{8d \tau^2}{ \lambda_1 \sqrt{N} } + \frac{4}{\sqrt{N}} \left( 1 + \frac{2d \tau^2}{\lambda_1} \right) \\[6pt]
& \leq \frac{16d \tau^2}{ \lambda_1 \sqrt{N} } + \frac{4}{\sqrt{N}}
\end{aligned}
\end{equation*}
Here, the last inequality follows from Lemma~6a, Lemma~6b and inequality $\mathbb{E}|g| \leq \left( \mathbb{E} g^2 \right)^{1/2}$ for any function $g$. Now, combining the above bound and eq.(\ref{eq-sample-bound1}), (\ref{eq-sample-bound2}), we have with probability at least $1 - \delta$, 
\begin{equation*}
\begin{aligned}
\mathcal{S} (N, \lambda_1, \lambda_2) & \leq \left( 1+ \psi  \right) \sqrt{ \frac{2 \; \text{log} \frac{2}{\delta}}{N} } + \sqrt{ \frac{8 \; \text{log} \frac{4}{\delta}}{N} } + \frac{16d \tau^2}{ \lambda_1 \sqrt{N} } + \frac{4}{\sqrt{N}} \\
& \leq \left( 2+ \psi  \right) \sqrt{ \frac{8 \; \text{log} \frac{4}{\delta}}{N} } + \frac{16d \tau^2}{ \lambda_1 \sqrt{N} } + \frac{4}{\sqrt{N}} \\
\end{aligned}
\end{equation*}
Let $\lambda_1 = N^{- \frac{\beta + 1}{4\beta + 1}}$ and $\lambda_2 = N^{- \frac{1}{ 4\beta + 2 }}$. Then $\psi = \frac{\lambda_1}{\lambda_2} = N^{-\frac{\beta}{4 \beta + 2}} \leq 1$. Using relations $N^{-\frac{1}{2}} \leq N^{-\frac{\beta}{4 \beta + 2}}$  and $\lambda_1 \sqrt{N} = N^{\frac{\beta}{4 \beta + 2}}$, $\mathcal{S}(N, \lambda_1, \lambda_2)$ can be bound as following with probability at least $1 - \delta$,
\begin{equation}
\label{eq-final-S-bound}
\begin{aligned}
\mathcal{S}(N, \lambda_1, \lambda_2) &\leq \left( 12\sqrt{  \text{log} \frac{4}{\delta}} + 16d \tau^2 + 4 \right) N^{-\frac{\beta}{4\beta + 2}} \\
& \leq \left( 16 + 16d \tau^2 \right) \left( \; \text{log} \frac{4}{\delta} \right)^{1/2} N^{-\frac{\beta}{4\beta + 2}}
\end{aligned}
\end{equation}
Now, putting the value of $\lambda_2$ in $\mathcal{A}(\lambda_2)$,
\begin{equation}
\label{eq-final-Approx-bound}
\mathcal{A}(\lambda_2) \leq c_{\beta} N^{-\frac{\beta}{4 \beta + 2}}
\end{equation}
Now using Theorem 3, Theorem 5 and eq. (\ref{eq-final-S-bound}), (\ref{eq-final-Approx-bound}), $\sR_{d}(f_{\lambda_1, S}^*, \rho_{\lambda_1, S}^*) - \sR_{d}(f_{d}^*, \rho_d^*)$ can be bound as following with probability at least $1 - \delta$,
\begin{equation}
\sR_{d}(f_{\lambda_1, S}^*, \rho_{\lambda_1, S}^*) - \sR_{d}(f_{d}^*, \rho_{d}^*) \leq \tilde{c} \left(  \text{log} \frac{4}{\delta} \right)^{1/2} N^{-\frac{\beta}{4\beta + 2}}
\end{equation}
where $\tilde{c} = 2c_{\beta} + 16d\tau^2 + 17$. This completes the proof. 
\end{proof}

\end{document}